\documentclass{article}

\usepackage[utf8]{inputenc}  
\usepackage[T1]{fontenc}
\usepackage{microtype}
\usepackage{graphicx}
\usepackage{booktabs}
\usepackage{amsmath,amssymb,amsthm}
\usepackage{mathtools}
\usepackage{lmodern}
\usepackage{xspace}
\usepackage{float}
\usepackage{caption}

\usepackage[dvipsnames,table]{xcolor} 
\usepackage{multirow}
\usepackage{makecell}
\usepackage{xltabular}
\usepackage{rotating}
\usepackage{pifont}
\usepackage{threeparttablex}

\definecolor{navy}{RGB}{0,0,110}
\definecolor{boxfill}{RGB}{242,245,255}
\definecolor{deepskyblue}{rgb}{0.01171875, 0.3046875, 0.6328125}
\definecolor{softskyblue}{rgb}{0.53, 0.81, 0.92}
\definecolor{babyblue}{rgb}{0.68, 0.85, 0.90}
\definecolor{paleblue}{rgb}{0.85, 0.93, 1.0}

\newcommand{\cmark}{\textcolor{Green}{\ding{51}}}  
\newcommand{\xmark}{\textcolor{Red}{\ding{55}}}    

\usepackage{hyperref}
\usepackage[capitalise]{cleveref}

\usepackage[accepted]{icml2026}   

\usepackage{algorithm}
\usepackage{algorithmic}
\usepackage{listings}
\usepackage{listingsutf8}

\usepackage{subcaption} 
\usepackage{svg}
\usepackage{tikz}
\usepackage{overpic}
\usepackage{pgf} 
\usepackage{wrapfig}
\usepackage{varwidth}
\usepackage[most]{tcolorbox}
\tcbuselibrary{listings,breakable,skins}
\usepackage{subfiles}
\usepackage{verbatim}
\usepackage{xcolor}

\usepackage{siunitx} 
\usepackage{enumitem}
\usepackage{soul}
\sethlcolor{yellow}
\usepackage[textsize=tiny]{todonotes}

\soulregister\autoref7
\soulregister\autoref7

\newcommand{\name}{\textsc{NanoQuant}\xspace}

\newcommand{\eg}{\textit{e.g.,}\xspace}

\newif\ifshowfixme \showfixmetrue   
\ifshowfixme

  \newenvironment{fixmeblock}{\begingroup\color{red}}{\endgroup}  
\else

\fi

\newif\ifshowrevised \showrevisedtrue   
\ifshowrevised
  \newenvironment{revisedblock}{\begingroup\color{blue}}{\endgroup}  
\else  
    
\fi

\newtheorem{theorem}{Theorem}
\newtheorem{proposition}{Proposition}
\newtheorem{lemma}{Lemma}

\theoremstyle{plain}
\newtheorem{corollary}[theorem]{Corollary}
\theoremstyle{definition}

\theoremstyle{remark}

\icmltitlerunning{\textsc{NanoQuant}: Efficient Sub-1-Bit Quantization of Large Language Models}

\begin{document}

\twocolumn[
  \icmltitle{\name: Efficient Sub-1-Bit Quantization of Large Language Models}

  \icmlsetsymbol{equal}{*}
  \icmlsetsymbol{corr}{\dag}
  
  \begin{icmlauthorlist}
    \icmlauthor{Hyochan Chong}{equal,sr}
    \icmlauthor{Dongkyu Kim}{equal,corr,sr}
    \icmlauthor{Changdong Kim}{sr}
    \icmlauthor{Minseop Choi}{sr}
  \end{icmlauthorlist}

    \icmlaffiliation{sr}{Samsung Research, Seoul, Korea}
    
    \icmlcorrespondingauthor{Dongkyu Kim}{dongkyu.k@samsung.com}
    
    \icmlkeywords{Large Language Models, Quantization, Binary Networks, ADMM, Model Compression, On-Device Inference}
    
    \vskip 0.3in
]

\printAffiliationsAndNotice{\icmlEqualContribution}

\begin{abstract}
Weight-only quantization has become a standard approach for efficiently serving large language models (LLMs).
However, existing methods fail to efficiently compress models to binary (1-bit) levels, as they either require large amounts of data and compute or incur additional storage.
In this work, we propose \name, a post-training quantization (PTQ) method to compress LLMs to both binary and \textit{sub-1-bit} levels.
\name formulates quantization as a low-rank binary factorization problem, and compresses full-precision weights to low-rank binary matrices and scales.
Specifically, it utilizes an efficient alternating direction method of multipliers (ADMM) solver to precisely initialize latent binary matrices and scales, and then tunes the initialized parameters through a block and model reconstruction process.
Consequently, \name establishes a new Pareto frontier in low-memory post-training quantization, and enables sub-1-bit compression.
\name makes large-scale deployment feasible on consumer hardware.
For example, it compresses Llama-2-70B by 24$\times$ in just 13 hours on a \textit{single} H100, enabling a 70B model to operate on a consumer 8 GB GPU.
Code is available at {\hypersetup{urlcolor=black}\href{https://github.com/SamsungLabs/NanoQuant}{\texttt{github.com/SamsungLabs/NanoQuant}}}.
\end{abstract}

\section{Introduction}
Large language models (LLMs) have demonstrated remarkable performance across a wide variety of tasks.
However, their extremely large size makes deployment costly.
Weight-only quantization offers a standard route to alleviate these bottlenecks~\cite{gptq, awq, omniquant, spinquant}. This has led to its widespread adoption within production-grade inference engines, such as vLLM \cite{paged_attention} and SGLang \cite{sglang}.

\begin{table}[t]
    \scriptsize
    \centering
    \caption{
    Comparison of LLM quantization frameworks.
    Methods are categorized by quantization scheme (PTQ vs. QAT), scalability to 70B+ models, and sub-1-bit capability.
    Only \name enables sub-1-bit compression among baselines.
    }
    \vspace{-2pt}
    \tabcolsep=0.1cm
    \setlength{\tabcolsep}{3pt}
    \renewcommand{\arraystretch}{1.00}
    \begin{tabular}{@{}lccccc@{}}
        \toprule

        \multirow{2}{*}[-3pt]{\makecell[c]{\textbf{Quantization Method}}} & 
        \multirow{2}{*}[-3pt]{\makecell[c]{\textbf{Scheme}}}
        & \multicolumn{3}{c}{\textbf{Compression}} \\

        \cmidrule(lr){3-5} 

        &  & \makecell[c]{\textbf{70B+ LLMs}} & \makecell[c]{\textbf{1-Bit}} & \makecell[c]{\textbf{Sub-1-Bit}}
         \\
        \midrule
        \makecell[l]{BiLLM \cite{bi_llm}} & PTQ & \cmark & \xmark & \xmark \\
        \makecell[l]{STBLLM \cite{stb_llm}} & PTQ & \cmark & \xmark & \xmark \\
        \makecell[l]{ARB-LLM \cite{arb_llm}} & PTQ & \cmark & \xmark & \xmark \\
        \makecell[l]{HB-LLM \cite{hb_llm}} & PTQ & \cmark & \xmark & \xmark \\
        \makecell[l]{OneBit \cite{onebit}} & QAT & \xmark & \cmark & \xmark \\
        \makecell[l]{BinaryMoS \cite{binarymos}} & QAT & \xmark & \cmark & \xmark \\
        \makecell[l]{DBF \cite{dbf}} & QAT & \xmark & \cmark & \xmark \\
        \makecell[l]{ParetoQ \cite{paretoq}} & QAT & \xmark & \cmark & \xmark \\
        \makecell[l]{LittleBit \cite{littlebit}} & QAT & \xmark & \cmark & \cmark \\
        \midrule
        \rowcolor{babyblue!40}
        \textbf{\name (Ours)} & PTQ & \cmark & \cmark & \cmark \\
        \bottomrule
    \end{tabular}
    \vspace{-10px}
\end{table}

Recent post-training quantization (PTQ) efforts have successfully pushed weight compression toward 2-bit~\cite{quip, qtip} and even 1-bit~\cite{bi_llm, arb_llm, hb_llm}.
However, breaking the \textit{sub-1-bit} barrier remains a challenge for current PTQ frameworks for two distinct reasons.
First, current binary PTQ methods utilize in-place binarization with full-precision scales (\eg $\mathbf{W} \approx \alpha \mathbf{B}_{\pm 1}$), an approach that is structurally bounded by a minimum of 1 bit per parameter.
Moreover, these techniques require complex weight-grouping metadata~\cite{bi_llm, stb_llm, ptq1p61, hb_llm}, causing effective bitrates to exceed 2 and even 3 bits \cite{stb_llm}.
Thus, a key challenge for sub-1-bit PTQ is to efficiently represent model parameters to overcome both the structural and storage limitations of current methods.

\begin{figure*}[!t]
    \centering
    \includegraphics[width=0.95\linewidth]{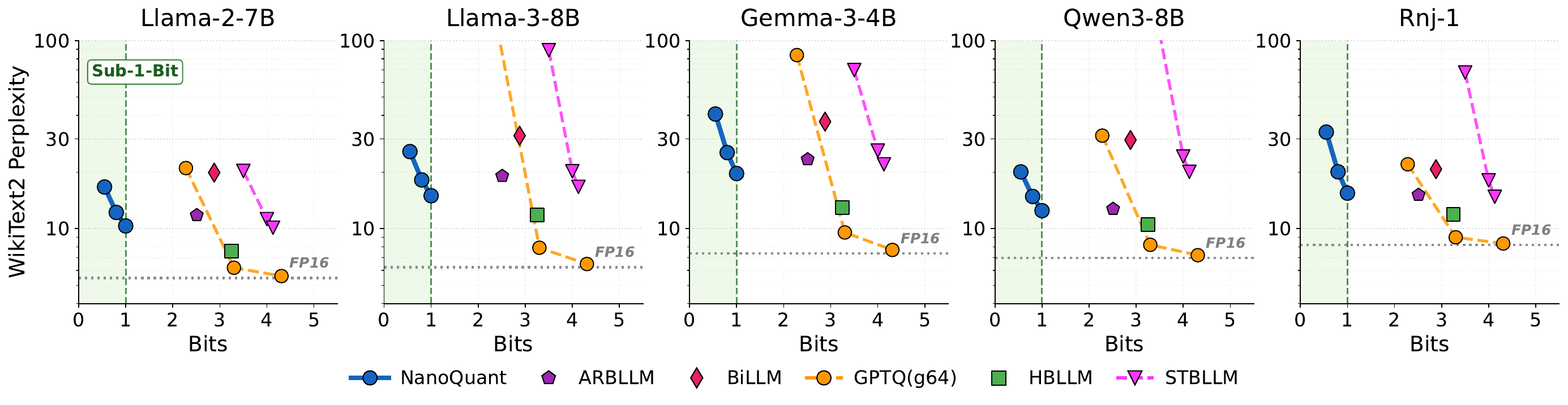}
    \caption{
    Perplexity comparison on WikiText-2.
    \name reaches 1-bit and sub-1-bit effective storage while remaining competitive with existing binary PTQ baselines that require substantially higher effective BPW.}
    \label{fig1_ppl}
\end{figure*}

In contrast, binary quantization-aware training (QAT) methods successfully compress LLMs to binary (1-bit) and even sub-1-bit levels with low-rank representations~\cite{littlebit,dbf}, overcoming both the structural and additional storage limitations inherent in binary PTQ methods.
Through an end-to-end training process, such binary QAT methods replace linear layer weights with compact, low-rank binary matrices and scales.
However, unlike PTQ methods, these QAT methods require hundreds of millions or billions of tokens, and utilize multiple GPUs over multiple days.
These demands are impractical for resource-constrained environments and limit such QAT methods from compressing larger, 70B parameter models.
Therefore, deriving a \textit{data-efficient} and \textit{compute-efficient} sub-1-bit PTQ method remains an open challenge.

To bridge the gap between binary PTQ and QAT methods, we propose \name, an efficient and accurate PTQ method that can compress LLMs to \textit{sub-1-bit} levels.
By directly addressing multiple shortcomings of conventional post-training quantization methods, \name precisely initializes latent binary matrices and scales via robust Hessian-aware alternating direction method of multipliers (ADMM).
Then, \name utilizes a hierarchical reconstruction pipeline that optimizes parameters at the block level, and subsequently calibrates scaling factors at the model level for enhanced global activation alignment.
With only 128 calibration samples (0.26M tokens) and 1 GPU, \name achieves sub-binary compression and shows competitive performance in low-memory regimes.
\name enables compressing a 70B LLM from 137.95 GB to 5.75 GB using 1 GPU, and running the quantized 70B LLM on a consumer 8 GB GPU at up to 20.11 tokens per second, making LLM compression and inference accessible in resource-constrained environments.

\paragraph{Contributions.} Our main contributions are as follows:
\begin{itemize}[itemsep=2pt, topsep=0pt, parsep=5pt, partopsep=0pt, leftmargin=*]
    \item We propose \textsc{NanoQuant}, a post-training quantization (PTQ) method to compress LLMs to both 1-bit and sub-1-bit levels. This approach addresses the structural limitations of existing binary quantization frameworks.
    \item We provide a stability analysis of the initialization procedure and empirically show that precise low-rank binary initialization is critical for establishing a new sub-1-bit quantization frontier.
    \item We conduct extensive experiments across diverse model families and language tasks, demonstrating that \textsc{NanoQuant} achieves competitive performance with higher-bit PTQ and binary quantization-aware training (QAT) methods, despite using limited calibration data.
    \item We implement custom binary GEMV and GEMM CUDA kernels for \textsc{NanoQuant}, enabling significantly higher inference throughput, reduced memory footprints, and enhanced energy efficiency for datacenter GPUs, consumer GPUs, and edge devices.
\end{itemize}

\section{Related Work}
\label{sec:related_work}
\paragraph{Binary Post-Training Quantization.}
State-of-the-art binary post-training quantization (PTQ) methods often adopt in-place binarization and full-precision scales to preserve the sign and magnitude of weights, respectively \cite{bi_llm, arb_llm, hb_llm}.
Other methods introduce sparsity to further reduce the memory footprint of binary weights \cite{stb_llm}.
However, although such methods show respectable performance, these binary PTQ algorithms incur additional storage requirements (e.g. scaling factors and grouping bit-masks), causing them to fall short of their intended binary compression rates, requiring at least 2 or 3 bits per weight \cite{bi_llm, arb_llm, ptq1p61, hb_llm}.
QMoE \cite{qmoe} and BTC-LLM \cite{btc_llm} are notable methods that achieve sub-1-bit levels, but they respectively target mixture-of-experts models or utilize codebooks with additional storage overhead.

\paragraph{Binary Quantization-Aware Training.}
In contrast, binary quantization-aware training (QAT) methods successfully reach binary compression rates through end-to-end training on larger datasets.
Many previous binary QAT methods utilize in-place binarization to compress LLM weights to binary levels \cite{bitnet, onebit, binarymos, paretoq}.
More recent methods compress weights to low-rank binary matrices to reach binary and sub-1-bit compression levels~\cite{littlebit,dbf}.
However, although such low-rank binary methods display promising performance, they require copious amounts of data and compute, requiring multiple GPUs for multiple days to train on hundreds of millions or billions of tokens.
Such resource demands have also limited these methods to binarize only relatively smaller models, such as Llama-2-7B.

\paragraph{Alternating Direction Method of Multipliers.}
Alternating Direction Method of Multipliers (ADMM) is a classical framework for constrained optimization that alternates between augmented-Lagrangian subproblems and dual-variable updates~\cite{admm_glowinski1975approximation,admm_gabay1976dual,boyd2011distributed}. 
Although ADMM is best understood in convex optimization~\cite{boyd2004convex,bertsekas2014constrained}, it has also been studied in nonconvex and nonsmooth settings~\cite{hong2016convergence,wang2019global,yang2022proximal,JMLR:v25:21-0831}. 
This makes ADMM a natural fit for model compression problems that combine continuous reconstruction objectives with hard structural constraints. 
Prior work has applied ADMM to low-bit quantization~\cite{leng2018extremely,rnn_admm,xu2021mixed,huang2021alternating,dbf}, quantization-aware factorization~\cite{cherniuk2024quantization}, sparsification~\cite{nxmtransformer,alps,elsa}, and sparse-plus-low-rank LLM compression~\cite{admm3}. 
In \name, we use ADMM in this spirit to decouple weight reconstruction from the discrete low-rank binary structure, enabling efficient optimization of sub-1-bit LLM weight decompositions.

\section{\name}
\label{sec:method}
This section introduces \name, a post-training quantization (PTQ) method capable of compressing LLM weights to sub-1-bit levels.
\name derives high-fidelity low-rank binary representations by integrating a precise initialization subroutine directly into a block-wise reconstruction loop, followed by lightweight global calibration.

\subsection{Quantization Scheme}
\label{sec:quant_scheme}

We formulate sub-1-bit weight compression as a low-rank binary factorization problem, similar to \cite{littlebit, dbf}.
Let $\mathbb{B} = \{-1, +1\}$ denote the set of binary values.
For each linear layer weight $\mathbf{W} \in \mathbb{R}^{d_{\text{out}} \times d_{\text{in}}}$ in the transformer, we approximate the dense matrix using two low-rank binary matrices $\mathbf{U}_{\pm 1} \in \mathbb{B}^{d_{\text{out}} \times r}$ and $\mathbf{V}_{\pm 1} \in \mathbb{B}^{d_{\text{in}} \times r}$, alongside two full-precision scaling vectors: an output channel scale $\mathbf{s}_1 \in \mathbb{R}^{d_{\text{out}}}$ and an input channel scale $\mathbf{s}_2 \in \mathbb{R}^{d_{\text{in}}}$.

The decomposition structure is defined as
\begin{equation}
    \label{eq:structure}
    \mathbf{W} \approx \widehat{\mathbf{W}} = \mathbf{s}_1 \odot (\mathbf{U}_{\pm 1} \mathbf{V}_{\pm 1}^\top) \odot \mathbf{s}_2^\top,
\end{equation}
where $\odot$ denotes element-wise multiplication with broadcasting.
\autoref{fig:quant_scheme} visualizes this scheme, illustrating how the dense weight matrix decomposes into continuous latent factors and scales before binarization and packing.
Direct optimization of binary parameters constitutes a non-convex, combinatorial problem that is NP-hard \cite{froese2023training}.
To address this within a strict PTQ budget, \name employs a sequential block reconstruction pipeline that incorporates precise initialization and latent optimization.

\begin{figure}[t]
    \centering
    \includegraphics[width=1.0\linewidth]{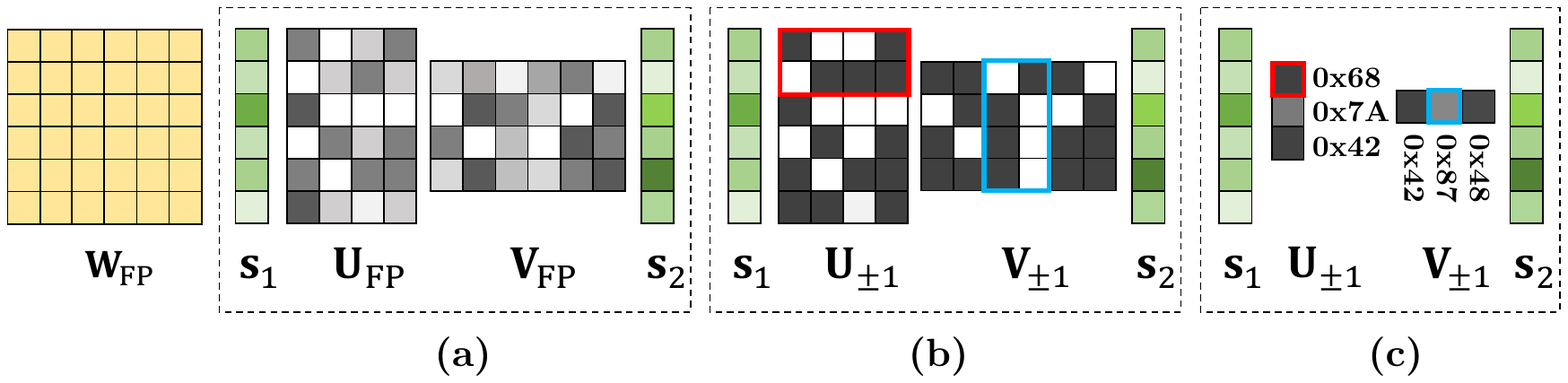}
    \vspace{-15px}
    \caption{Illustration of the \name compression scheme. The process proceeds in three stages: (a) \textbf{Factorization}, where the weight matrix is decomposed into continuous latent factors ($\mathbf{U}_\text{FP}, \mathbf{V}_\text{FP}$) and floating-point scales ($\mathbf{s}_1, \mathbf{s}_2$) which are fine-tuned to minimize reconstruction error; (b) \textbf{Binarization}, where these optimized factors are quantized into binary matrices ($\mathbf{U}_{\pm 1}, \mathbf{V}_{\pm 1}$) containing $\{-1, +1\}$ values; and (c) \textbf{Packing}, where these values are mapped to bits ($-1 \to 0, +1 \to 1$) and efficiently packed into integer formats (\eg 8-bit blocks) for memory efficiency.}
    \label{fig:quant_scheme}
\end{figure}

\subsection{Block Reconstruction Pipeline}
\label{sec:block_recon}

We sequentially compress each linear layer in each transformer block.
Unlike methods that treat initialization as a separate pre-processing phase, \name integrates initialization as a subroutine within the block reconstruction loop.
As depicted in~\autoref{fig:block_recon}, each block undergoes a three-step optimization process: (1) error propagation mitigation, (2) low-rank binary initialization via ADMM and magnitude balancing, and (3) factorized component refinement.

\paragraph{Step 1: Error Propagation Mitigation.}
Quantization error accumulates as the reconstruction proceeds through the network \cite{gptq}.
We tune the full-precision weights of the current block to minimize the error introduced by the quantization of preceding blocks, as well as previously factorized layers in the current block.
This comprehensive strategy is in line with recent quantization methods that adopt this method for either some \cite{dbf} or all ~\cite{quip_sharp, aqlm, qep} linear layers, and \name falls in the latter.
 
\paragraph{Step 2: Low-Rank Binary Initialization.}
Because PTQ relies on a small calibration set, the stability of initialization is critical \cite{hubara2021accurate,adaround}.
We initialize the low-rank binary parameters and scales through an activation-aware process involving preconditioning, factorization via alternating direction method of multipliers (ADMM), and magnitude balancing.

\paragraph{Step 2-1: Hessian-Aware Preconditioning.}
To minimize quantization error, we adopt the formulation from DBF \cite{dbf} and consider the second-order Taylor expansion of the task loss.
The objective minimizes the Hessian-weighted distortion approximated via Kronecker-factored approximate curvature (K-FAC)~\cite{kfac}:
\begin{equation}
\mathcal{L}(\widehat{\mathbf{W}})
\approx
\|\widetilde{\mathbf{D}}_{\text{out}} (\mathbf{W}-\widehat{\mathbf{W}})\widetilde{\mathbf{D}}_{\text{in}}\|_F^2.
\end{equation}
Here, $\widetilde{\mathbf{D}}_{\text{in}}$ and $\widetilde{\mathbf{D}}_{\text{out}}$ are diagonal preconditioners constructed from activation and gradient statistics.
These values are computed during a global calibration phase prior to the block-wise reconstruction loop, as outlined in \cref{alg:full_compact_fmt}.
Given limited calibration data, empirical estimates can be sensitive to outliers.
To mitigate this, we employ shrinkage \cite{ledoit2004well} regularization on the diagonal entries,
\begin{equation}
    [\widetilde{\mathbf{D}}_{(\cdot)}]_{ii} \leftarrow (1-\gamma)[\mathbf{D}_{(\cdot)}]_{ii} + \gamma\,\mathrm{mean}(\mathbf{D}_{(\cdot)}).
\end{equation}
The shrinkage coefficient $\gamma \in [0, 1]$ plays a pivotal role in regulating the trade-off between preserving feature-specific curvature information and maintaining global robustness against calibration noise.
We empirically find that smaller values (\eg 0.2) are optimal for Llama and Qwen models, and larger values (\eg 0.6) are optimal for Gemma 3 models and Rnj-1.

\paragraph{Step 2-2: Latent Binary Factorization (LB-ADMM).}
We formulate initialization as finding factors $\mathbf{U}$ and $\mathbf{V}$ that approximate the preconditioned target $\widetilde{\mathbf{W}}_{\text{target}}$.
To handle the non-convex landscape, we employ ADMM with ridge regularization $\lambda$, introducing auxiliary variables $\mathbf{Z}$ and scaled dual variables $\mathbf{\Lambda}$ to decouple constraints.
The optimization problem is defined as:
\begin{equation}
\label{eq:lb_admm_obj}
\begin{aligned}
\min_{\mathbf{U},\mathbf{V},\mathbf{Z_U},\mathbf{Z_V}}\;&\tfrac12\|\widetilde{\mathbf{W}}_{\text{target}}-\mathbf{U}\mathbf{V}^\top\|_F^2
+\tfrac{\lambda}{2}\big(\|\mathbf{U}\|_F^2+\|\mathbf{V}\|_F^2\big)\\
\text{s.t. }\;& \mathbf{U}=\mathbf{Z}_\mathbf{U},\;\mathbf{V}=\mathbf{Z}_\mathbf{V}.
\end{aligned}
\end{equation}

\begin{figure}
    \centering
    \includegraphics[width=1.0\linewidth]{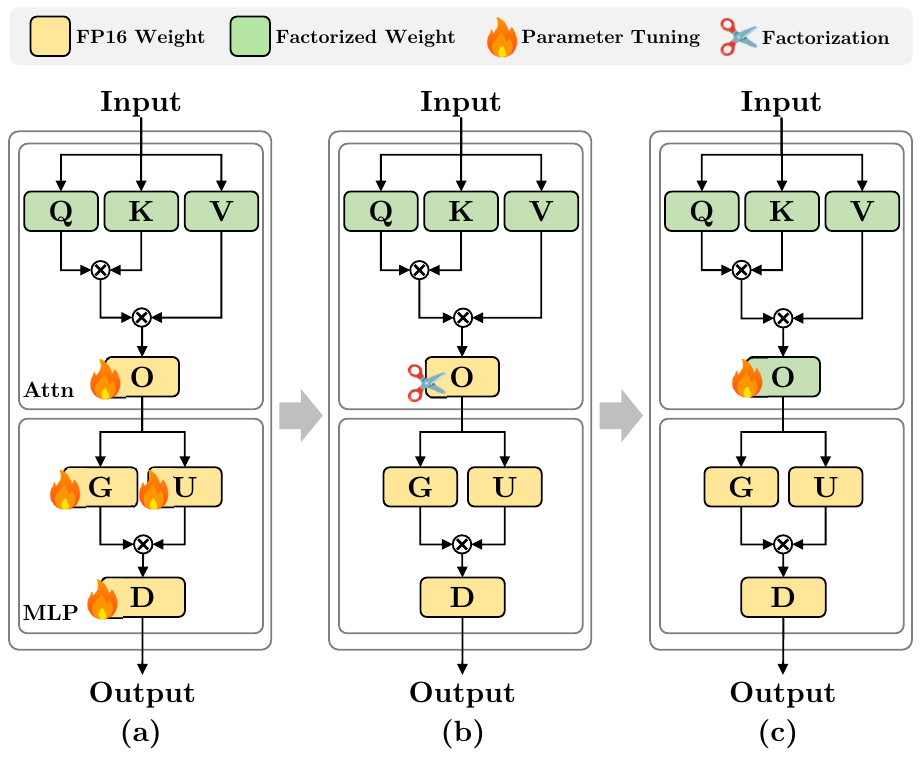}
    \caption{The \name block reconstruction pipeline for compressing linear layers. The process sequentially optimizes each transformer block through three key phases: (1) \textbf{Error Propagation Mitigation} to adjust full-precision weights for accumulated errors; (2) \textbf{Low-Rank Binary Initialization}, which utilizes Latent Binary ADMM (LB-ADMM) to precisely generate latent binary factors and scales; and (3) \textbf{Factorized Component Refinement}, which fine-tunes the continuous latent matrices and scales using Straight-Through Estimators (STE) before final packing.}
    \label{fig:block_recon}
\end{figure}

The solver alternates between updating continuous factors, auxiliary proxies, and dual variables.
First, we update $\mathbf{U}$ (symmetrically $\mathbf{V}$) by solving a linear system regularized by penalty $\rho$ and $\lambda$:
\begin{equation}
    \label{eq:admm_update}
    (\mathbf{V}^\top \mathbf{V} + (\rho + \lambda)\mathbf{I}) \mathbf{U}^\top = \mathbf{V}^\top \widetilde{\mathbf{W}}_{\text{target}}^\top + \rho(\mathbf{Z}_{\mathbf{U}} - \mathbf{\Lambda}_{\mathbf{U}})^\top.
\end{equation}
We employ stabilized Cholesky decomposition for this step, reducing the computational complexity to $\mathcal{O}(r^3/3)$ compared to general LU factorization, which scales as $\mathcal{O}(2r^3/3)$.
This optimization is pivotal, as it enables \name to scale efficiently to massive architectures (\eg Llama-2-70B) within limited computational budgets.

Second, we update the auxiliary variable $\mathbf{Z}$ using the consensus variable $\mathbf{P}_{\mathbf{U}} \triangleq \mathbf{U} + \mathbf{\Lambda}$.
We apply Sign-Value Independent Decomposition (SVID) \cite{ls_binary_quant,onebit} to derive the optimal rank-1 approximation that preserves the sign structure:
\begin{equation}
    \mathbf{Z}_{\mathbf{U}}^{(k+1)} = \mathrm{SVID}(\mathbf{P}_{\mathbf{U}}^{(k+1)}).
\end{equation}
Finally, we update the dual variables to enforce consensus, computed as $\mathbf{\Lambda}_{\mathbf{U}}^{(k+1)} = \mathbf{\Lambda}_{\mathbf{U}}^{(k)} + \mathbf{U}^{(k+1)} - \mathbf{Z}_{\mathbf{U}}^{(k+1)}$.

\paragraph{Step 2-3: Latent Magnitude Balancing.}
Upon convergence of ADMM, the pre-binary variables $\mathbf{P}_{\mathbf{U}}^{(K)}$ and $\mathbf{P}_{\mathbf{V}}^{(K)}$ possess inherent scale ambiguity, resulting in ill-conditioned proxies.
To rectify this, we first recover the unscaled continuous proxies, defined as $\widehat{\mathbf{U}} = \widetilde{\mathbf{D}}_{\text{out}}^{-1}\mathbf{P}_{\mathbf{U}}^{(K)}$ and $\widehat{\mathbf{V}} = \widetilde{\mathbf{D}}_{\text{in}}^{-1}\mathbf{P}_{\mathbf{V}}^{(K)}$, and compute an equilibrium factor $\eta$ to equalize their Frobenius norms:
\begin{equation}
    \eta = \sqrt{ \|\widehat{\mathbf{V}}\|_F \big/ \|\widehat{\mathbf{U}}\|_F }.
\end{equation}
The scaling vectors $\mathbf{s}_1$ and $\mathbf{s}_2$ are computed directly from the balanced projections of these proxies to capture the magnitude information via the mean absolute value:
\begin{equation}
    [\mathbf{s}_1]_i = \text{mean}(|\eta \boldsymbol{\widehat{u}}_i|), \quad [\mathbf{s}_2]_j = \text{mean}(|\eta^{-1} \boldsymbol{\widehat{v}}_j|),
\end{equation}
where $\boldsymbol{\widehat{u}}_i$ and $\boldsymbol{\widehat{v}}_j$ denote the row vectors of $\widehat{\mathbf{U}}$ and $\widehat{\mathbf{V}}$, respectively.
Following scale extraction, we define the final latent variables $\mathcal{U}$ and $\mathcal{V}$ to serve as well-conditioned initializers for the subsequent fine-tuning phase:
\begin{equation}
\begin{aligned}
    \mathcal{U} &\coloneqq \eta \widehat{\mathbf{U}} = \eta \widetilde{\mathbf{D}}_{\text{out}}^{-1} \mathbf{P}_{\mathbf{U}}^{(K)}, \\
    \mathcal{V} &\coloneqq \eta^{-1} \widehat{\mathbf{V}} = \eta^{-1} \widetilde{\mathbf{D}}_{\text{in}}^{-1} \mathbf{P}_{\mathbf{V}}^{(K)}.
\end{aligned}
\end{equation}
This separation allows the explicit scales to handle magnitude at the input and output boundaries, ensuring that the core linear transformation proceeds sequentially without intervening scalar operations, thereby reducing computational overhead on hardware accelerators.

\paragraph{Step 3: Factorized Component Refinement.}
Following initialization, we refine the factorized components of the current target linear layer to align with the full-precision block outputs.
Unlike approaches that defer binary optimization to a global stage \cite{dbf} through PV-tuning \cite{pv_tuning}, we locally optimize these parameters during the block reconstruction phase.
We jointly tune the continuous latent proxies $\mathcal{U}, \mathcal{V}$ and the scaling vectors $\mathbf{s}_1, \mathbf{s}_2$ using the Straight-Through Estimator (STE) \cite{ste}.
Let $\mathcal{B}(\cdot)$ and $\widehat{\mathcal{B}}(\cdot)$ denote the full-precision and quantized mappings of the current transformer block (with all previously processed blocks fixed), respectively.
The optimization objective is formulated as:
\begin{equation}
    \min_{\mathcal{U}, \mathcal{V}, \mathbf{s}_1, \mathbf{s}_2} \| \mathcal{B}(\mathbf{X}_{\text{in}}) - \widehat{\mathcal{B}}(\mathbf{X}_{\text{in}}; \mathrm{sign}(\mathcal{U}), \mathrm{sign}(\mathcal{V}), \mathbf{s}_1, \mathbf{s}_2) \|_F^2.
\end{equation}
This formulation allows gradients to propagate through the quantization function, enabling local identification of optimal sign structures while concurrently adjusting channel-wise magnitudes.
Upon convergence, we fix $\mathbf{U}_{\pm1} = \mathrm{sign}(\mathcal{U})$ and $\mathbf{V}_{\pm1} = \mathrm{sign}(\mathcal{V})$ as the final binary values, and pack the binary weights into integer values.

\subsection{Model Reconstruction}
With the block-wise optimization concluded, the binary parameters are frozen and packed into efficient integer formats.
Consequently, the final model reconstruction phase focuses exclusively on optimizing the floating-point scaling vectors $\mathbf{S}_{\text{global}} = \{\mathbf{s}_{1},\mathbf{s}_{2}\}_{\forall l}$ to align the predictive distributions of the quantized model with those of the full-precision model \cite{alphatuning}.
The objective function minimizes the Kullback-Leibler (KL) divergence:
\begin{equation}
    \label{eq:stage3_obj}
    \min_{\mathbf{S}_{\text{global}}} D_{\mathrm{KL}} \left( \text{softmax}(z_{\mathcal{M}}/T) \parallel \text{softmax}(z_{\mathcal{\hat{M}}}/T) \right),
\end{equation}
where 
$z_{\mathcal{M}} = \text{Logits}(M(X))$ and
$z_{\mathcal{\hat{M}}} = \text{Logits}(\hat{M}(X;S_{global}))$.
Unlike prior methods that require extensive memory resources for global fine-tuning \cite{efficientqat}, this approach maintains fixed bit-packed binary weights throughout the process.
This constraint substantially reduces the memory footprint, and it makes calibration of massive models, such as Llama-2-70B, feasible on a single GPU.

\begin{algorithm}[t!]
\caption{The \name algorithm.}
\label{alg:full_compact_fmt}
\footnotesize
\begin{algorithmic}[1]
\item[\textbf{Input:}] FP teacher $\mathcal{M}$, calibration set $\mathcal{X}_{\rm cal}$, rank $r$, robust-diagonal parameters $(\tau,\gamma)$, ADMM parameters $(K,\rho,\lambda,\epsilon)$, optimization steps $(T_{\rm pre},T_{\rm post},T_{\rm glob})$
\item[\textbf{Output:}] Quantized model $\widehat{\mathcal{M}}$ with packed binaries $\{\mathbf{U}_{\pm1}^{(\ell)},\mathbf{V}_{\pm1}^{(\ell)}\}$ and float scales $\{\mathbf{s}_1^{(\ell)},\mathbf{s}_2^{(\ell)}\}$
\hrule
\vspace{0.1cm}
\STATE \textcolor{gray!70!black}{\textit{\# Phase 1: Global Calibration}}
\FOR{each linear layer $\ell$}
    \STATE Run $\mathcal{X}_{\rm cal}$ through $\mathcal{M}$ and collect statistics $(\mathbf{z}_{\rm in}, \mathbf{z}_{\rm out})^{(\ell)}$
    \STATE $(\widetilde{\mathbf{D}}_{\rm in}, \widetilde{\mathbf{D}}_{\rm out})^{(\ell)} \leftarrow \textsc{RobustDiag}(\mathbf{z}_{\rm in}^{(\ell)}, \mathbf{z}_{\rm out}^{(\ell)};\tau,\gamma)$
\ENDFOR
\vspace{0.1cm}
\STATE \textcolor{gray!70!black}{\textit{\# Phase 2: Block Reconstruction Pipeline}}
\STATE $\widehat{\mathcal{M}} \leftarrow \mathcal{M}$ \hfill \textcolor{gray!70!black}{\scriptsize $\triangleright$ Init with teacher weights}
\FOR{block $b=1,\dots,B$}
    \STATE $\mathbf{X}_b \leftarrow \widehat{\mathcal{M}}_{<b}(\mathcal{X}_{\rm cal})$ \hfill \textcolor{gray!70!black}{\scriptsize $\triangleright$ input activation after already-compressed prefix}
    \STATE $\mathbf{Y}_b \leftarrow \mathcal{B}^{\text{FP}}_b(\mathbf{X}_b)$ \hfill \textcolor{gray!70!black}{\scriptsize $\triangleright$ teacher output for current block $b$}
    \STATE \textcolor{gray!70!black}{\textit{\footnotesize $\triangleright$ Step 1: Error Propagation Mitigation}}
    \STATE $\textsc{TuneFP}(b,\mathbf{X}_b,\mathbf{Y}_b;T_{\rm pre})$ 
    \STATE \textcolor{gray!70!black}{\textit{\footnotesize $\triangleright$ Step 2: Low-Rank Binary Initialization}}
    \FOR{linear layer $\ell \in b$ with weight $\mathbf{W}^{(\ell)}$}
        \STATE $\widetilde{\mathbf{W}}^{(\ell)} \leftarrow \widetilde{\mathbf{D}}_{\rm out}^{(\ell)}\,\mathbf{W}^{(\ell)}\,\widetilde{\mathbf{D}}_{\rm in}^{(\ell)}$
        \STATE $(\mathbf{U}, \mathbf{V}, \mathbf{s}_1, \mathbf{s}_2)^{(\ell)} \leftarrow \textsc{LB-ADMM}(\widetilde{\mathbf{W}}^{(\ell)}; r, K, \rho, \lambda, \epsilon)$
    \ENDFOR
    \STATE \textcolor{gray!70!black}{\textit{\footnotesize $\triangleright$ Step 3: Factorized Component Refinement}}
    \STATE $\textsc{TuneLatentSTE}(b,\mathbf{X}_b,\mathbf{Y}_b;T_{\rm post})$
    \FOR{linear layer $\ell \in b$}
        \STATE $\mathbf{U}_{\pm1}^{(\ell)} \leftarrow \mathrm{sign}(\mathbf{U}^{(\ell)})$; $\mathbf{V}_{\pm1}^{(\ell)} \leftarrow \mathrm{sign}(\mathbf{V}^{(\ell)})$
        \STATE $\textsc{PackBinary}(\mathbf{U}_{\pm1}^{(\ell)},\mathbf{V}_{\pm1}^{(\ell)})$
    \ENDFOR
\ENDFOR
\vspace{0.1cm}
\STATE \textcolor{gray!70!black}{\textit{\# Phase 3: Scale-Only Model Reconstruction}}
\STATE \textcolor{gray!70!black}{\textit{\scriptsize $\triangleright$ packed binaries are frozen during scale tuning}}
\STATE $\textsc{TuneScalesKD}(\widehat{\mathcal{M}},\mathcal{M},\mathcal{X}_{\rm cal};T_{\rm glob})$
\STATE \textbf{return} $\widehat{\mathcal{M}}$
\end{algorithmic}
\end{algorithm}

\section{Experiments}
\subsection{Experimental Setup}

\paragraph{Implementation and Environment.}
The implementation of \name relies on PyTorch \cite{pytorch} and the Transformers library \cite{transformers}.
Primary quantization and evaluation experiments used a single NVIDIA H100 (80 GB) GPU to ensure consistency across model scales up to the 70B parameter regime.
To assess deployment viability in resource-constrained environments, inference latency and memory footprints were analyzed on consumer-grade hardware, specifically an NVIDIA RTX 3050 (8 GB) GPU, and an edge device, an NVIDIA Jetson TX2.

\begin{table*}[!t]
    \small
    \centering
    \caption{
    WikiText-2 perplexity ($\downarrow$) results of 1-bit and sub-1-bit post-training quantization methods.
    The evaluation encompasses pre-trained models from the Llama-2 (L2), Llama-3 (L3), Gemma 3 (G3), Qwen3 (Q3), and Rnj-1 (R1) families.
    In these abbreviations, the numerical suffix denotes the parameter count in billions (\eg L3-8 represents Llama-3-8B).
    \name demonstrates performance competitive with higher-bit baselines across these architectures.
    }
    \label{tab:ppl_main}
    \setlength{\tabcolsep}{3.5pt}
    \resizebox{\textwidth}{!}{
    \begin{tabular}{lll | ccc | cccc | cccc | ccccc | c}
        \toprule
        \textbf{W Bits} & \textbf{Total Bits} & \textbf{Method} & L2-7 & L2-13 & L2-70 & L3-1 & L3-3 & L3-8 & L3-70 & G3-1 & G3-4 & G3-12 & G3-27 & Q3-0.6 & Q3-1.7 & Q3-4 & Q3-8 & Q3-14 & R1-8 \\
        \midrule
        16.00 & - & - & 5.47 & 4.88 & 3.32 & 9.74 & 7.81 & 6.24 & 2.86 & 10.60 & 7.39 & 5.86 & 4.88 & 12.66 & 9.39 & 7.89 & 7.00 & 6.37 & 8.19 \\
        \midrule
        \midrule

        & 1.00 & RTN     & 1.63e5 & 4.82e4 & 1.57e5 & 5.39e8 & 1.82e13 & 4.41e5 & 3.98e5 & 3.64e22 & 2.96e17 & 1.90e24 & 6.29e21 & 2.58e7 & 5.14e8 & 1.45e6 & 5.12e6 & 7.05e9 & 7.26e5 \\
        
        & 1.00 & XNOR    & 6.59e4 & 9.80e3 & 1.37e4 & 1.15e5 & 1.78e6 & 8.50e5 & 8.61e5 & 1.91e8 & 4.50e6 & 5.00e6 & 3.32e6 & 3.27e7 & 1.60e6 & 1.56e10 & 1.37e8 & 1.63e8 & 6.25e4 \\
        
        & 2.88 & BiLLM   & 19.87 & 13.29 & 8.75 & 323.16 & 55.43 & 31.20 & 93.36 & 144.72 & 37.08 & 262.83 & 31.21 & 3.17e3 & 858.09 & 78.36 & 29.62 & 13.50 & 20.71 \\
        
        & 4.13 & STBLLM  & 10.12 & 8.08 & 5.26 & 187.40 & 25.46 & 16.68 & 155.43 & 80.54 & 21.97 & 63.45 & 16.46 & 329.76 & 1.32e3 & 35.03 & 20.04 & 10.72 & 14.78 \\
        
        & 2.51 & ARB-LLM$_{\text{RC}}$ & 11.80 & 8.43 & 5.20 & 66.36 & 23.87 & 19.06 & \textbf{7.89} & 67.43 & 23.37 & 32.47 & 16.80 & 129.52 & 49.51 & 18.04 & 12.74 & 10.25 & 15.13 \\
        
        & 3.25 & HBLLM$_{\text{R}}$ & \textbf{7.60} & \textbf{6.27} & \textbf{4.56} & 36.00 & \textbf{15.99} & \textbf{11.82} & 8.88 & \textbf{28.58} & \textbf{12.92} & \textbf{19.22} & \textbf{9.08} & 78.58 & 35.14 & 14.73 & \textbf{10.51} & \textbf{8.37} & \textbf{11.90} \\
        
        \rowcolor{babyblue!40}
        \multicolumn{1}{l}{\cellcolor{white}\multirow{-6}{*}{1.00}} & \textbf{1.00} & \name & 
        10.34 & 8.71 & 6.52 & \textbf{25.59} & 17.90 & 14.97 & 11.32 & 35.30 & 19.64 & 24.70 & 27.21 & \textbf{27.56} & \textbf{19.21} & \textbf{14.29} & 12.47 & 10.92 & 15.45 \\
        \midrule

        & 4.00 & STBLLM (6:8) 
        & \textbf{11.24} & \textbf{8.97} & \textbf{5.94} & 314.19 & 39.00 & 20.19 & 78.21 & 123.60 & 26.01 & 95.83 & \textbf{26.14} & 3.63e3 & 1.96e3 & 44.99 & 24.19 & \textbf{12.50} & \textbf{18.07} \\
        
        \rowcolor{babyblue!40}
        \multicolumn{1}{l}{\cellcolor{white}\multirow{-2}{*}{0.80}} & \textbf{0.80} & \name & 
        12.20 & 10.14 & 7.61 & \textbf{33.08} & \textbf{22.09} & \textbf{18.16} & \textbf{13.75} & \textbf{50.15} & \textbf{25.38} & \textbf{32.84} & 28.83 & \textbf{33.79} & \textbf{25.31} & \textbf{19.33} & \textbf{14.83} & 12.88 & 20.06 \\
        \midrule

        & 3.50 & STBLLM (4:8)
        & 20.27 & 15.22 & \textbf{9.27} & 6.69e3 & 381.77 & 88.84 & 1.83e3 & 592.31 & 69.63 & 489.64 & 83.29 & 5.74e5 & 5.18e4 & 1.30e3 & 109.40 & 28.50 & 67.60 \\
        
        \rowcolor{babyblue!40}
        \multicolumn{1}{l}{\cellcolor{white}\multirow{-2}{*}{0.55}} & \textbf{0.55} & \name & 
        \textbf{16.66} & \textbf{13.46} & 9.82 & \textbf{49.01} & \textbf{32.33} & \textbf{25.69} & \textbf{19.69} & \textbf{78.22} & \textbf{40.69} & \textbf{45.29} & \textbf{32.98} & \textbf{52.94} & \textbf{33.74} & \textbf{32.86} & \textbf{20.04} & \textbf{17.06} & \textbf{32.62} \\
        \bottomrule
    \end{tabular}}
\end{table*}

\paragraph{Models and Datasets.}
Evaluations included a diverse set of LLM families, including Llama-2 \cite{llama2}, Llama-3 \cite{llama3}, Gemma 3 \cite{gemma3}, Qwen3 \cite{qwen3}, and Rnj-1 \cite{rnj1}, with sizes ranging from 0.6B to 70B parameters.
This range addresses the sensitivity of smaller models to quantization noise \cite{li2020train, gong2024makes} and challenges the compression latency limits of larger architectures.
Calibration used 128 samples from the WikiText-2 dataset \cite{wikitext} with a sequence length of 2048.
Evaluation metrics included perplexity for next-token prediction and zero-shot accuracy across six commonsense reasoning tasks: WinoGrande \cite{winogrande}, HellaSwag \cite{hellaswag}, BoolQ \cite{boolq}, ARC-Easy, ARC-Challenge \cite{arc}, and PIQA \cite{piqa}.
To evaluate the robustness of \name across different data distributions, we also conduct experiments using the C4 dataset \cite{c4}, as detailed in \autoref{appendix:additional_exp}.

\begin{table}[t]
    \centering
    \caption{
    Zero-shot accuracy comparison on commonsense reasoning tasks using Llama-3 (L3) and Qwen3 (Q3) models.
    \name maintains competitive accuracy against higher-bit binary PTQ baselines, despite utilizing a 1-bit representation.
    }
    \label{tab:main_zeroshot}
    \setlength{\tabcolsep}{2.0mm}
    \resizebox{1\linewidth}{!}
    {
    \begin{tabular}{lrl cccccccc}
        \toprule
        \textbf{Model} & 
        Bits & 
        Method &
        ARC-e & ARC-c & BoolQ & Hella. & Wino. & PIQA & Avg.
        \\
        \midrule

        & 16.00 & BF16
        & 81.57 & 51.45 & 81.96 & 60.01 & 73.56 & 80.09 & 71.44 \\

        \cmidrule{2-10}      
        & 4.13 & STBLLM & 36.87 & 19.97 & 48.01 & 36.47 & 57.62 & 61.48 & 39.83 \\       
        & 3.25 & HBLLM$_{\text{col}}$ & 60.02 & 29.10 & 63.03 & 43.46 & 63.77 & 70.35 & 50.45 \\
        & 2.88 & BiLLM & 36.32 & 18.34 & 56.36 & 30.16 & 51.93 & 57.56 & 38.16 \\
        & 2.51 & ARB-LLM$_{\text{RC}}$ & 49.71 & 22.95 & 64.28 & 34.73 & 56.04 & 63.28 & 44.23 \\
        & 2.28 & GPTQ(w2g64) & 28.24 & 20.14 & 41.87 & 27.00 & 50.59 & 54.08 & 36.99 \\

        \cmidrule{2-10}
        \rowcolor{babyblue!40}
        \cellcolor{white}\multirow{-8}{*}{L3-8} & \textbf{1.00} & \textbf{\name}
        & 43.69 & 20.31 & 61.47 & 33.81 & 55.96 & 60.45 & 45.95 \\
        \midrule

        & 16.00 & BF16
        & 81.52 & 52.47 & 83.03 & 58.81 & 72.38 & 79.11 & 71.22 \\

        \cmidrule{2-10}      
        & 4.13 & STBLLM & 52.78 & 27.13 & 62.84 & 38.38 & 57.54 & 64.15 & 46.15 \\        
        & 3.25 & HBLLM$_{\text{col}}$ & 68.43 & 36.77 & 68.53 & 45.68 & 63.85 & 71.65 & 54.30 \\      
        & 2.88 & BiLLM & 28.96 & 22.35 & 62.23 & 32.42 & 51.30 & 55.01 & 38.18 \\
        & 2.51 & ARB-LLM$_{\text{RC}}$ & 68.18 & 34.64 & 66.30 & 40.83 & 59.12 & 68.17 & 51.86 \\
        & 2.28 & GPTQ(w2g64) & 28.62 & 20.99 & 43.64 & 29.52 & 50.04 & 54.68 & 37.92 \\

        \cmidrule{2-10}
        
        \rowcolor{babyblue!40}
        \cellcolor{white}\multirow{-8}{*}{Q3-8} & \textbf{1.00} & \textbf{\name} & 49.45 & 24.32 & 62.17 & 36.34 & 58.01 & 63.32 & 48.94 \\

        \bottomrule 
    \end{tabular}}
\end{table}

\paragraph{Baselines.}
We benchmark \name against state-of-the-art binary post-training quantization (PTQ) methods, specifically BiLLM \cite{bi_llm}, ARB-LLM \cite{arb_llm}, STBLLM \cite{stb_llm}, and HBLLM \cite{hb_llm}.
Comparisons also include binary quantization-aware training (QAT) methods such as OneBit \cite{onebit}, BinaryMoS \cite{binarymos}, LittleBit \cite{littlebit}, and DBF \cite{dbf}.
We utilize official open-source implementations for PTQ baselines and select the highest-performing variants, such as ARB-LLM$_\text{RC}$ and HBLLM$_\text{col}$.
Regarding QAT methods, we report the performance metrics for OneBit and BinaryMoS directly from their original literature.
Conversely, we reproduce specific components of DBF and LittleBit to validate initialization strategies.

\begin{table}[!t]
    \centering
    \setlength{\tabcolsep}{4pt}
    \caption{
        Comparing the compression and resource efficiency of various quantization methods, when compressing Llama-2-7B on NVIDIA H100 GPUs.
        \name requires orders of magnitude less data and over an order of magnitude less GPU time to achieve binary quantization.
    }
    \label{tab:compare_model_size}
    \begin{threeparttable}
    \resizebox{1.0\linewidth}{!}{
    \begin{tabular}{@{}lcccccc@{}}
        \toprule
        \textbf{Method} & \textbf{PTQ/QAT} & \textbf{Bits} & \textbf{Model Size} & \textbf{Data} & \textbf{GPU Hours} & \textbf{PPL ($\downarrow$)} \\

        \midrule
        Full-Precision & & & 13.48 GB & & & 5.47 \\
        \hline
        GPTQ (W2g64) & PTQ & 2.28 & 2.37 GB & 0.26M & 0.1 & 21.00 \\
        \hline
        STBLLM    & PTQ & 4.13 & 4.07 GB & 0.26M & 0.9 & 10.12 \\
        HBLLM$_{\text{R}}$    & PTQ & 3.25 & 3.16 GB & 0.26M & 2.2 & 7.60 \\
        BiLLM     & PTQ & 2.88 & 2.85 GB & 0.26M & 1.1 & 19.87 \\
        ARB-LLM$_{\text{RC}}$   & PTQ & 2.51 & 2.55 GB & 0.26M & 1.3 & 11.80 \\
        \hline
        OneBit    & QAT & 1.04 & 1.37 GB & 155.46M & 700.7 & 9.73 \\
        BinaryMoS & QAT & 1.08 & 1.40 GB & 196.00M & 92.8 & 7.88 \\
        DBF       & QAT & 1.00 & 1.33 GB & 1.38B & 37.6 & 9.25 \\
        LittleBit & QAT & 1.04 & 1.33 GB & 196.00M & 123.6 & 9.08 \\
        \hline
        \rowcolor{babyblue!40} \name & PTQ & 1.00 & 1.33 GB & 0.26M & 1.7 & 10.34 \\
        \rowcolor{babyblue!40} \name & PTQ & 1.00 & 1.33 GB & 2.10M & 2.5 & 8.85 \\
        \bottomrule
    \end{tabular}
    }
    \end{threeparttable}
\end{table}

\subsection{Accuracy Analysis}

\paragraph{Next Token Prediction.}
\autoref{tab:ppl_main} presents the perplexity comparison between \name and existing binary PTQ baselines.
The results indicate that \name maintains functional perplexity across diverse model families while using fewer bits than competing methods.
Prior binary PTQ approaches often struggle to break the 1-bit barrier due to structural overhead, but \name achieves sub-1-bit compression without a catastrophic degradation in the predictive distribution.
This finding suggests that the proposed low-rank factorization effectively captures the salient weight information required for language modeling, even at extreme compression rates.

\vspace{-10pt}

\paragraph{Zero-Shot Reasoning.}
The evaluation of commonsense reasoning tasks in \autoref{tab:main_zeroshot} reveals that \name yields performance competitive with higher-bit binary PTQ baselines.
Furthermore, the method approaches the zero-shot accuracy of binary QAT methods.
This result is notable because QAT approaches typically require extensive training on billions of tokens.
In contrast, \name achieves comparable fidelity using orders of magnitude less data and compute.
This efficiency suggests that precise initialization and block-wise reconstruction may substitute for the expensive end-to-end retraining traditionally required for binary quantization.

\subsection{Compression vs. Model Size}

We analyze the storage efficiency of various quantization paradigms using Llama-2-7B as a case study, as shown in \autoref{tab:compare_model_size}.
The analysis reveals that standard binary PTQ methods often incur substantial overhead due to auxiliary parameters.
Consequently, their effective storage requirements exceed those of 2-bit quantization methods such as GPTQ \cite{gptq}.
For instance, BiLLM and STBLLM require 2.88 and 4.13 bits per weight, respectively, whereas GPTQ (W2g64) uses only 2.28 bits.
\name overcomes this limitation and achieves genuine 1-bit and sub-1-bit post-training quantization.
By minimizing metadata overhead, it offers a storage solution that is strictly more efficient than existing PTQ baselines while maintaining perplexity levels competitive with resource-intensive QAT methods.
A comprehensive breakdown of the storage requirements and the mathematical derivation of effective bits per weight (BPW) for both \name and existing baselines can be found in \autoref{appendix:model_size}.

\begin{figure}[t]
    \centering
    \includegraphics[width=1.0\linewidth]{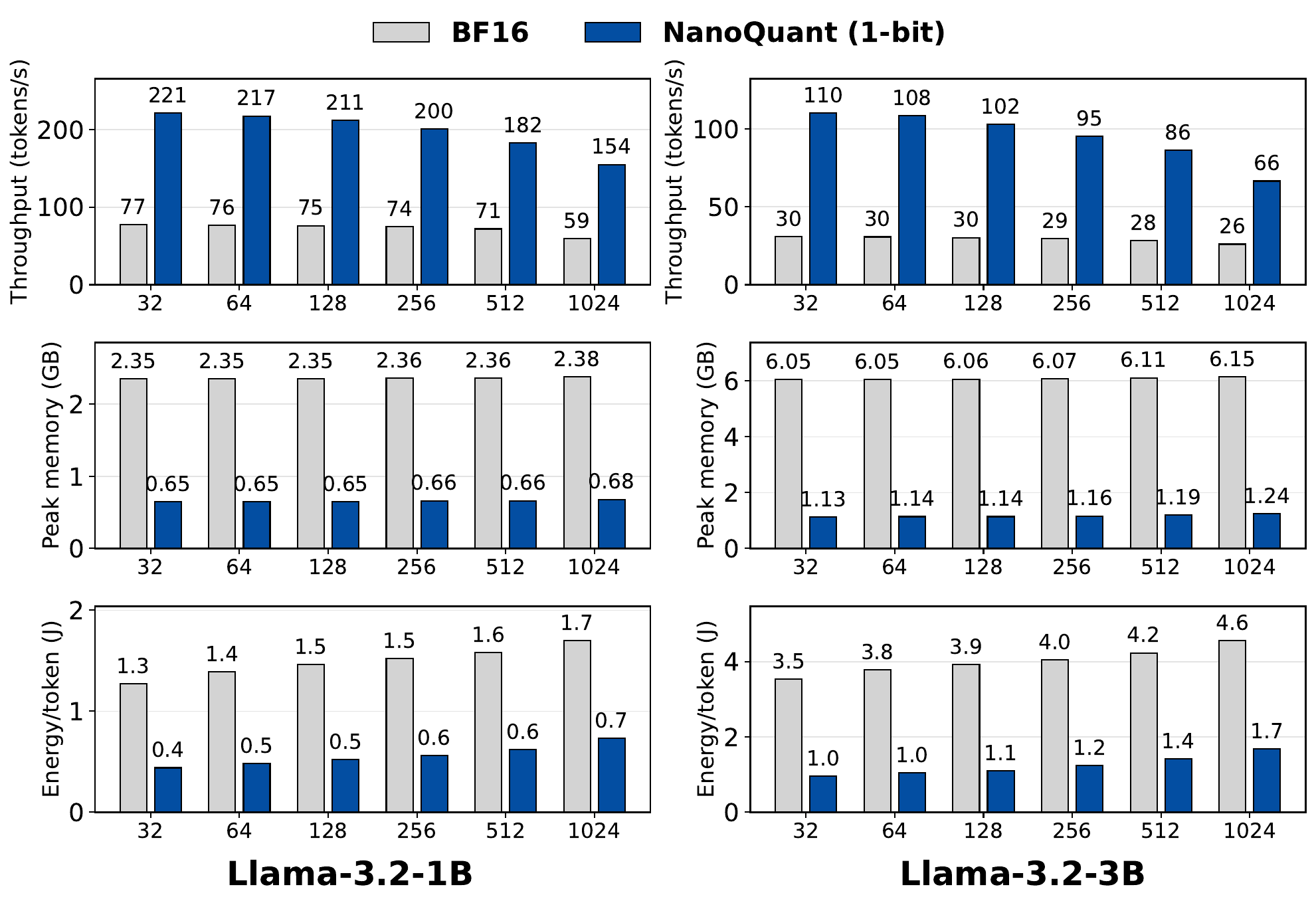}
    \caption{
    On an NVIDIA RTX 3050 (8 GB) GPU, \name delivers up to $3.6\times$ higher decoding throughput, $5.4\times$ lower peak memory usage, and $3.9\times$ greater energy efficiency compared to \texttt{BF16} baselines for Llama-3.2-1B and 3B models.
    }
    \label{fig:rtx_3050_inference}
\end{figure}

\subsection{Inference Efficiency}
\label{subsec:inference}

The extreme compression ratio achieved by \name translates directly into reduced memory footprints and enhanced throughput, particularly in memory-bound regimes.
To quantify this, we compared the decoding performance of \name against a PyTorch \texttt{BF16} baseline.
We focused on this comparison because optimized inference kernels for the binary PTQ baselines in \autoref{tab:ppl_main} are currently unavailable.

\paragraph{Consumer Hardware.}
On an NVIDIA RTX 3050 (8 GB) GPU, \name enables up to a $3.6\times$ speedup in inference throughput for Llama-3.2-3B.
Additionally, the method achieves a $5.4\times$ reduction in peak memory usage and a $3.9\times$ improvement in energy efficiency per token, as shown in \autoref{fig:rtx_3050_inference}.
Beyond speed, the method fundamentally expands accessibility.
\name compresses the Llama-2-70B model from 137.95 GB to 5.75 GB, representing a $24\times$ compression factor.
This reduction allows a 70B parameter model to fit entirely within the VRAM of a consumer-grade 8 GB GPU and effectively lowers the barrier to entry for large-scale model deployment.
To evaluate deployment viability in even more constrained environments, we extended our analysis to embedded systems.
Detailed performance metrics on the NVIDIA Jetson TX2 are provided in \autoref{appendix:kernel_impl}.

\begin{figure}[t]
    \centering
     \includegraphics[width=1.0\linewidth]{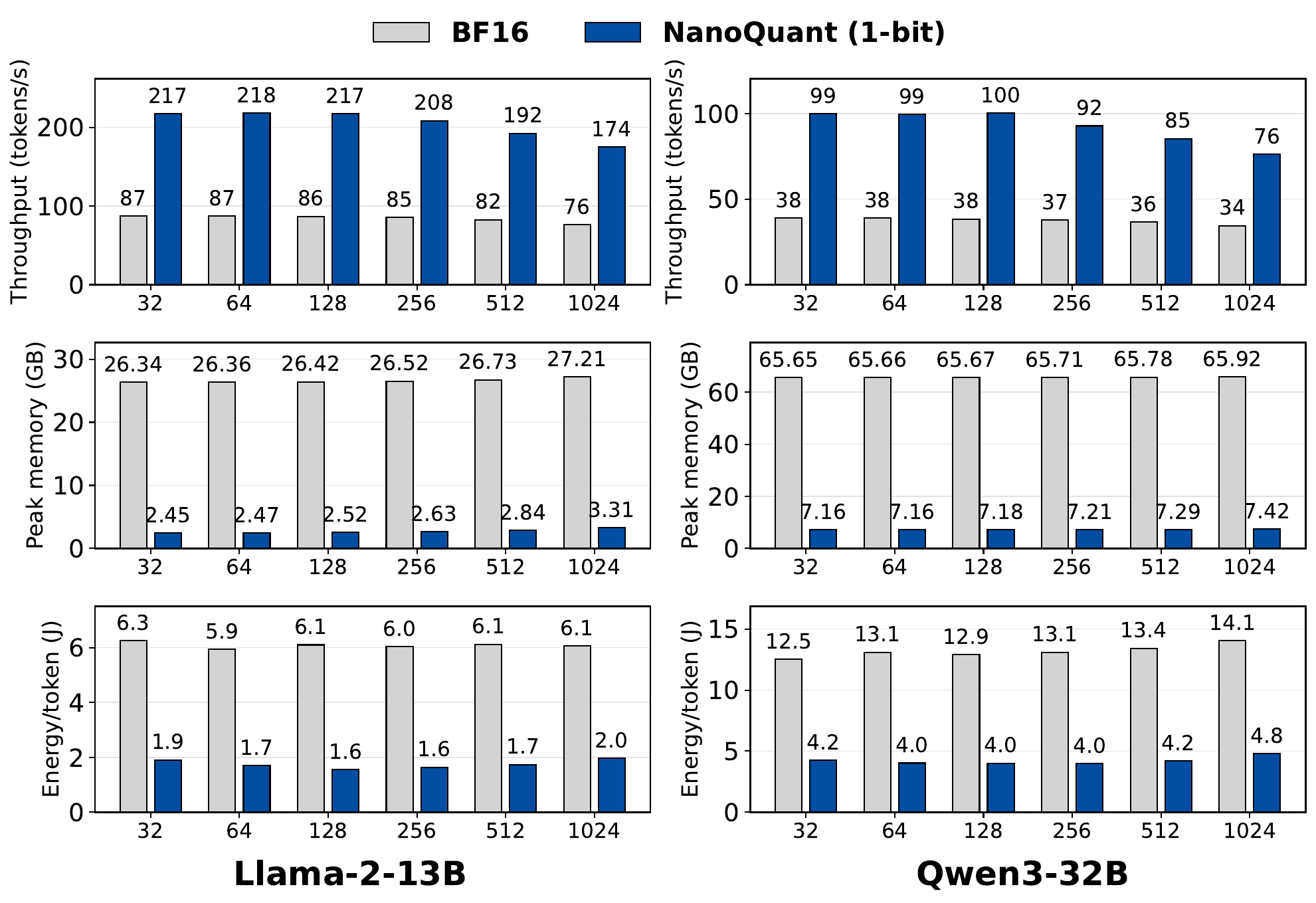}
    \caption{Datacenter inference efficiency on a single NVIDIA H100 (80 GB) GPU. \name enables faster decoding throughput while maintaining superior memory and energy efficiency for Llama-2-13B and Qwen3-32B, compared to the PyTorch \texttt{BF16} baseline.}
    \label{fig:h100_inference}
\end{figure}

\begin{figure*}[t!]
    \centering
    \includegraphics[width=0.9\linewidth]{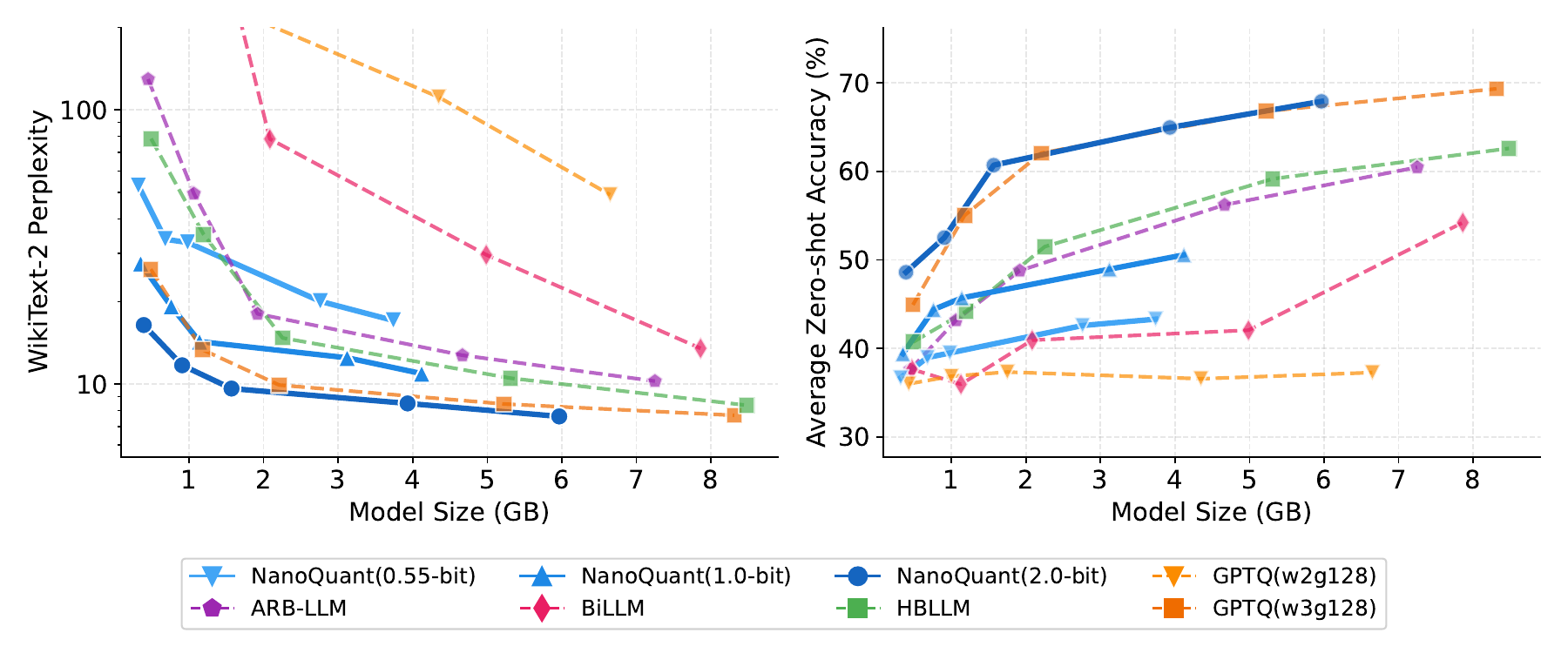}
    \caption{
    Pareto optimality analysis for models in the Qwen3 family (0.6B, 1.7B, 4B, 8B, 14B).
    \name establishes a new efficiency frontier in the low-bit regime, offering superior accuracy-per-bit trade-offs compared to existing state-of-the-art binary PTQ methods.
    }
    \label{fig:qwen3_pareto}
\end{figure*}

\paragraph{Datacenter Hardware.}
On high-end hardware (NVIDIA H100 (80 GB) GPU), \name alleviates memory bandwidth bottlenecks and demonstrates up to $10\times$ lower memory usage during inference.
As illustrated in \autoref{fig:h100_inference}, this results in faster single-batch inference and superior energy efficiency compared to the \texttt{BF16} baseline.
Additional kernel implementation details are provided in \autoref{appendix:kernel_impl}.

\subsection{Ablation Studies}

\begin{table}[t!]
    \centering
    \caption{Initialization via latent-binary ADMM (LB-ADMM) from \name outperforms other low-rank binary initialization strategies, when compressing Rnj-1 \cite{rnj1} to 0.8 bits.}
    \label{tab:compare_init}
    \resizebox{0.9\columnwidth}{!}{%
        \tabcolsep=0.2cm
        \renewcommand{\arraystretch}{1.1}
        \begin{tabular}{lcc}
            \toprule
            \textbf{Initialization Method} & \textbf{PPL} ($\downarrow$) & \textbf{Zero-shot} ($\uparrow$) \\
            \midrule
            Dual-SVID \cite{littlebit} & 167.73 & 35.11 \\
    
            DBF ADMM \cite{dbf} & 30.27 & 37.20 \\
    
            LB-ADMM (Ours) & \textbf{20.06} & \textbf{39.29} \\
            \bottomrule
        \end{tabular}
    }
\end{table}

\begin{table}[t!]
    \centering
    \caption{
    Component-wise efficacy analysis of the \name pipeline on Qwen3-8B-Base.
    The table demonstrates the contribution of each module---Initialization, Error Mitigation, Factorized Component Refinement, and Model Reconstruction---towards enhancing performance when combined.
    }
    \label{tab:compare_method_parts}
    \vspace{-5px}
    \resizebox{0.95\columnwidth}{!}{%
        \tabcolsep=0.1cm
        \setlength{\tabcolsep}{4pt}
        \renewcommand{\arraystretch}{1.00}
        \begin{tabular}{@{}ccc @{\hskip 0.3cm} c cc@{}}
            \toprule

            \multicolumn{3}{c}{\textbf{Block Reconstruction}} & 
            \multirow{2}{*}[-0.7em]{\makecell{\textbf{Model}\\\textbf{Recon.}}} & 
            \multirow{2}{*}[-0.7em]{\makecell{\textbf{PPL} ($\downarrow$)}} & 
            \multirow{2}{*}[-0.7em]{\makecell{\textbf{Zero-Shot} ($\uparrow$)}} \\
            
            \cmidrule(lr){1-3}
            
            \makecell{\textbf{Initialization}} & 
            \makecell{\textbf{Error}\\\textbf{Mitigation}} & 
            \makecell{\textbf{Fact.}\\\textbf{Refinement}} & 
             & & \\
            
            \midrule
            \cmark & \xmark & \xmark & \xmark & 206.03 & 36.89\\

            \cmark & \cmark & \xmark & \xmark & 15.07 & 46.40 \\

            \cmark & \xmark & \cmark & \xmark & 15.00 & 47.88 \\

            \cmark & \cmark & \cmark & \xmark & 13.58 & 46.75 \\

            \cmark & \cmark & \cmark & \cmark & \textbf{12.47} & \textbf{48.94} \\
            
            \bottomrule
        \end{tabular}%
    }
    \vspace{-10px}
\end{table}

\paragraph{Initialization Strategy.}
We investigate the hypothesis that precise initialization of low-rank binary matrices is critical for convergence in PTQ.
We integrated initialization strategies from prominent QAT methods, specifically LittleBit \cite{littlebit} and DBF \cite{dbf}, into our reconstruction pipeline.
\autoref{tab:compare_init} demonstrates that Latent-Binary ADMM (LB-ADMM) outperforms these alternatives in both perplexity and zero-shot tasks.
This result indicates that solving the combinatorial problem of binary factorization prior to fine-tuning provides a more stable optimization landscape than the initialization schemes used in existing QAT frameworks.

\paragraph{Component Efficacy.}
\autoref{tab:compare_method_parts} dissects the contribution of each algorithmic component.
This improvement stems from the combination of robust initialization and block-wise reconstruction.
Each module contributes distinctly to preserving the model's representational capacity under extreme compression.

\paragraph{Pareto Optimality.}
Finally, we analyze the trade-off between model size and performance across the Qwen3 family, as shown in \autoref{fig:qwen3_pareto}.
\name establishes a new Pareto frontier in the low-memory regime and consistently outperforms previous binary PTQ baselines.
These results suggest that low-rank binary representations are a viable alternative to low-bit integer quantization for memory-critical applications.

\begin{table}[t!] 
    \centering
    \caption{
    \name achieves comparable performance with QAT methods DBF and LittleBit, while using orders of magnitude less data and compute time, when compressing Qwen3-4B-Base and Llama-2-7B to 1 bit.}
    \label{tab:ptq_vs_qat}
    
    \small
    \resizebox{1.0\columnwidth}{!}{%
    \begin{tabular}{llcccc}
        \toprule
        \textbf{Model} & \textbf{Method} & \textbf{Data} & \textbf{GPU Hours} & \textbf{PPL} ($\downarrow$) & \textbf{Zero-shot} ($\uparrow$) \\
        \midrule

                & LittleBit & 169.50M & 92.5 & 14.79 & 47.32 \\
        Q3-4B   & DBF       & 1.19B & 25.3 & 14.62 & \textbf{52.30} \\
                & \name     & 1.05M & \textbf{2.3} & \textbf{12.62} & 50.63 \\
        
        \midrule

                & LittleBit & 196.00M & 123.6 & 9.08 & \textbf{54.92} \\
        L2-7B   & DBF       & 1.38B & 37.6 & 9.25 & 54.24 \\
                & \name     & 1.05M & \textbf{2.1} & \textbf{9.01} & 51.01 \\

        \bottomrule
    \end{tabular}
    }
\end{table}

\begin{figure*}[t!]
    \centering
     \includegraphics[width=1.0\linewidth]{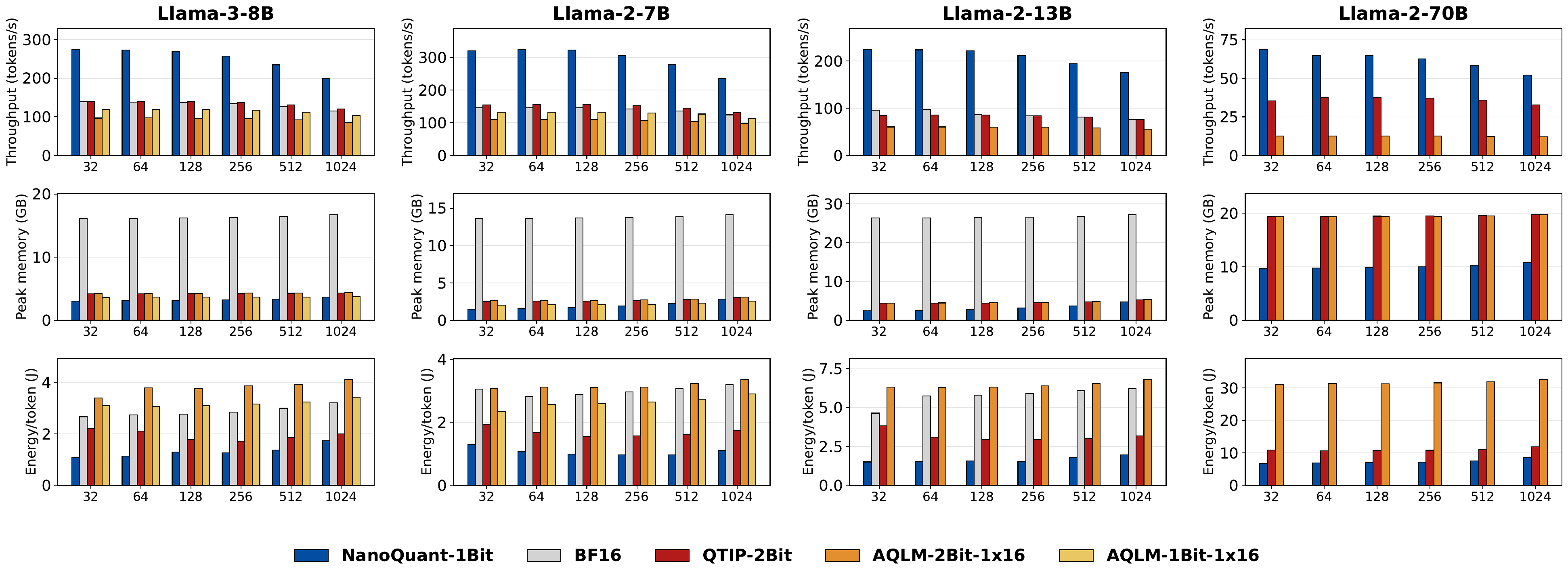}
    \caption{
    LLM decoding performance of \name, compared with PyTorch \texttt{BF16} and vector quantization methods (AQLM, QTIP) for 128 input tokens and various output sequence lengths, on an NVIDIA H100 (80 GB) GPU.
    \name shows superior inference speed, memory efficiency, and energy efficiency, compared to vector quantization methods and \texttt{BF16}.
    }
    \label{fig:h100_gemv_nano_vs_vq}
    \vspace{-5px}
\end{figure*}

\paragraph{Comparison with Low-Rank Binary QAT.}
\autoref{tab:ptq_vs_qat} contrasts \name with state-of-the-art QAT methods.
DBF \cite{dbf} and LittleBit \cite{littlebit} rely on training over 1 billion and 100 million tokens respectively.
We find that with 512 calibration samples, \name achieves comparable predictive performance with binary QAT methods.
This data efficiency validates the effectiveness of the proposed PTQ formulation for scenarios where full-scale retraining is impractical.

\paragraph{\name vs Vector Quantization.}
We compare the performance of \name with state-of-the-art 2-bit vector quantization methods \cite{aqlm}, PV-tuning \cite{pv_tuning}, and QTIP \cite{qtip}.
These methods utilize significantly more data and compute than other low-bit PTQ methods, as in \autoref{tab:nq_vs_vector_quant}.
Notably, compressing Llama-2-7B with \name takes less than 3 hours on an NVIDIA H100 (80 GB) GPU, while AQLM is reported to take at least 1 day on multiple A100 GPUs, and PV-Tuning even longer.
Nevertheless, \name shows competitive performance with data- and compute-intensive vector quantization baselines, making \name a practical compression method for resource-constrained environments.

\begin{table}[t]
    \centering
    \caption{
    Comparison of \name with state-of-the-art vector quantization methods
    AQLM~\cite{aqlm}, PV-Tuning~\cite{pv_tuning}, and QTIP~\cite{qtip}
    for compressing Llama-2-7B.
    Zero-shot denotes the average over ARC-C, ARC-E, HellaSwag, PIQA, and WinoGrande.
    \name achieves competitive performance under extreme compression while using substantially less calibration data.
    }
    \label{tab:nq_vs_vector_quant}

    \small
    \resizebox{\linewidth}{!}{
    \begin{tabular}{ccccccc}
        \toprule
        \textbf{Target} &
        \textbf{Method} &
        \textbf{Bits} &
        \textbf{Model Size} &
        \textbf{Data} &
        \textbf{PPL} ($\downarrow$) &
        \textbf{Zero-shot} ($\uparrow$) \\
        \midrule

        \multirow{5}{*}{2-bit}
            & QTIP
            & 2.02 & 2.15 GB & 12.58M
            & 6.29 & \textbf{62.05} \\
        \cmidrule{2-7}
            & AQLM
            & 2.02 & 2.15 GB & 8.00M
            & 6.64 & 56.47 \\
        \cmidrule{2-7}
            & AQLM + PV
            & 2.02 & 2.15 GB & 8.00M
            & \textbf{5.84} & 61.35 \\
        \cmidrule{2-7}
            & \name
            & \textbf{2.00} & \textbf{2.14 GB} & \textbf{1.05M}
            & 6.52 & 59.09 \\

        \midrule

        \multirow{2.5}{*}{1.5-bit}
            & AQLM + PV
            & 1.58 & 1.81 GB & 8.00M
            & 7.32 & 55.22 \\
        \cmidrule{2-7}
            & \name
            & \textbf{1.50} & \textbf{1.81 GB} & \textbf{1.05M}
            & \textbf{7.08} & \textbf{55.81} \\

        \midrule

        \multirow{2.5}{*}{1-bit}
            & AQLM + PV
            & 1.02 & 1.34 GB & 8.00M
            & \textbf{8.28} & \textbf{50.21} \\
        \cmidrule{2-7}
            & \name
            & \textbf{1.00} & \textbf{1.33 GB} & \textbf{1.05M}
            & 8.99 & 47.81 \\

        \bottomrule
    \end{tabular}
    }
    \vspace{-10px}
\end{table}

\subsection{Limitations and Future Work}
Although our experiments demonstrate data efficiency using a small calibration set, scaling the data and compute budget could enhance performance on more complex reasoning tasks.
Regarding inference, the custom CUDA kernels demonstrate promising results, as detailed in \autoref{subsec:inference}.
However, further optimization for next-generation architectures such as NVIDIA Blackwell GPUs or edge-specific hardware could yield additional speedups.
Additionally, while \name outperforms some 2-bit baselines, further enhancing capabilities to outperform higher-bit 2 or 3-bit PTQ performance remains an open challenge for the sub-binary regime.
Future work will focus on optimizing the compression runtime, exploring the scalability of the method to larger calibration datasets, and investigating adaptive rank allocation across layers to further optimize the accuracy-per-bit Pareto frontier.

\section{Conclusion}

We propose \name, an efficient and accurate post-training quantization method to enable 1-bit and sub-1-bit weight quantization of LLMs of up to 70B parameters, with only a single GPU.
\name enables rapid binarization and extremely compact weight storage, significantly reducing the memory footprint of LLM inference.
Custom binary CUDA kernels further improve energy efficiency and decoding speed.   
Our approach achieves up to $24\times$ model compression, making it feasible to run a 70B-parameter LLM on an 8 GB GPU. 
\name lowers the hardware barrier for some large-model inference by enabling fast, efficient compression and inference for researchers and developers in resource-constrained settings, and advances the frontier of extreme LLM quantization.

\clearpage

\section*{Acknowledgments}

We thank Drs. Heonjae Ha and Sangjeong Lee for their guidance, support and insightful feedback throughout this project.

\section*{Impact Statement}

This work presents \name, a sub-1-bit post-training quantization algorithm for large language models (LLMs).
By enabling the deployment of massive models (\eg Llama-2-70B) on consumer hardware (\eg a single 8 GB GPU) and edge devices, our method significantly lowers the barrier to entry for advanced AI research and application.
This contributes to the democratization of AI, allowing individuals and institutions with limited computational resources to utilize state-of-the-art LLMs.
Furthermore, \name is resource- and energy-efficient, as it drastically reduces the memory bandwidth and energy consumption required for inference, as demonstrated by our energy-efficiency experiments.

\bibliography{icml2026}
\bibliographystyle{icml2026}


\newpage
\appendix
\onecolumn

\crefalias{section}{appendix}
\crefalias{subsection}{appendix}
\crefname{appendix}{Appendix}{Appendices}

\section{Theoretical Analysis: Magnitude Balancing}
\label{appendix:theoretical_analysis}

In this section, we justify the magnitude balancing step used in \name.
The low-rank factorization $\mathbf{W} \approx \mathcal{U}\mathcal{V}^{\top}$ is scale-invariant: for any scalar $\eta > 0$,
\begin{equation}
    \mathcal{U}\mathcal{V}^{\top}
    =
    (\eta \mathcal{U})(\eta^{-1}\mathcal{V})^{\top}.
\end{equation}
Thus, the same reconstructed weight matrix can be represented by infinitely many pairs of latent factors with different relative magnitudes.
Without controlling this ambiguity, one factor can become excessively large while the other becomes excessively small, which can lead to poorly conditioned updates and unstable scale extraction.
Magnitude balancing selects a representative from this equivalent family whose two factors have comparable Frobenius norms before extracting the channel-wise scales.

In our formulation, the continuous latent variables are decomposed into channel-wise scales and binary directions:
\begin{equation}
    \mathcal{U} \approx \operatorname{diag}(\mathbf{s}_1)\mathbf{U}_{\pm 1},
    \qquad
    \mathcal{V} \approx \operatorname{diag}(\mathbf{s}_2)\mathbf{V}_{\pm 1},
\end{equation}
where $\mathbf{s}_1 \in \mathbb{R}^{m}$ and $\mathbf{s}_2 \in \mathbb{R}^{n}$ capture output-channel and input-channel magnitudes, respectively.

\subsection{Magnitude Balancing under Scale Ambiguity}

\begin{proposition}[Balanced representative under scale ambiguity]
\label{prop:magnitude_balancing}
Let $\hat{\mathcal{U}}$ and $\hat{\mathcal{V}}$ be nonzero latent factors obtained after LB-ADMM.
Among all equivalent rescalings
\begin{equation}
    \mathcal{U}_{\eta} = \eta \hat{\mathcal{U}},
    \qquad
    \mathcal{V}_{\eta} = \eta^{-1}\hat{\mathcal{V}},
    \qquad \eta > 0,
\end{equation}
which preserve $\mathcal{U}_{\eta}\mathcal{V}_{\eta}^{\top}
= \hat{\mathcal{U}}\hat{\mathcal{V}}^{\top}$, the value
\begin{equation}
    \eta^{\star}
    =
    \sqrt{
    \frac{\|\hat{\mathcal{V}}\|_F}
         {\|\hat{\mathcal{U}}\|_F}
    }
\end{equation}
minimizes the total factor magnitude
\begin{equation}
    \mathcal{J}(\eta)
    =
    \frac{1}{2}
    \left(
    \|\eta \hat{\mathcal{U}}\|_F^2
    +
    \|\eta^{-1}\hat{\mathcal{V}}\|_F^2
    \right).
\end{equation}
For this minimum-energy representative, the balanced factors satisfy
\begin{equation}
    \|\mathcal{U}_{\eta^\star}\|_F
    =
    \|\mathcal{V}_{\eta^\star}\|_F.
\end{equation}
\end{proposition}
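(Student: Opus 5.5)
The plan is to reduce the statement to a one-variable calculus problem in $\eta$. Write $a \coloneqq \|\hat{\mathcal{U}}\|_F^2 > 0$ and $b \coloneqq \|\hat{\mathcal{V}}\|_F^2 > 0$; both are strictly positive because the factors are assumed nonzero. By homogeneity of the Frobenius norm, the objective becomes
\begin{equation*}
\mathcal{J}(\eta) = \tfrac12\left(\eta^2 a + \eta^{-2} b\right), \qquad \eta > 0.
\end{equation*}
The preservation of the product $\mathcal{U}_\eta\mathcal{V}_\eta^\top = \hat{\mathcal{U}}\hat{\mathcal{V}}^\top$ is immediate, since the scalars $\eta$ and $\eta^{-1}$ cancel. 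So the only real content is minimizing $\mathcal{J}$ over $(0,\infty)$.

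I would find the minimizer with the AM--GM inequality, which gives global optimality with no separate convexity argument:
\begin{equation*}
\tfrac12(\eta^2 a + \eta^{-2} b) \ge \sqrt{\eta^2 a \cdot \eta^{-2} b} = \sqrt{ab}.
\end{equation*}
This lower bound does not depend on $\eta$. Equality holds if and only if $\eta^2 a = \eta^{-2} b$, that is, $\eta^4 = b/a$. Since $\eta>0$, this forces $\eta = (b/a)^{1/4} = \sqrt{\|\hat{\mathcal{V}}\|_F/\|\hat{\mathcal{U}}\|_F} = \eta^\star$. Hence $\eta^\star$ is the unique global minimizer, and the minimum value is $\|\hat{\mathcal{U}}\|_F\|\hat{\mathcal{V}}\|_F$.

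As a cross-check, one could instead set $\mathcal{J}'(\eta) = \eta a - \eta^{-3} b = 0$. The observation $\mathcal{J}(\eta)\to\infty$ as $\eta\to 0^+$ and as $\eta\to\infty$ then gives the same unique critical point $\eta^\star$, which must be the minimum.

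The balancing identity comes from the equality condition itself. The relation $\eta^{\star 2} a = \eta^{\star -2} b$ says exactly that $\|\eta^\star\hat{\mathcal{U}}\|_F^2 = \|\eta^{\star-1}\hat{\mathcal{V}}\|_F^2$. Taking square roots gives $\|\mathcal{U}_{\eta^\star}\|_F = \|\mathcal{V}_{\eta^\star}\|_F$, and both equal $\sqrt{\|\hat{\mathcal{U}}\|_F\|\hat{\mathcal{V}}\|_F}$.

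There is no genuine obstacle. The only point needing care is the nonzero assumption: it guarantees $a,b>0$, so that $\eta^\star$ is well defined and the infimum is attained. If one factor were zero, $\mathcal{J}$ would be monotone in $\eta$ and no minimizer would exist.
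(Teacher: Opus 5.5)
Your proof is correct, but your main route differs from the paper's. The paper uses only the first-order condition. It expands $\mathcal{J}(\eta)=\tfrac12(\eta^2\|\hat{\mathcal{U}}\|_F^2+\eta^{-2}\|\hat{\mathcal{V}}\|_F^2)$ and solves $\mathcal{J}'(\eta)=0$ to get $\eta^4=\|\hat{\mathcal{V}}\|_F^2/\|\hat{\mathcal{U}}\|_F^2$. It then substitutes $\eta^\star$ back into the two factor norms to check that both squared norms equal $\|\hat{\mathcal{U}}\|_F\|\hat{\mathcal{V}}\|_F$. It never verifies that this stationary point is a minimizer: there is no second-derivative test and no boundary argument, so as written it only establishes criticality. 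Your AM--GM inequality gives the $\eta$-independent lower bound $\|\hat{\mathcal{U}}\|_F\|\hat{\mathcal{V}}\|_F$, with equality exactly at $\eta=\eta^\star$. In one step this yields global minimality, uniqueness, the optimal value, and the balancing identity, which is literally the equality condition, so no separate substitution is needed. Your cross-check (a unique critical point, with $\mathcal{J}\to\infty$ as $\eta\to0^+$ and as $\eta\to\infty$) is exactly what the paper's calculus argument is missing. Alternatively, $\mathcal{J}''(\eta)=\|\hat{\mathcal{U}}\|_F^2+3\eta^{-4}\|\hat{\mathcal{V}}\|_F^2>0$ shows that $\mathcal{J}$ is strictly convex on $(0,\infty)$, which closes the same gap. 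The calculus route is more mechanical and carries over to objectives without a clean AM--GM pairing. Yours is shorter and self-certifying for this two-term form, and it also makes clear why the nonzero hypothesis is needed, which the paper uses only implicitly when dividing by $\|\hat{\mathcal{U}}\|_F$.
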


\begin{proof}
Expanding the objective gives
\begin{equation}
    \mathcal{J}(\eta)
    =
    \frac{1}{2}
    \left(
    \eta^2 \|\hat{\mathcal{U}}\|_F^2
    +
    \eta^{-2}\|\hat{\mathcal{V}}\|_F^2
    \right).
\end{equation}
Differentiating with respect to $\eta$ yields
\begin{equation}
    \frac{d\mathcal{J}}{d\eta}
    =
    \eta\|\hat{\mathcal{U}}\|_F^2
    -
    \eta^{-3}\|\hat{\mathcal{V}}\|_F^2.
\end{equation}
Setting the derivative to zero gives
\begin{equation}
    \eta^4
    =
    \frac{\|\hat{\mathcal{V}}\|_F^2}
         {\|\hat{\mathcal{U}}\|_F^2},
\end{equation}
and therefore
\begin{equation}
    \eta^\star
    =
    \sqrt{
    \frac{\|\hat{\mathcal{V}}\|_F}
         {\|\hat{\mathcal{U}}\|_F}
    }.
\end{equation}
Substituting $\eta^\star$ into the two factor norms gives
\begin{equation}
    \|\eta^\star\hat{\mathcal{U}}\|_F^2
    =
    \|\hat{\mathcal{U}}\|_F\|\hat{\mathcal{V}}\|_F
    =
    \|(\eta^\star)^{-1}\hat{\mathcal{V}}\|_F^2,
\end{equation}
which proves that the selected rescaling equalizes the Frobenius norms of the two factors.
\end{proof}

This result directly motivates the magnitude balancing step in \name.
After LB-ADMM, we recover the unscaled latent factors and apply
\begin{equation}
    \mathcal{U} \leftarrow \eta^\star \hat{\mathcal{U}},
    \qquad
    \mathcal{V} \leftarrow (\eta^\star)^{-1}\hat{\mathcal{V}}.
\end{equation}
The channel-wise scales $\mathbf{s}_1$ and $\mathbf{s}_2$ are then extracted from these balanced factors.
This prevents one side of the factorization from absorbing most of the magnitude, which would otherwise make the subsequent scale extraction sensitive to numerical range and rounding effects.

\subsection{Conditioning Intuition}

Magnitude balancing also improves numerical stability.
The ADMM update for one factor involves solving a linear system with a matrix of the form
\begin{equation}
    \mathbf{H}
    =
    \mathbf{V}^{\top}\mathbf{V}
    +
    (\rho+\lambda)\mathbf{I},
\end{equation}
where $\rho$ is the ADMM penalty and $\lambda$ is the ridge regularization coefficient.
The condition number is
\begin{equation}
    \kappa(\mathbf{H})
    =
    \frac{\sigma_{\max}(\mathbf{V})^2+\rho+\lambda}
         {\sigma_{\min}(\mathbf{V})^2+\rho+\lambda}.
\end{equation}
The positive diagonal shift $(\rho+\lambda)\mathbf{I}$ ensures that the system is positive definite, but the scale ambiguity of the factorization can still create undesirable numerical regimes.
If $\|\mathbf{V}\|_F$ becomes extremely large, the conditioning of $\mathbf{H}$ increasingly reflects the conditioning of the Gram matrix $\mathbf{V}^{\top}\mathbf{V}$.
If $\|\mathbf{V}\|_F$ becomes extremely small, the data-dependent terms in the update become weak relative to the penalty and regularization terms.

By selecting the balanced representative of the equivalent factorization, \name avoids these extreme regimes before the subsequent refinement stage.
Thus, magnitude balancing does not change the reconstructed matrix $\mathcal{U}\mathcal{V}^{\top}$, but it provides a better-conditioned parameterization for scale extraction and straight-through estimator refinement.

\section{Stability Analysis of Low-Rank Binary Initialization}
\label{appendix:convergence_phase1}

This section analyzes the numerical stability of the robust Hessian preconditioning and Latent Binary ADMM (LB-ADMM) initialization used in \name. 
The LB-ADMM objective is nonconvex and includes discrete proxy updates; therefore, we do not claim global convergence or monotonic descent of the augmented Lagrangian. 
Instead, we show that the preconditioned target remains spectrally controlled and that the continuous ADMM subproblems are well-posed symmetric positive definite linear systems. 
These properties justify LB-ADMM as a stable initialization procedure for the subsequent block reconstruction stage.

\subsection{Problem Setup}

Let $\mathbf{W}_{\mathrm{FP}}\in\mathbb{R}^{m\times n}$ denote a full-precision weight matrix. 
We define the robustly preconditioned target as
\begin{equation}
    \mathbf{W}
    \triangleq
    \widetilde{\mathbf{D}}_{\mathrm{out}}
    \mathbf{W}_{\mathrm{FP}}
    \widetilde{\mathbf{D}}_{\mathrm{in}}
    \in \mathbb{R}^{m\times n},
\end{equation}
where $\widetilde{\mathbf{D}}_{\mathrm{out}}\in\mathbb{R}^{m\times m}$ and $\widetilde{\mathbf{D}}_{\mathrm{in}}\in\mathbb{R}^{n\times n}$ are diagonal preconditioners constructed from robust calibration statistics.

LB-ADMM seeks a rank-$R$ factorization $\mathbf{W}\approx \mathbf{U}\mathbf{V}^{\top}$ with $\mathbf{U}\in\mathbb{R}^{m\times R}$ and $\mathbf{V}\in\mathbb{R}^{n\times R}$. 
To separate continuous reconstruction from the discrete proxy structure, we introduce auxiliary variables $\mathbf{Z}_{\mathbf{U}}$ and $\mathbf{Z}_{\mathbf{V}}$. 
The regularized constrained objective is
\begin{equation}
\begin{aligned}
\min_{\mathbf{U},\mathbf{V},\mathbf{Z}_{\mathbf{U}},\mathbf{Z}_{\mathbf{V}}}
\quad
&
\frac{1}{2}\|\mathbf{W}-\mathbf{U}\mathbf{V}^{\top}\|_F^2
+
\frac{\lambda}{2}
\left(
\|\mathbf{U}\|_F^2+\|\mathbf{V}\|_F^2
\right)
+
\mathcal{I}_{\mathcal{C}_{\mathbf{U}}}(\mathbf{Z}_{\mathbf{U}})
+
\mathcal{I}_{\mathcal{C}_{\mathbf{V}}}(\mathbf{Z}_{\mathbf{V}})
\\
\mathrm{s.t.}
\quad
&
\mathbf{U}=\mathbf{Z}_{\mathbf{U}},
\qquad
\mathbf{V}=\mathbf{Z}_{\mathbf{V}},
\end{aligned}
\label{eq:binary_factor_problem}
\end{equation}
where $\lambda\ge 0$ is the ridge regularization coefficient, and $\mathcal{I}_{\mathcal{C}_{\mathbf{U}}}$ and $\mathcal{I}_{\mathcal{C}_{\mathbf{V}}}$ are indicator functions for the structured proxy families used to initialize the latent binary factors.

\subsection{Spectral Control from Robust Preconditioning}

The stability of the initialization depends partly on controlling the scale of the preconditioned target $\mathbf{W}$. 
The robust diagonal estimator used in \name applies clipping and shrinkage to reduce the influence of outlier calibration statistics.

\begin{lemma}[Bounded preconditioners]
\label{lem:phi_bound}
Assume that the diagonal preconditioner entries are nonnegative and clipped to a maximum value $\tau_{\max}>0$. 
Then the diagonal preconditioners satisfy
\begin{equation}
    \|\widetilde{\mathbf{D}}_{\mathrm{out}}\|_2 \le \tau_{\max},
    \qquad
    \|\widetilde{\mathbf{D}}_{\mathrm{in}}\|_2 \le \tau_{\max}.
\end{equation}
\end{lemma}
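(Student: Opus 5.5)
The argument is short because both preconditioners are diagonal. The plan is to reduce the operator norm to the largest absolute diagonal entry and then apply the clipping assumption. Let $\mathbf{D}$ denote either $\widetilde{\mathbf{D}}_{\mathrm{out}}$ or $\widetilde{\mathbf{D}}_{\mathrm{in}}$, with diagonal entries $d_1,\dots,d_k$.

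The first step is to show that $\|\mathbf{D}\|_2=\max_i |d_i|$. For any vector $\mathbf{x}$ we have
\begin{equation}
\|\mathbf{D}\mathbf{x}\|_2^2=\sum_i d_i^2 x_i^2\le \big(\max_i d_i^2\big)\|\mathbf{x}\|_2^2 .
\end{equation}
Equality holds when $\mathbf{x}$ is the standard basis vector at the index of the largest $|d_i|$. Equivalently, $\mathbf{D}$ is symmetric and its eigenvalues are its diagonal entries, so its spectral norm equals its spectral radius.

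The second step uses the assumptions. Nonnegativity gives $|d_i|=d_i$, and clipping gives $d_i\le\tau_{\max}$, so $\max_i d_i\le\tau_{\max}$. Applying this to each of the two preconditioners separately yields both inequalities in the statement.

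The one point needing care is the order of the robust estimator's operations. Clipping is applied before the Ledoit--Wolf shrinkage $(1-\gamma)[\mathbf{D}]_{ii}+\gamma\,\mathrm{mean}(\mathbf{D})$, so I must check that the bound survives shrinkage. The shrunk entry is a convex combination, with $\gamma\in[0,1]$, of quantities in $[0,\tau_{\max}]$: the clipped entry itself, and the mean of the clipped entries. Such a convex combination again lies in $[0,\tau_{\max}]$. Hence the hypothesis on the final entries of $\widetilde{\mathbf{D}}_{(\cdot)}$ holds whichever order the two operations are applied in, and the argument above goes through unchanged.
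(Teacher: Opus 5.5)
Your proof is correct and follows the paper's argument: the spectral norm of a diagonal matrix is its largest absolute diagonal entry, and nonnegativity plus clipping bound that entry by $\tau_{\max}$. Your extra check is a sound addition that the paper leaves implicit, since the paper simply assumes the final entries are clipped: the shrinkage step $(1-\gamma)[\mathbf{D}]_{ii}+\gamma\,\mathrm{mean}(\mathbf{D})$ is a convex combination and so keeps entries in $[0,\tau_{\max}]$.
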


\begin{proof}
For a diagonal matrix, the spectral norm is the maximum absolute diagonal entry. 
Since each diagonal entry of $\widetilde{\mathbf{D}}_{\mathrm{out}}$ and $\widetilde{\mathbf{D}}_{\mathrm{in}}$ is nonnegative and clipped to be at most $\tau_{\max}$, their spectral norms are bounded by $\tau_{\max}$.
\end{proof}

\begin{corollary}[Bounded preconditioned target]
\label{cor:w_bound}
The preconditioned target satisfies
\begin{equation}
    \|\mathbf{W}\|_2
    \le
    \tau_{\max}^2
    \|\mathbf{W}_{\mathrm{FP}}\|_2 .
\end{equation}
\end{corollary}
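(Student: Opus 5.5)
The plan is to combine the definition of the preconditioned target, $\mathbf{W}=\widetilde{\mathbf{D}}_{\mathrm{out}}\mathbf{W}_{\mathrm{FP}}\widetilde{\mathbf{D}}_{\mathrm{in}}$, with the submultiplicativity of the spectral norm, and then invoke \cref{lem:phi_bound}. No property of LB-ADMM is needed; the statement concerns only the target matrix fed into the solver.

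First, apply submultiplicativity of the operator $2$-norm twice. The spectral norm is induced by the Euclidean vector norm, so $\|\mathbf{A}\mathbf{B}\|_2\le\|\mathbf{A}\|_2\|\mathbf{B}\|_2$ holds for conformable rectangular matrices. This gives
\begin{equation*}
    \|\mathbf{W}\|_2
    \le
    \|\widetilde{\mathbf{D}}_{\mathrm{out}}\|_2\,
    \|\mathbf{W}_{\mathrm{FP}}\|_2\,
    \|\widetilde{\mathbf{D}}_{\mathrm{in}}\|_2 .
\end{equation*}

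Second, substitute the two bounds $\|\widetilde{\mathbf{D}}_{\mathrm{out}}\|_2\le\tau_{\max}$ and $\|\widetilde{\mathbf{D}}_{\mathrm{in}}\|_2\le\tau_{\max}$ from \cref{lem:phi_bound}. Since all three factors are nonnegative, the product bound is monotone in each factor, and this immediately yields $\|\mathbf{W}\|_2\le\tau_{\max}^2\|\mathbf{W}_{\mathrm{FP}}\|_2$.

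There is essentially no obstacle. The only point to be careful about is that the hypotheses of \cref{lem:phi_bound} apply to the preconditioners actually used in $\mathbf{W}$. These are the post-shrinkage $\widetilde{\mathbf{D}}$, which must be nonnegative and clipped at $\tau_{\max}$. Shrinkage is a convex combination of clipped nonnegative entries and their mean, and that mean is itself at most $\tau_{\max}$. So the shrunk entries remain in $[0,\tau_{\max}]$ whether clipping is applied before or after shrinkage, and the lemma applies as stated.
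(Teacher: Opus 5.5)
Your proof is correct and matches the paper's argument: submultiplicativity of the spectral norm applied to $\widetilde{\mathbf{D}}_{\mathrm{out}}\mathbf{W}_{\mathrm{FP}}\widetilde{\mathbf{D}}_{\mathrm{in}}$, followed by the two bounds from \cref{lem:phi_bound}. Your extra check that shrinkage keeps the entries in $[0,\tau_{\max}]$ is a harmless refinement; the paper simply takes this as part of the lemma's hypothesis.
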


\begin{proof}
Using submultiplicativity of the spectral norm,
\begin{equation}
\begin{aligned}
    \|\mathbf{W}\|_2
    &=
    \left\|
    \widetilde{\mathbf{D}}_{\mathrm{out}}
    \mathbf{W}_{\mathrm{FP}}
    \widetilde{\mathbf{D}}_{\mathrm{in}}
    \right\|_2 \\
    &\le
    \|\widetilde{\mathbf{D}}_{\mathrm{out}}\|_2
    \|\mathbf{W}_{\mathrm{FP}}\|_2
    \|\widetilde{\mathbf{D}}_{\mathrm{in}}\|_2
    \le
    \tau_{\max}^2
    \|\mathbf{W}_{\mathrm{FP}}\|_2 .
\end{aligned}
\end{equation}
\end{proof}

This bound does not imply global convergence of the nonconvex factorization problem. 
However, it ensures that the target matrix passed to LB-ADMM has controlled spectral scale, which helps avoid extreme numerical ranges in the subsequent linear solves.

\subsection{LB-ADMM Updates}

We use the scaled-dual ADMM form with penalty parameter $\rho>0$ and scaled dual variables $\boldsymbol{\Lambda}_{\mathbf{U}}$ and $\boldsymbol{\Lambda}_{\mathbf{V}}$. 
Given $\mathbf{V}$, $\mathbf{Z}_{\mathbf{U}}$, and $\boldsymbol{\Lambda}_{\mathbf{U}}$, the continuous update for $\mathbf{U}$ solves
\begin{equation}
\min_{\mathbf{U}}
\quad
\frac{1}{2}\|\mathbf{W}-\mathbf{U}\mathbf{V}^{\top}\|_F^2
+
\frac{\lambda}{2}\|\mathbf{U}\|_F^2
+
\frac{\rho}{2}
\|\mathbf{U}-\mathbf{Z}_{\mathbf{U}}+\boldsymbol{\Lambda}_{\mathbf{U}}\|_F^2 .
\end{equation}
The corresponding normal equation is
\begin{equation}
    \mathbf{U}
    \left(
    \mathbf{V}^{\top}\mathbf{V}
    +
    (\rho+\lambda)\mathbf{I}
    \right)
    =
    \mathbf{W}\mathbf{V}
    +
    \rho
    \left(
    \mathbf{Z}_{\mathbf{U}}
    -
    \boldsymbol{\Lambda}_{\mathbf{U}}
    \right).
\end{equation}
Equivalently,
\begin{equation}
    \left(
    \mathbf{V}^{\top}\mathbf{V}
    +
    (\rho+\lambda)\mathbf{I}
    \right)
    \mathbf{U}^{\top}
    =
    \mathbf{V}^{\top}\mathbf{W}^{\top}
    +
    \rho
    \left(
    \mathbf{Z}_{\mathbf{U}}
    -
    \boldsymbol{\Lambda}_{\mathbf{U}}
    \right)^{\top}.
\end{equation}
The update for $\mathbf{V}$ is symmetric:
\begin{equation}
    \mathbf{V}
    \left(
    \mathbf{U}^{\top}\mathbf{U}
    +
    (\rho+\lambda)\mathbf{I}
    \right)
    =
    \mathbf{W}^{\top}\mathbf{U}
    +
    \rho
    \left(
    \mathbf{Z}_{\mathbf{V}}
    -
    \boldsymbol{\Lambda}_{\mathbf{V}}
    \right).
\end{equation}

The proxy variables are updated using the SVID operator:
\begin{equation}
    \mathbf{Z}_{\mathbf{U}}
    \leftarrow
    \mathrm{SVID}
    \left(
    \mathbf{U}+\boldsymbol{\Lambda}_{\mathbf{U}}
    \right),
    \qquad
    \mathbf{Z}_{\mathbf{V}}
    \leftarrow
    \mathrm{SVID}
    \left(
    \mathbf{V}+\boldsymbol{\Lambda}_{\mathbf{V}}
    \right).
\end{equation}
This step imposes the structured sign-value proxy used for latent binary initialization. 
Since the proxy families are nonconvex and discrete, the SVID step should be interpreted as a structured proxy update rather than a convex projection.

Finally, the scaled dual variables are updated as
\begin{equation}
    \boldsymbol{\Lambda}_{\mathbf{U}}
    \leftarrow
    \boldsymbol{\Lambda}_{\mathbf{U}}
    +
    \mathbf{U}
    -
    \mathbf{Z}_{\mathbf{U}},
    \qquad
    \boldsymbol{\Lambda}_{\mathbf{V}}
    \leftarrow
    \boldsymbol{\Lambda}_{\mathbf{V}}
    +
    \mathbf{V}
    -
    \mathbf{Z}_{\mathbf{V}}.
\end{equation}

\subsection{Well-Posed Continuous Updates}

\begin{lemma}[Positive definiteness of the continuous subproblems]
\label{lem:spd_unique}
For any $\rho>0$ and $\lambda\ge 0$, the system matrices
\begin{equation}
    \mathbf{H}_{\mathbf{U}}
    =
    \mathbf{V}^{\top}\mathbf{V}
    +
    (\rho+\lambda)\mathbf{I},
    \qquad
    \mathbf{H}_{\mathbf{V}}
    =
    \mathbf{U}^{\top}\mathbf{U}
    +
    (\rho+\lambda)\mathbf{I}
\end{equation}
are symmetric positive definite. 
Consequently, the continuous updates for $\mathbf{U}$ and $\mathbf{V}$ are strongly convex quadratic subproblems with unique solutions.
\end{lemma}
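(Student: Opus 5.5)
The argument is elementary linear algebra, and I would run it in three steps. Symmetry first. Since $(\mathbf{V}^{\top}\mathbf{V})^{\top}=\mathbf{V}^{\top}\mathbf{V}$ and the identity is symmetric, $\mathbf{H}_{\mathbf{U}}$ is symmetric. The same holds for $\mathbf{H}_{\mathbf{V}}$ with $\mathbf{U}$ in place of $\mathbf{V}$.

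\textbf{Step 1: positive definiteness.} For positive definiteness, take an arbitrary nonzero $\mathbf{x}\in\mathbb{R}^{R}$. Then
\begin{equation*}
\mathbf{x}^{\top}\mathbf{H}_{\mathbf{U}}\mathbf{x}=\|\mathbf{V}\mathbf{x}\|_2^2+(\rho+\lambda)\|\mathbf{x}\|_2^2\ \ge\ (\rho+\lambda)\|\mathbf{x}\|_2^2>0,
\end{equation*}
because $\rho>0$ and $\lambda\ge 0$. Equivalently, every eigenvalue of $\mathbf{H}_{\mathbf{U}}$ is at least $\rho+\lambda>0$, since $\mathbf{V}^{\top}\mathbf{V}$ is positive semidefinite. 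This also yields the explicit bound $\lambda_{\min}(\mathbf{H}_{\mathbf{U}})\ge\rho+\lambda$, which is consistent with the conditioning discussion in the previous appendix.

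\textbf{Step 2: strong convexity.} Next I would connect the matrices to the subproblems. Write the $\mathbf{U}$-subproblem objective as
\begin{equation*}
f(\mathbf{U})=\tfrac12\|\mathbf{W}-\mathbf{U}\mathbf{V}^{\top}\|_F^2+\tfrac{\lambda}{2}\|\mathbf{U}\|_F^2+\tfrac{\rho}{2}\|\mathbf{U}-\mathbf{Z}_{\mathbf{U}}+\boldsymbol{\Lambda}_{\mathbf{U}}\|_F^2 .
\end{equation*}
Expanding it gives a quadratic whose second-order term is $\tfrac12\operatorname{tr}(\mathbf{U}\mathbf{H}_{\mathbf{U}}\mathbf{U}^{\top})$. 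Hence the Hessian, viewed as a linear operator on $\mathbb{R}^{m\times R}$ (that is, $\mathbf{I}_m\otimes\mathbf{H}_{\mathbf{U}}$ under row-wise vectorization), is positive definite with smallest eigenvalue at least $\rho+\lambda$. So $f$ is $(\rho+\lambda)$-strongly convex. This Hessian identification is the one step that needs care, though it only requires expanding the Frobenius norms and collecting the terms quadratic in $\mathbf{U}$.

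\textbf{Step 3: unique solutions.} A strongly convex, continuous function on a finite-dimensional space is coercive, so it has a minimizer, and that minimizer is unique. The minimizer is characterized by $\nabla f=0$, which is exactly the normal equation
\begin{equation*}
\mathbf{U}\mathbf{H}_{\mathbf{U}}=\mathbf{W}\mathbf{V}+\rho(\mathbf{Z}_{\mathbf{U}}-\boldsymbol{\Lambda}_{\mathbf{U}}).
\end{equation*}
Since $\mathbf{H}_{\mathbf{U}}$ is invertible, the unique solution is
\begin{equation*}
\mathbf{U}=\bigl(\mathbf{W}\mathbf{V}+\rho(\mathbf{Z}_{\mathbf{U}}-\boldsymbol{\Lambda}_{\mathbf{U}})\bigr)\mathbf{H}_{\mathbf{U}}^{-1}.
\end{equation*}
Positive definiteness also guarantees that a Cholesky factorization exists, which justifies the solver used in \eqref{eq:admm_update}.

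The $\mathbf{V}$ case follows by the same argument after exchanging the roles of $\mathbf{U}$ and $\mathbf{V}$ and replacing $\mathbf{W}$ by $\mathbf{W}^{\top}$.
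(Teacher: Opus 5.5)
Your proposal is correct and follows the paper's proof: both rest on the identity $\mathbf{a}^{\top}\mathbf{H}_{\mathbf{U}}\mathbf{a}=\|\mathbf{V}\mathbf{a}\|_2^2+(\rho+\lambda)\|\mathbf{a}\|_2^2>0$ for nonzero $\mathbf{a}$, with the $\mathbf{V}$ case handled by symmetry. The paper only asserts the strong convexity and uniqueness consequence, whereas you derive it by identifying the subproblem Hessian as $\mathbf{I}_m\otimes\mathbf{H}_{\mathbf{U}}$ and solving the normal equation; this is a useful filling-in of detail rather than a different route.
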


\begin{proof}
For any nonzero vector $\mathbf{a}$,
\begin{equation}
    \mathbf{a}^{\top}
    \left(
    \mathbf{V}^{\top}\mathbf{V}
    +
    (\rho+\lambda)\mathbf{I}
    \right)
    \mathbf{a}
    =
    \|\mathbf{V}\mathbf{a}\|_2^2
    +
    (\rho+\lambda)\|\mathbf{a}\|_2^2.
\end{equation}
Since $\rho>0$ and $\lambda\ge 0$, this quantity is strictly positive. 
Thus, $\mathbf{H}_{\mathbf{U}}$ is positive definite. 
The same argument applies to $\mathbf{H}_{\mathbf{V}}$.
The corresponding objectives are therefore strongly convex quadratics in the updated variable and admit unique minimizers.
\end{proof}

\begin{corollary}[Conditioning of the linear solves]
\label{cor:conditioning}
The condition number of the $\mathbf{U}$ update matrix satisfies
\begin{equation}
    \kappa(\mathbf{H}_{\mathbf{U}})
    =
    \frac{
    \sigma_{\max}(\mathbf{V})^2+\rho+\lambda
    }{
    \sigma_{\min}(\mathbf{V})^2+\rho+\lambda
    }
    \le
    1+
    \frac{\|\mathbf{V}\|_2^2}{\rho+\lambda}.
\end{equation}
An analogous bound holds for the $\mathbf{V}$ update matrix:
\begin{equation}
    \kappa(\mathbf{H}_{\mathbf{V}})
    \le
    1+
    \frac{\|\mathbf{U}\|_2^2}{\rho+\lambda}.
\end{equation}
\end{corollary}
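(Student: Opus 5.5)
The plan is to diagonalize $\mathbf{H}_{\mathbf{U}}$ through the spectral decomposition of the Gram matrix $\mathbf{V}^{\top}\mathbf{V}$ and read off its extreme eigenvalues. By \cref{lem:spd_unique}, $\mathbf{H}_{\mathbf{U}}$ is symmetric positive definite, so its spectral condition number is simply the ratio of its largest to its smallest eigenvalue.

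First I will take the thin SVD $\mathbf{V}=\mathbf{Q}\boldsymbol{\Sigma}\mathbf{P}^{\top}$ and write $\mathbf{V}^{\top}\mathbf{V}=\mathbf{P}\boldsymbol{\Sigma}^{\top}\boldsymbol{\Sigma}\mathbf{P}^{\top}$. Its eigenvalues are the squared singular values $\sigma_i(\mathbf{V})^2$, padded with zeros when $R>n$; in that case $\sigma_{\min}(\mathbf{V})$ is read as the $R$-th singular value, namely $0$. Adding $(\rho+\lambda)\mathbf{I}$ shifts every eigenvalue by the same amount and leaves the eigenvectors unchanged. Hence
\[
\lambda_{\max}(\mathbf{H}_{\mathbf{U}})=\sigma_{\max}(\mathbf{V})^2+\rho+\lambda,
\qquad
\lambda_{\min}(\mathbf{H}_{\mathbf{U}})=\sigma_{\min}(\mathbf{V})^2+\rho+\lambda,
\]
and taking their ratio gives the stated equality for $\kappa(\mathbf{H}_{\mathbf{U}})$.

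For the inequality, I will bound the denominator from below by dropping $\sigma_{\min}(\mathbf{V})^2\ge 0$, which is valid because $\rho+\lambda>0$. This gives
\[
\kappa(\mathbf{H}_{\mathbf{U}})\le\frac{\sigma_{\max}(\mathbf{V})^2+\rho+\lambda}{\rho+\lambda}=1+\frac{\|\mathbf{V}\|_2^2}{\rho+\lambda},
\]
using $\sigma_{\max}(\mathbf{V})=\|\mathbf{V}\|_2$. The bound for $\mathbf{H}_{\mathbf{V}}$ follows by swapping the roles of $\mathbf{U}$ and $\mathbf{V}$ in the same argument.

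The only point that needs care is the convention for $\sigma_{\min}$ when $\mathbf{V}$ is rank-deficient or has fewer rows than columns. The inequality does not depend on it, since any nonnegative value of the smallest eigenvalue of $\mathbf{V}^{\top}\mathbf{V}$ suffices for the lower bound on the denominator. I will state this convention explicitly so that the equality remains correct in that case.
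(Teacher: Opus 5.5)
Your proof is correct and takes the same route as the paper. Both arguments shift the eigenvalues $\sigma_i(\mathbf{V})^2$ of $\mathbf{V}^{\top}\mathbf{V}$ by $\rho+\lambda$, take the ratio of the extreme eigenvalues, and drop $\sigma_{\min}(\mathbf{V})^2\ge 0$ from the denominator. Your explicit convention for $\sigma_{\min}$ when $\mathbf{V}$ is rank-deficient or $R>n$ is a small clarification that the paper leaves implicit.
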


\begin{proof}
The eigenvalues of $\mathbf{V}^{\top}\mathbf{V}$ are the squared singular values of $\mathbf{V}$. 
Adding $(\rho+\lambda)\mathbf{I}$ shifts every eigenvalue by $\rho+\lambda$, giving
\begin{equation}
    \kappa(\mathbf{H}_{\mathbf{U}})
    =
    \frac{
    \sigma_{\max}(\mathbf{V})^2+\rho+\lambda
    }{
    \sigma_{\min}(\mathbf{V})^2+\rho+\lambda
    }.
\end{equation}
Since $\sigma_{\min}(\mathbf{V})^2\ge 0$, we have
\begin{equation}
    \kappa(\mathbf{H}_{\mathbf{U}})
    \le
    \frac{
    \sigma_{\max}(\mathbf{V})^2+\rho+\lambda
    }{
    \rho+\lambda
    }
    =
    1+
    \frac{\|\mathbf{V}\|_2^2}{\rho+\lambda}.
\end{equation}
The proof for $\mathbf{H}_{\mathbf{V}}$ is identical.
\end{proof}

This result explains the roles of the ADMM penalty and ridge regularization. 
The diagonal shift $(\rho+\lambda)\mathbf{I}$ guarantees positive definiteness and prevents singular continuous updates, even when one factor is rank-deficient. 
Increasing $\rho$ strengthens the coupling between the continuous factors and the structured proxy variables, while increasing $\lambda$ further controls the factor magnitudes. 
Both terms improve the worst-case conditioning of the Cholesky solves through the same positive diagonal shift.

\subsection{Role of the SVID Proxy Step}

The SVID update imposes the sign-value structure required for latent binary initialization. 
Because the proxy families $\mathcal{C}_{\mathbf{U}}$ and $\mathcal{C}_{\mathbf{V}}$ are nonconvex and discrete, standard convex ADMM convergence guarantees do not directly apply. 
Accordingly, we use LB-ADMM as a stable initialization procedure rather than as a globally convergent solver for the discrete factorization problem.

The optimization remains well-posed in the sense required for initialization: each continuous update is a unique SPD solve, and each proxy update maps the continuous factor back to the structured family used to initialize binary factors. 
The subsequent magnitude balancing and block reconstruction stages further refine these initialized factors using calibration data.

\paragraph{Summary.}
The above analysis establishes three stability properties of LB-ADMM initialization. 
First, robust preconditioning bounds the spectral scale of the target matrix. 
Second, the continuous ADMM updates are well-posed SPD linear systems with unique solutions. 
Third, the penalty and ridge terms help control the conditioning of these solves. 
These properties do not constitute a global convergence proof for the nonconvex discrete problem, but they justify LB-ADMM as a stable and computationally efficient initialization procedure for low-rank binary quantization.

\section{Implementation Details}

All compression experiments for \name were conducted on 1 NVIDIA H100 (80 GB), and we utilize unified hyperparameters when compressing models with \name.
When tuning pre-factorized parameters to absorb quantization error, we used a learning rate of 1e-4 and a batch size of 4.
For tuning factorized parameters (low-rank, latent binary and full-precision scales), we used a unified learning rate of 1e-5 and a batch size of 1.
For global scale reconstruction, we used a learning rate of 1e-6 and a batch size of 1.
Pre-factorized, factorized, and global tuning stages all consist of 8 epochs and utilize a cosine learning rate scheduler.
We employed a linear ADMM penalty scheduler for 400 factorization steps, for each weight matrix across all models.
For calibration data, we used 128 samples with a sequence length of 2048 from the WikiText-2 dataset \cite{wikitext}, and used a random seed value of 0 for data selection.
For all experiments, we used \texttt{torch=2.6.0}, \texttt{transformers=4.51.3}, \texttt{datasets=4.0.0}, \texttt{lm\_eval=0.4.9}, and CUDA 12.4.
To derive the activation- and gradient-based diagonal preconditioners, we utilized gradient checkpointing and used a memory-efficient implementation of the cross-entropy function \cite{cce}.
During block reconstruction, we employed a weighted MSE function, utilized in previous quantization works \cite{dbf, kim2025guidedquant}.
Official open-source implementations or quantized models were used to evaluate baselines for binary PTQ \cite{bi_llm, stb_llm, arb_llm, hb_llm}, vector quantization \cite{aqlm, pv_tuning, qtip}, and low-rank binary QAT \cite{dbf, littlebit}.

Since none of the binary PTQ baselines (BiLLM \cite{bi_llm}, STBLLM \cite{stb_llm}, ARB-LLM \cite{arb_llm}, HBLLM \cite{hb_llm}) compress quantized models into memory-efficient formats, we utilize the main text and appendices of such methods to calculate both effective bits and model checkpoint sizes \cite{bi_llm, arb_llm, hb_llm}.
Further details on effective bit calculation and model checkpoint sizes can be found in \autoref{appendix:model_size}.

\newpage
\section{Further Ablations}
\label{appendix:additional_exp}

\subsection{Different Data Budgets}

We test with different calibration data budgets for the block and model reconstruction stages.
As in \autoref{tab:different_calib}, utilizing more data during block reconstruction leads to greater performance gains.

\begin{table}[H]
    \centering
    \caption{
    Utilizing different data budgets for the block and model reconstruction stages of \name, when compressing Llama-2-7B to 1-bit.
    }
    \label{tab:different_calib}
    \small
    \renewcommand{\arraystretch}{1.2}
    \setlength{\tabcolsep}{5pt}
    
    \begin{tabular}{cccccc}
        \toprule
        \multirow{2}{*}[-0.3em]{\textbf{\shortstack{Block Recon.\\Samples}}} & \multicolumn{5}{c}{\textbf{Model Recon. Samples}} \\
        \cmidrule(lr){2-6}
         & 32 & 64 & 128 & 256 & 512 \\
        \midrule
        32  & 14.16 & 13.24 & 12.72 & 12.22 & 11.91 \\
        64  & 12.13 & 11.66 & 11.23 & 10.94 & 10.69 \\
        128 & 10.56 & 10.40 & 10.35 & 10.06 & 9.89  \\
        256 & 10.25 & 10.24 & 10.17 & 9.77  & 9.60  \\
        512 & 9.24  & 9.16  & 9.13  & 9.07  & \textbf{9.07}  \\
        \bottomrule
    \end{tabular}
\end{table}

\subsection{Different Calibration Datasets}

We test how the calibration dataset impacts performance.
\autoref{tab:calibration_diversity} shows that increasing calibration data diversity mitigates overfitting and can improve zero-shot performance.

\begin{table*}[h!]
\centering
\caption{
Effect of calibration dataset composition on 1-bit quantization for instruction-tuned Qwen3-4B and Llama-2-7B.
We vary the number of calibration samples drawn from C4 and WikiText-2 (WT2), while keeping the total fixed at 128.
}
\label{tab:calibration_diversity}
\vspace{0.8em}
\small
\setlength{\tabcolsep}{6pt}
\begin{tabular}{cc | ccc | ccc}
\toprule
\multicolumn{2}{c|}{\textbf{Samples}} & \multicolumn{3}{c|}{\textbf{Qwen3-4B, 1-bit}} & \multicolumn{3}{c}{\textbf{Llama-2-7B, 1-bit}} \\
\textbf{C4} & \textbf{WT2} & \textbf{WT2 PPL} ($\downarrow$) & \textbf{C4 PPL} ($\downarrow$) & \textbf{Zero-shot} ($\uparrow$) & \textbf{WT2 PPL} ($\downarrow$) & \textbf{C4 PPL} ($\downarrow$) & \textbf{Zero-shot} ($\uparrow$) \\
\midrule
0   & 128 & \textbf{21.25} & 58.28 & 46.53 & \textbf{10.34} & 23.23 & 48.68 \\
32  & 96  & 22.11 & 41.24 & 48.66 & 10.61 & 17.92 & 49.06 \\
64  & 64  & 22.66 & 39.22 & 47.08 & 10.74 & 17.15 & 50.08 \\
96  & 32  & 23.92 & 36.41 & 50.19 & 11.42 & \textbf{16.25} & \textbf{50.87} \\
128 & 0   & 38.96 & \textbf{35.23} & \textbf{50.22} & 17.99 & 16.31 & 48.70 \\
\bottomrule
\end{tabular}
\end{table*}

\subsection{Analysis of Latent Weight Dynamics}
\label{appendix:latent_dynamics}

To validate the efficacy of the Factorized Component Refinement phase (Step 3), we analyze the trajectory of the continuous latent variables, $\mathcal{U}$ and $\mathcal{V}$, before and after fine-tuning.
\autoref{fig:latent_mobility} visualizes the distribution shifts and sign flip ratios for all linear layers within the first transformer block of Llama-3.2-1B.
The abscissa represents the magnitude of the latent weights initialized by LB-ADMM (Step 2), while the ordinate in the right-hand panels denotes the magnitude of change after refinement (Step 3).

\paragraph{Stability of Initialization.}
As illustrated in \autoref{fig:latent_mobility}, a predominant proportion of latent weights retain their original signs throughout the refinement process.
The sign flip ratio remains consistently low, ranging from 0.47\% in the \texttt{k\_proj} layer to 6.82\% in the \texttt{gate\_proj} layer.
This stability indicates that the LB-ADMM initialization establishes a parameter configuration proximate to a local optimum, thereby mitigating the necessity for substantial updates during the fine-tuning phase.

\paragraph{Refinement of Boundary Weights.}
Despite the low flip ratio, the refinement step functions as a critical margin maximization process.
The interaction density plots reveal an inverse correlation between the initial magnitude and the degree of change.
Specifically, weights with initial magnitudes near zero, corresponding to the decision boundary, exhibit the highest mobility and likelihood of sign flipping.
This behavior suggests that the refinement phase selectively rectifies the signs of ``ambiguous'' weights near the zero boundary while preserving the confident decisions established during the initialization.
Consequently, this targeted adjustment compensates for discretization errors introduced during the initial factorization.

\begin{figure*}[h!]
    \centering
    \begin{subfigure}[b]{0.49\textwidth}
        \centering
        \includegraphics[width=\linewidth]{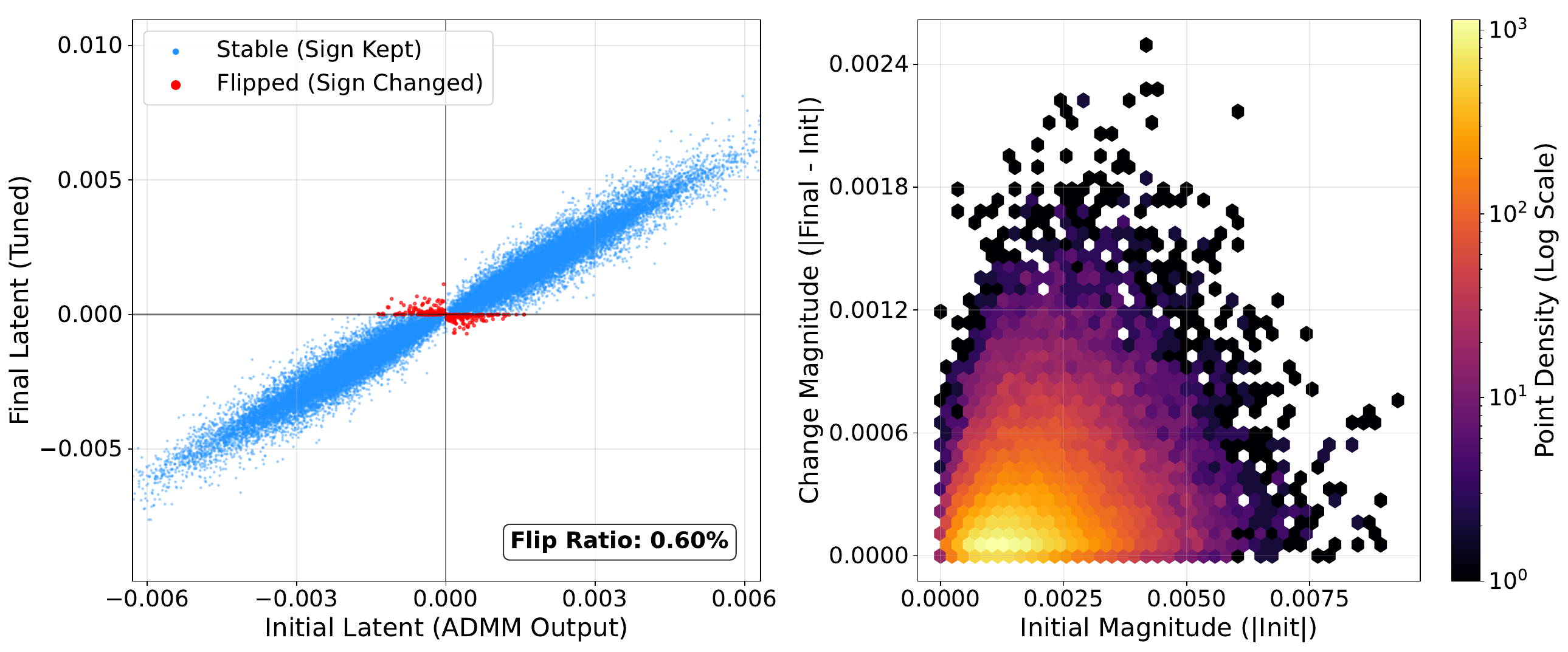}
        \caption{\texttt{self\_attn.q\_proj} (Flip: 0.60\%)}
        \label{fig:mobility_q}
    \end{subfigure}
    \hfill
    \begin{subfigure}[b]{0.49\textwidth}
        \centering
        \includegraphics[width=\linewidth]{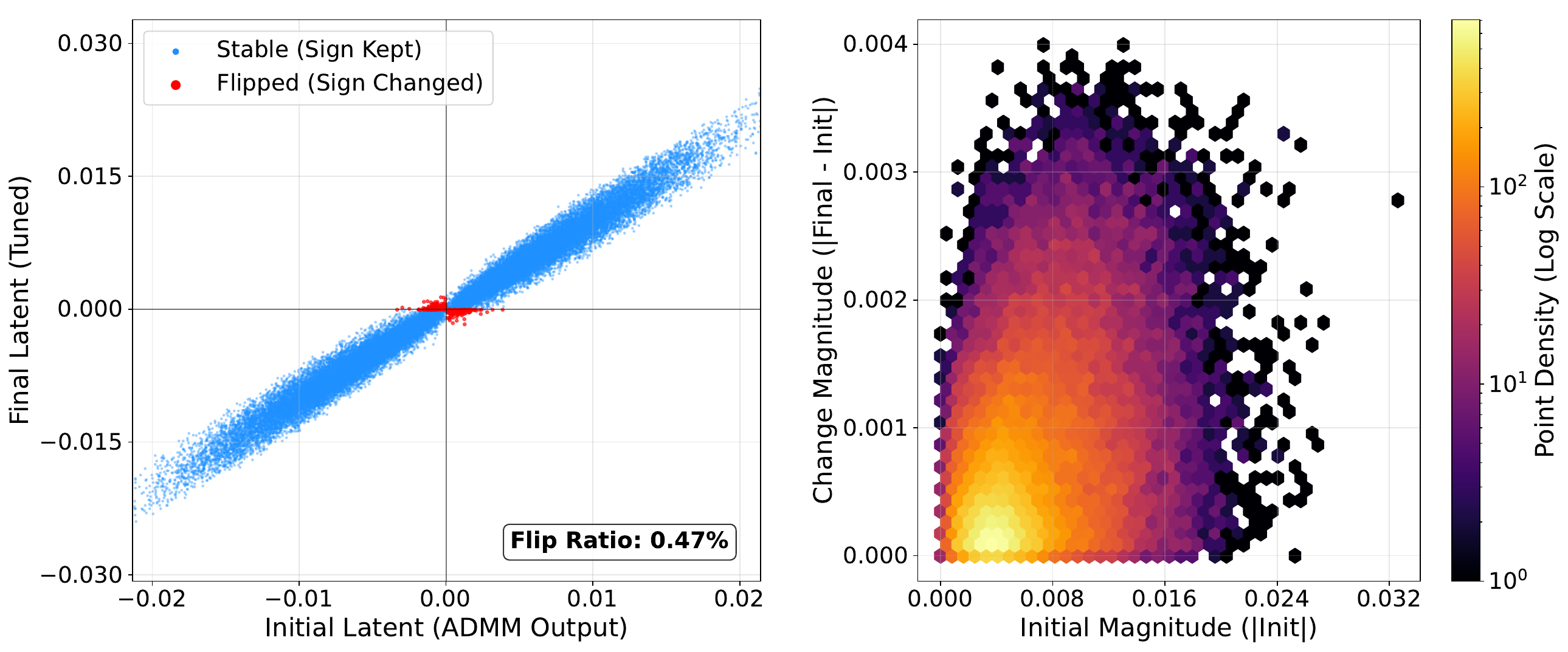}
        \caption{\texttt{self\_attn.k\_proj} (Flip: 0.47\%)}
        \label{fig:mobility_k}
    \end{subfigure}
    
    \begin{subfigure}[b]{0.49\textwidth}
        \centering
        \includegraphics[width=\linewidth]{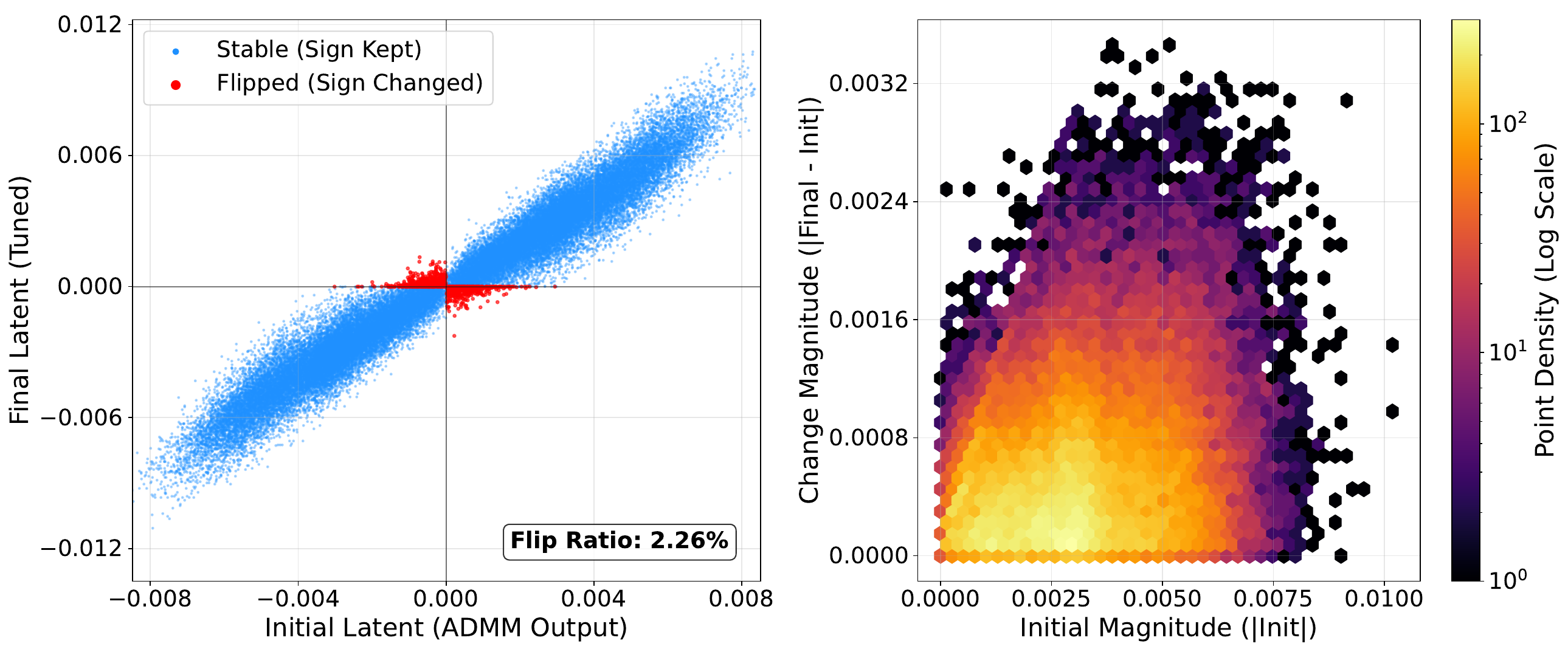}
        \caption{\texttt{self\_attn.v\_proj} (Flip: 2.26\%)}
        \label{fig:mobility_v}
    \end{subfigure}
    \hfill
    \begin{subfigure}[b]{0.49\textwidth}
        \centering
        \includegraphics[width=\linewidth]{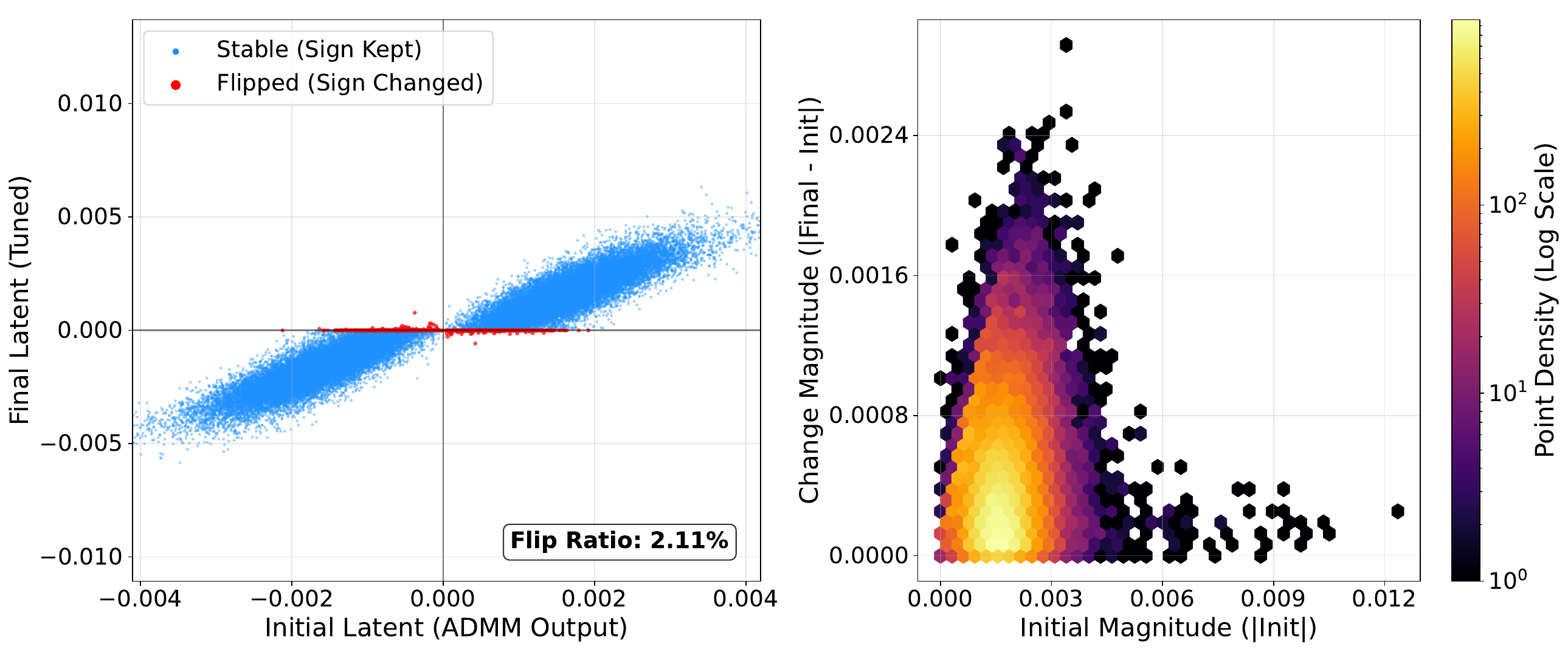}
        \caption{\texttt{self\_attn.o\_proj} (Flip: 2.11\%)}
        \label{fig:mobility_o}
    \end{subfigure}
    
    \begin{subfigure}[b]{0.49\textwidth}
        \centering
        \includegraphics[width=\linewidth]{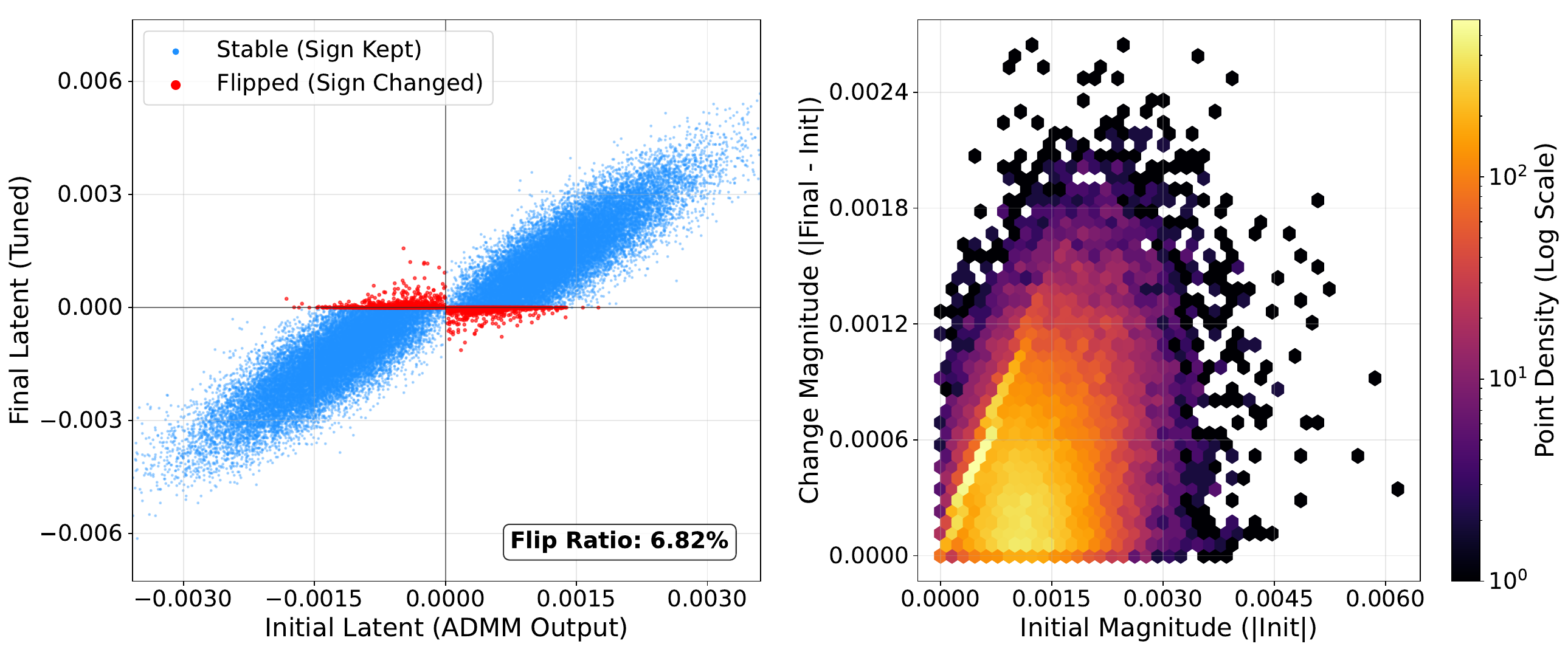}
        \caption{\texttt{mlp.gate\_proj} (Flip: 6.82\%)}
        \label{fig:mobility_gate}
    \end{subfigure}
    \hfill
    \begin{subfigure}[b]{0.49\textwidth}
        \centering
        \includegraphics[width=\linewidth]{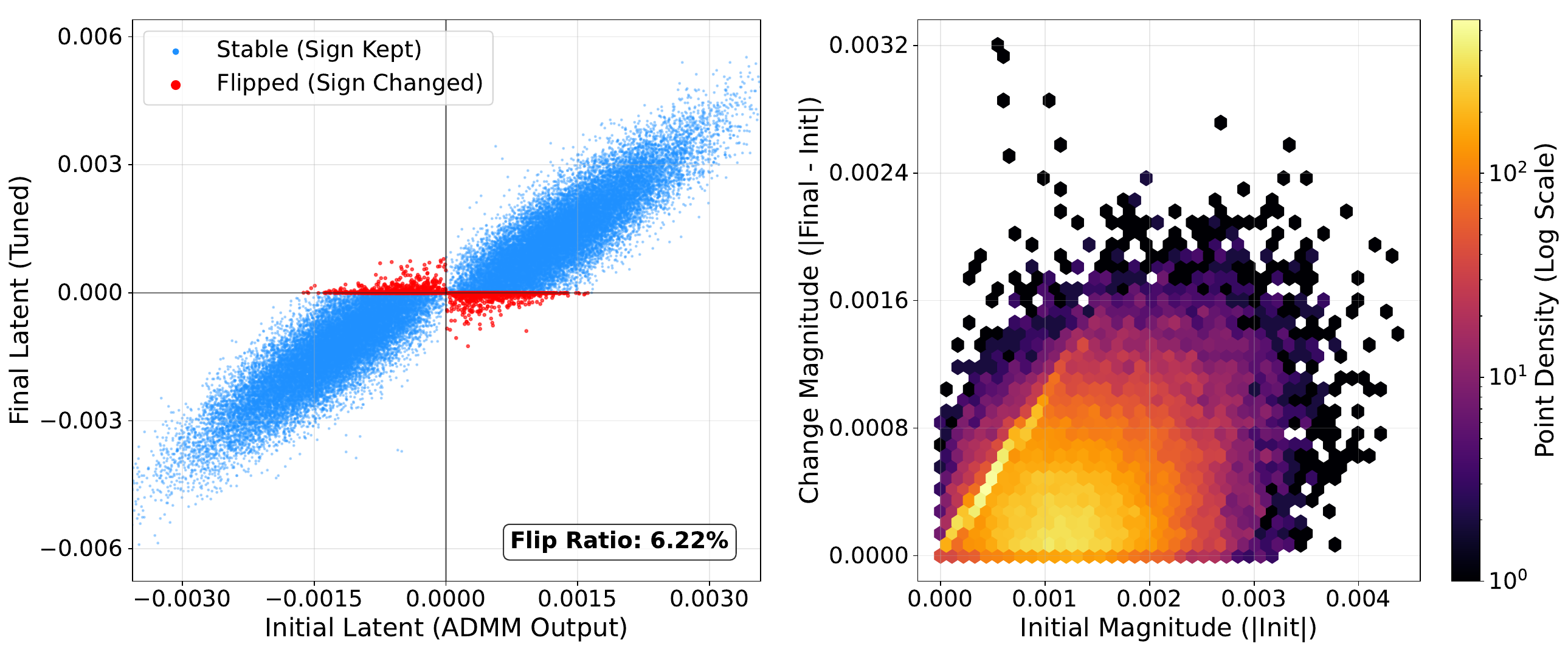}
        \caption{\texttt{mlp.up\_proj} (Flip: 6.22\%)}
        \label{fig:mobility_up}
    \end{subfigure}
    
    \begin{subfigure}[b]{0.49\textwidth}
        \centering
        \includegraphics[width=\linewidth]{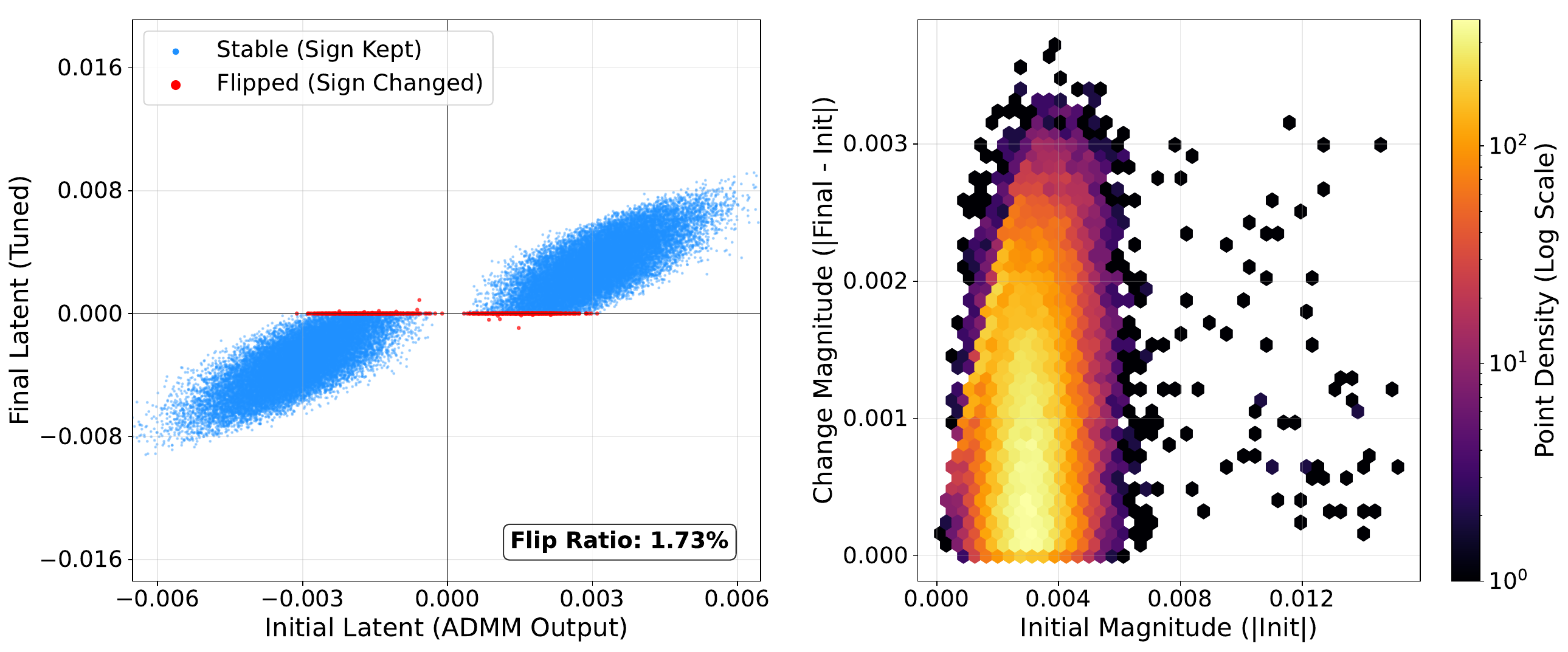}
        \caption{\texttt{mlp.down\_proj} (Flip: 1.73\%)}
        \label{fig:mobility_down}
    \end{subfigure}
    
    \caption{
    Visualization of latent variable dynamics between the initialization (LB-ADMM) and refinement (STE Tuning) phases for Llama-3.2-1B (Block 0).
    Blue points denote weights that retained their sign, while red points denote sign flips.
    The density plots in each panel illustrate that sign flips and large magnitude updates are concentrated around weights with near-zero initial magnitude.
    This demonstrates that the refinement step selectively optimizes decision boundaries for features with high uncertainty.
    }
    \label{fig:latent_mobility}
\end{figure*}

\clearpage

\subsection{ADMM Ablation Experiments}

We ablate two ADMM optimization choices on block 0 of Gemma 4 31B: the number of outer iterations and the penalty scheduling strategy. The results show a clear speed-accuracy trade-off. Reducing the number of ADMM iterations gives faster convergence but leads to a higher final reconstruction error.

\begin{figure}[H]
    \centering
    \captionsetup{font=small,skip=1pt}
    \captionsetup[subfigure]{font=footnotesize,skip=1pt}

    \begin{subfigure}{\linewidth}
        \centering
        \includegraphics[
            width=0.90\linewidth,
            height=0.50\textheight,
            keepaspectratio
        ]{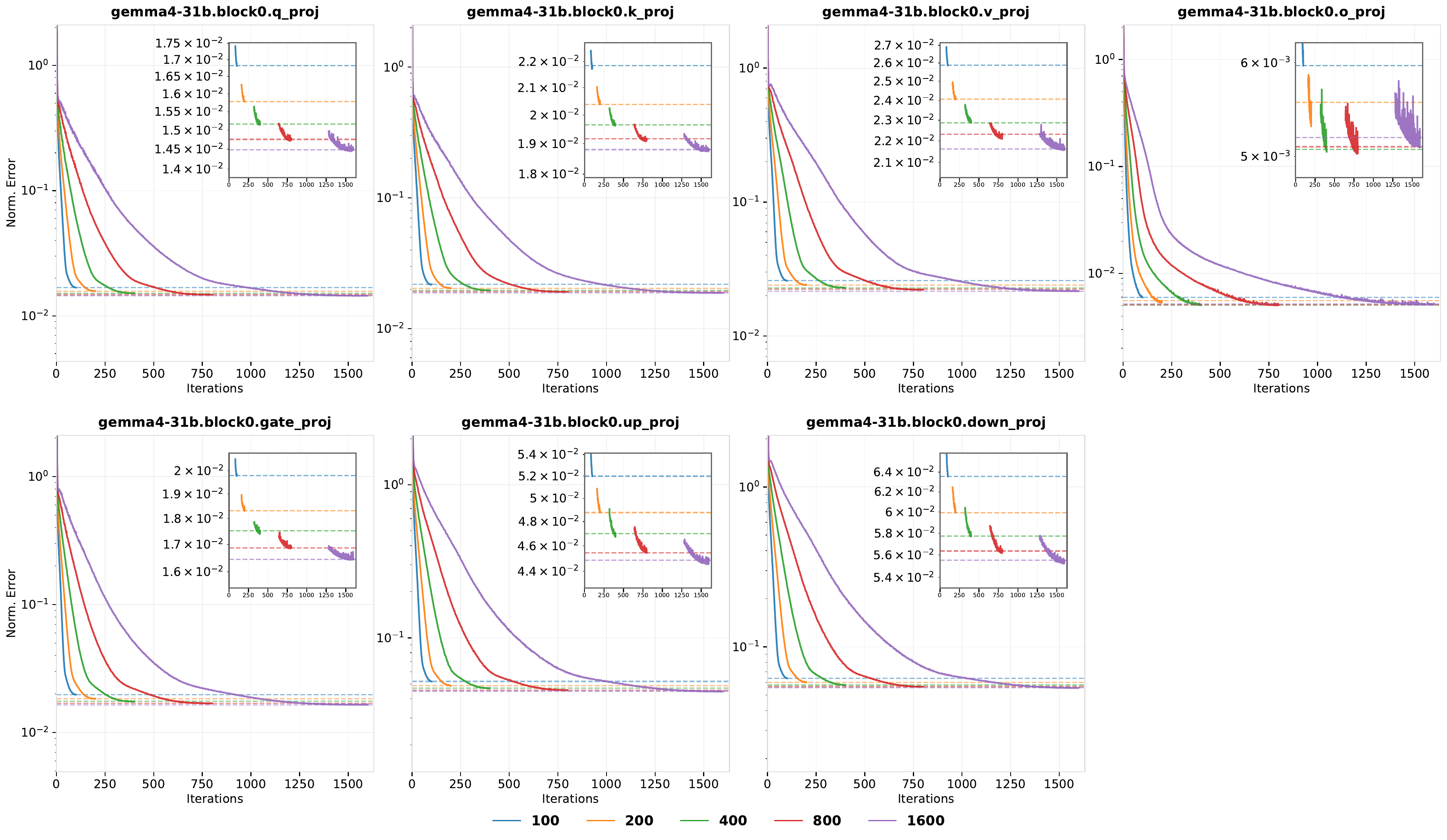}
        \caption{
        Effect of ADMM outer iterations. Fewer iterations reduce optimization cost but produce higher final reconstruction error.
        }
        \label{fig:admm_outer_iters}
    \end{subfigure}
    
    \vspace{1.0em}
    
    \begin{subfigure}{\linewidth}
        \centering
        \includegraphics[
            width=0.90\linewidth,
            height=0.50\textheight,
            keepaspectratio
        ]{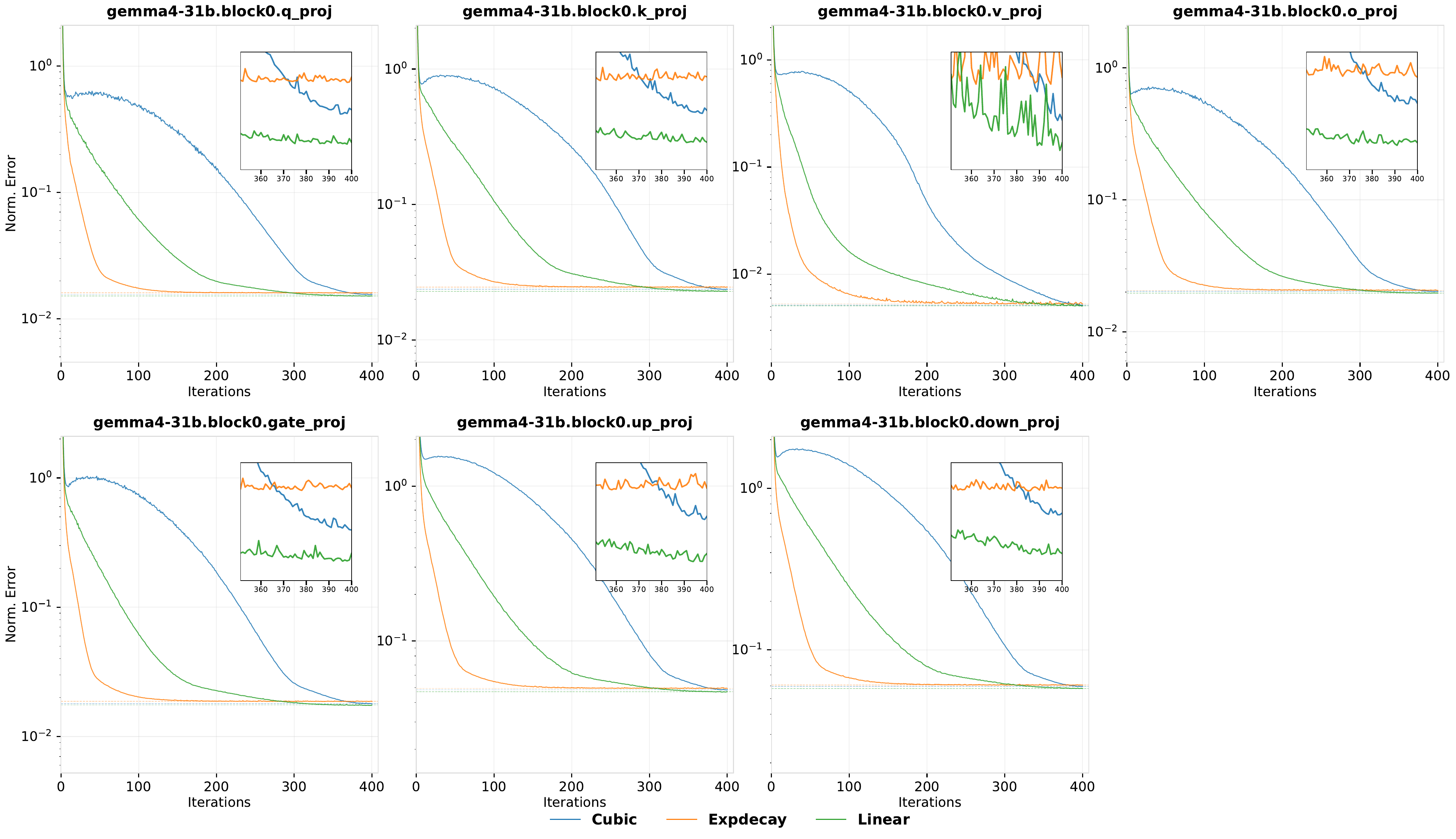}
        \caption{
        Effect of ADMM penalty scheduling. Linear scheduling converges more slowly but achieves lower final error than more aggressive schedules.
        }
        \label{fig:admm_penalty_scheduler}
    \end{subfigure}


    \caption{
    ADMM ablations on Gemma-4-31B block 0. The number of outer iterations controls the optimization cost, while the penalty scheduler controls the convergence profile and final reconstruction quality.
    }
    \label{fig:admm_ablation_experiments}
\end{figure}

\newpage
\section{Inference Ablations}
\label{appendix:kernel_impl}

\subsection{Kernel Benchmarking Details}
We benchmark custom CUDA kernels using torch.compile from torch 2.6.0 \cite{pytorch} and \texttt{StaticCache} from the transformers library \cite{transformers}, with CUDA 12.4.
The decoding script is based on the open-source implementation for QTIP \cite{qtip}, and is used for all kernel evaluations.
For GEMV decoding (batch size = 1), we vary the number of output tokens.   
For GEMM inference, we evaluate performance under increasing batch sizes.
We fix the input tokens to 128, output tokens for batched inference to 512, temperature to 0.8, and the top-k value to 32.
We utilize the open-source \href{https://github.com/ml-energy/zeus}{\texttt{ml-energy/zeus}} library for all energy measurements.

We benchmark our kernels on 4 different GPUs, as listed in \autoref{tab:hardware_specs}.
Notably, we test on high-end GPUs, a consumer GPU, and an edge device with \textit{no NVIDIA Tensor Cores}, to test the decoding and efficiency limits of our custom binary CUDA kernels.

\begin{table}[htbp]  
\caption{Hardware specifications of devices we benchmark our custom GPU kernels on.}    
\label{tab:hardware_specs}  
\centering  
\begin{tabular}{lccccc}
        \toprule  
        \multirow{2}{*}{Device} & \multicolumn{3}{c}{Memory} & \multicolumn{2}{c}{Compute} \\      
        \cmidrule(lr){2-4} \cmidrule(lr){5-6}    
        & GPU Memory (GB) & Type & Bandwidth (GB/s) & CUDA Cores & Tensor Cores \\   
        \midrule  
        NVIDIA Jetson TX2   & 8     & LPDDR4    & 59.7  & 256     & 0     \\  
        NVIDIA RTX 3050     & 8     & GDDR6     & 224   & 2,560     & 80    \\  
        NVIDIA A100 SXM     & 80    & HBM2e     & 2039  & 6,912     & 432   \\  
        NVIDIA H100 PCIe    & 80    & HBM2e     & 2000  & 14,592    & 456   \\  
        \bottomrule    
    \end{tabular}
\end{table}  

\subsection{Binary GEMV Inference}

\begin{figure}[h]
    \centering
     \includegraphics[width=0.8\linewidth]{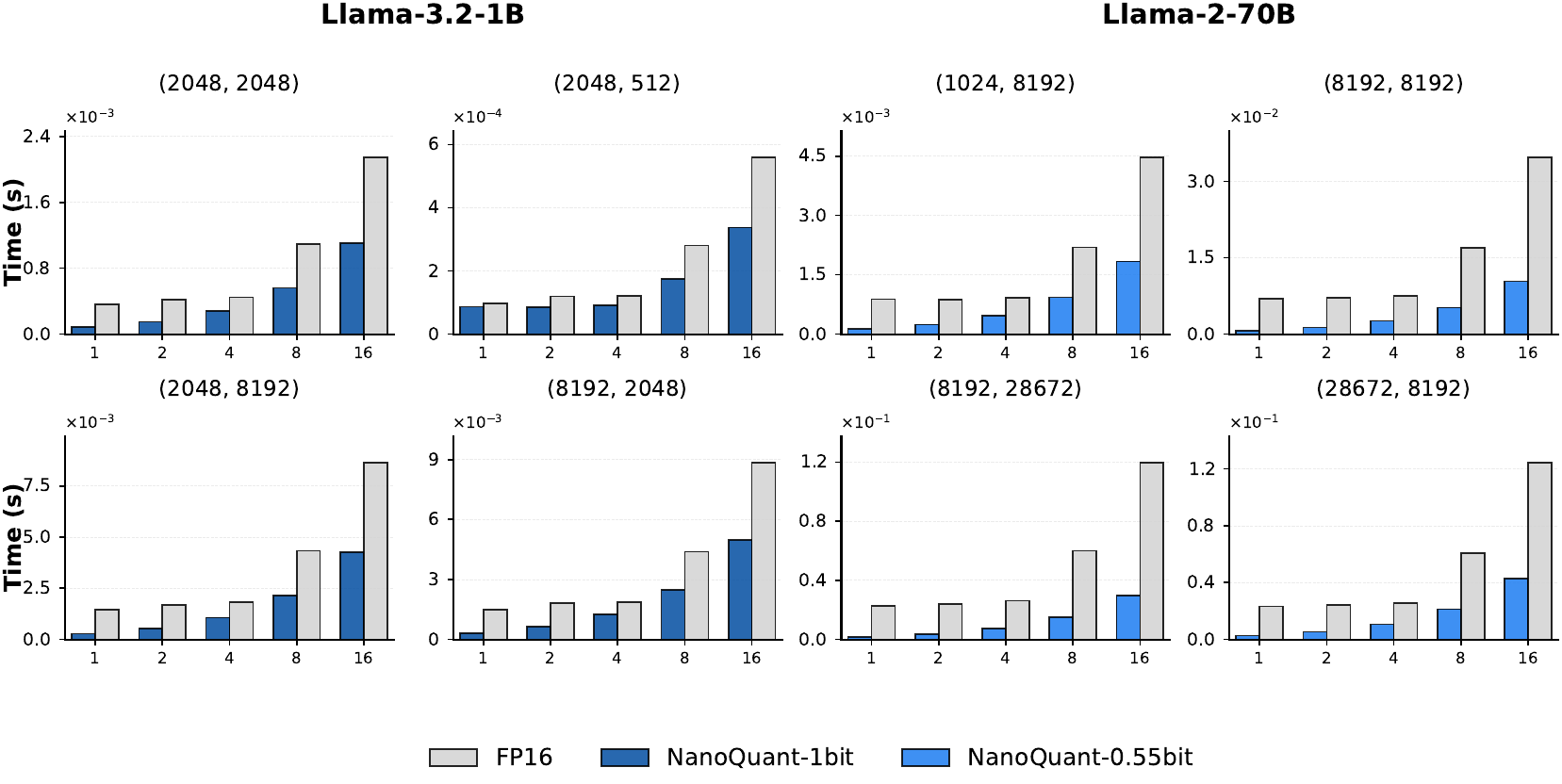}
    \caption{
    On the NVIDIA Jetson TX2, our custom GEMV kernels show significantly faster inference speeds than PyTorch FP16 for various matrix shapes, even for multiple vector batches.
    }
    \label{fig:jetson_gemv}
\end{figure}

\begin{table*}[h]
\centering
\caption{
Throughput (tokens/s) and peak memory (GB) for varying sequence lengths, for Llama-2 models compressed to 0.55 bits with \name.
Extreme compression with \name enables fast and memory-efficient inference on an NVIDIA RTX 3050 (8 GB).
}
\label{tab:llama2_tps_mem_vs_seqlen}
\setlength{\tabcolsep}{4pt}
\renewcommand{\arraystretch}{1.15}
\begin{tabular}{cccccccc}
\toprule
\multirow{2}{*}{Model} & \multirow{2}{*}{Metric} & \multicolumn{6}{c}{Sequence Length} \\
\cmidrule(lr){3-8}
 & & 32 & 64 & 128 & 256 & 512 & 1024 \\
\midrule
\multirow{2}{*}{Llama-2-7B}  & Tokens/s        & 134.10 & 133.40 & 127.04 & 122.52 & 108.44 & 86.27 \\
                            & Peak Mem (GB)    & 1.07 & 1.09 & 1.12 & 1.19 & 1.32 & 1.59 \\
\midrule
\multirow{2}{*}{Llama-2-13B} & Tokens/s        & 83.83 & 83.35 & 81.43 & 75.32 & 65.31 & 51.63 \\
                            & Peak Mem (GB)    & 1.70 & 1.73 & 1.78 & 1.88 & 2.09 & 2.57 \\
\midrule
\multirow{2}{*}{Llama-2-70B} & Tokens/s        & 20.11 & 19.74 & 19.18 & 17.68 & 15.37 & 12.13 \\
                            & Peak Mem (GB)    & 5.86 & 5.87 & 5.89 & 5.93 & 6.02 & 6.20 \\
\bottomrule
\end{tabular}
\end{table*}

\paragraph{GEMV CUDA kernel details.}

The GEMV kernel implements a two‑stage, 1‑bit quantized matrix–vector multiplication for \texttt{float16} and \texttt{bfloat16} tensors. In each stage, the input (or intermediate) vector is multiplied by a weight matrix whose signs are stored as a packed bitfield: each weight occupies a single bit in a \texttt{uint32} array. Because the binary matrices are low‑rank, the effective reduction in memory traffic is typically less than the theoretical 16×, but the packing still yields a substantial bandwidth saving. During execution the bits are unpacked on‑the‑fly with a lightweight mask operation, after which fused‑multiply‑add (FMA) is performed using vectorized \texttt{float16} or \texttt{bfloat16} intrinsics that process two low‑precision values per instruction. Per‑column scaling factors are incorporated directly into the FMA, and an optional per‑row scaling is applied before writing the intermediate or final result. The kernel does not invoke Tensor‑Core WMMA operations; instead it relies on standard FP16 arithmetic, without using NVIDIA Tensor Cores.

\paragraph{Results.}
We first benchmark our binary GEMV CUDA kernels against PyTorch BF16 and 2 state-of-the-art vector quantization methods, QTIP \cite{qtip} and AQLM \cite{aqlm}, on a single NVIDIA H100 GPU, as shown in \autoref{fig:h100_gemv_nano_vs_vq}.
To fully encompass the performance of the kernels, we measure the throughput (tokens per second), peak allocated GPU memory, and average energy per token during the single batch, end-to-end decoding process.
We utilize open-source models provided by QTIP and AQLM, and utilize models from the Llama-2 \cite{llama2} and Llama-3 \cite{llama3} families.

Next, we benchmark our binary GEMV CUDA kernels on an NVIDIA Jetson TX2, which does not have any NVIDIA Tensor Cores, as in \autoref{tab:hardware_specs}.
We find that the extreme compression capability of \name enables up to a $12.2\times$ speedup in inference throughput, compared to PyTorch FP16, as shown in \autoref{fig:jetson_gemv}.

\subsection{Binary GEMM Inference}
\paragraph{GEMM Kernel Details}
\label{appendix:gemm_details}

\begin{figure}[h]
    \centering
     \includegraphics[width=0.6\linewidth]{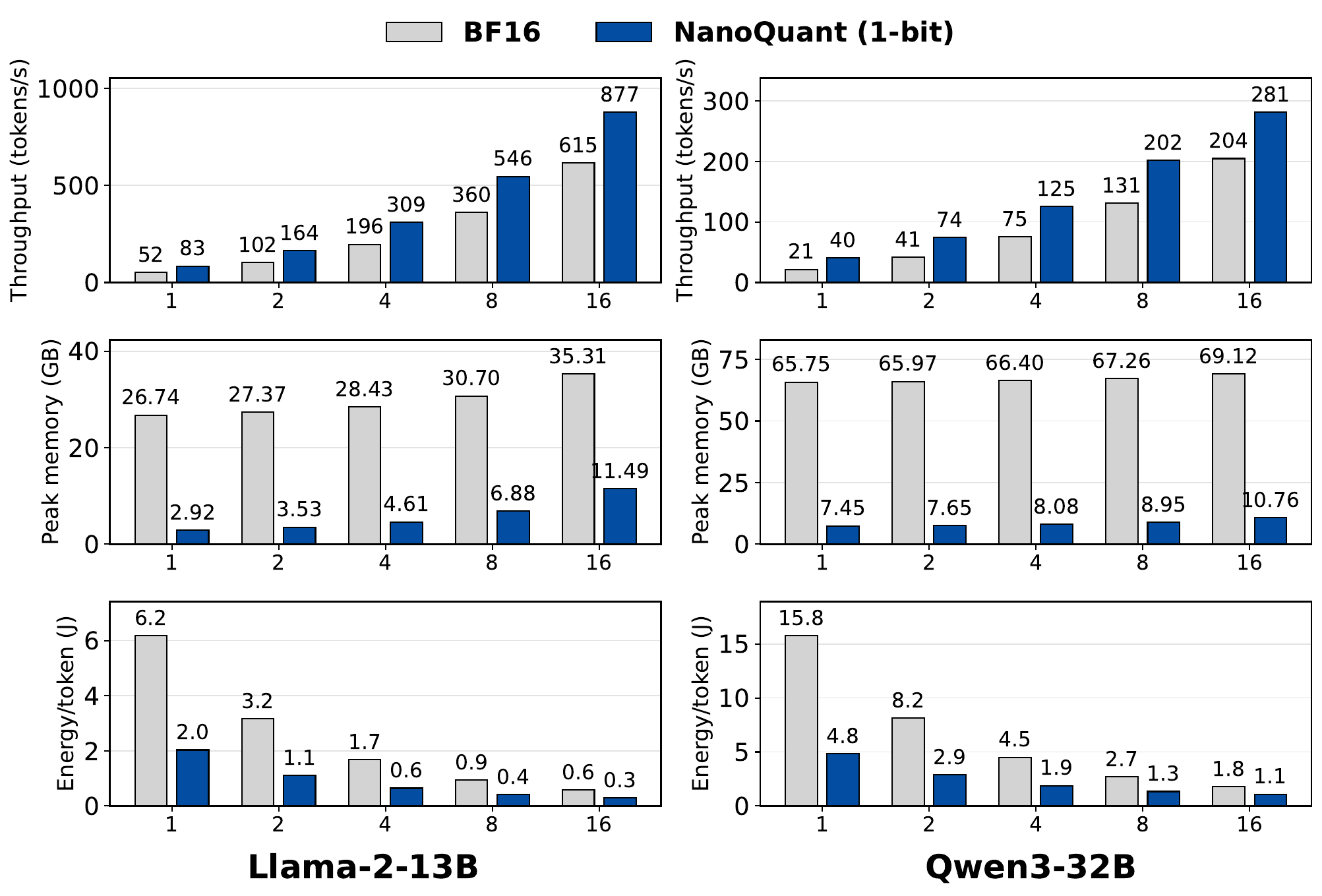}
    \caption{Custom GEMM kernels for \name achieve competitive batched inference performance with \texttt{BF16} PyTorch on a single NVIDIA A100 (80 GB) GPU.}
    \label{fig:a100_inference}
\end{figure}

Our GEMM kernel is a highly optimized CUDA GEMM implementation specifically designed for efficient low-rank binary matrix multiplication in quantized neural networks.
We base our implementation on the Marlin GEMM kernel \cite{marlin}, which leverages NVIDIA Tensor Cores for matrix multiplication through inline PTX assembly, processing matrix (\eg $16 \times 8 \times 16$) tiles with \texttt{mma.sync} operations.
The kernel employs a multi-stage pipeline (default 4 stages) with asynchronous memory operations (\texttt{cp.async}) to overlap data transfers with computation, effectively hiding memory latency.
It efficiently handles binary matrices by packing multiple 1-bit values into 32-bit words and using bit manipulation for fast dequantization, while maintaining computation in FP16/BF16 precision for accuracy.

While GEMV excels in low-batch, memory-bound scenarios, large-scale deployments benefit from batched GEMM operations that saturate tensor core throughput.
Therefore, Binary GEMM kernels are necessary for compute-bound LLM serving operations, especially for datacenter GPUs to fully utilize matrix-multiplication computation units, such as NVIDIA Tensor Cores.
The pipelined execution keeps compute units busy by overlapping the data loading, computation, and storing phases, while the warp-level parallelism with optimized thread scheduling maximizes GPU utilization.
These optimizations result in high arithmetic intensity and efficient use of hardware resources, making the kernel particularly well-suited for quantized neural network inference where binary low-rank matrices significantly decrease memory requirements and bandwidth usage without sacrificing model accuracy.

\subsection{Binary GPU Kernel Performance}

We compare the decoding performance of our binary GPU kernels with GemLite \cite{gemlite}, a state-of-the-art GPU kernel library that supports 1-bit kernels. As shown in \autoref{fig:h100_gemv_gemlite} and \autoref{fig:a100_gemv_gemlite}, our custom binary GEMV kernels outperform both BF16 and GemLite, with noNVIDIA Tensor Core free decoding operations.

\begin{figure*}[h]
    \centering
     \includegraphics[width=1.0\linewidth]{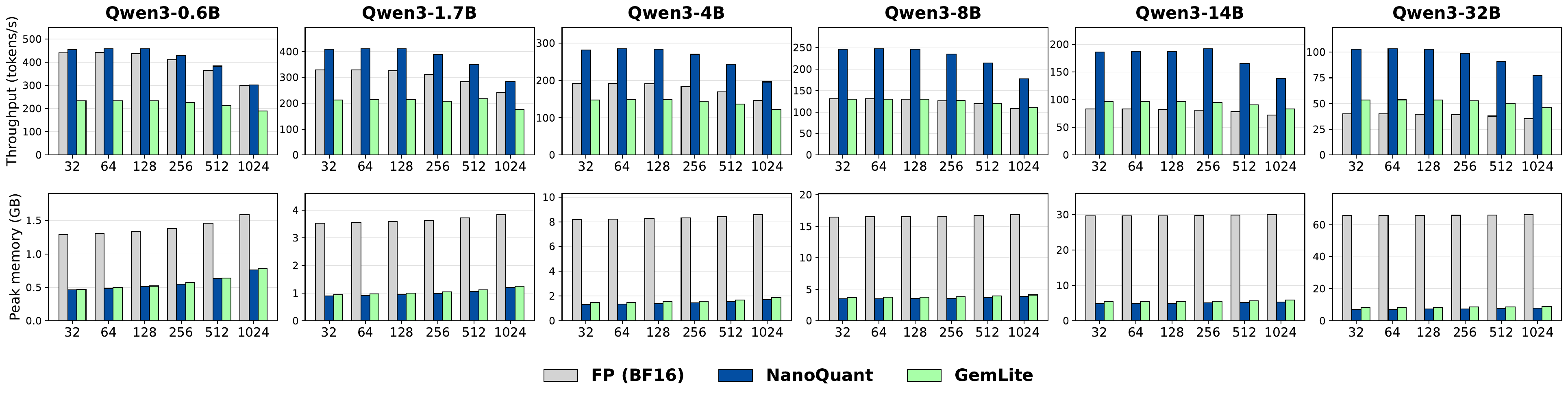}
    \caption{
    LLM decoding performance of \name using our custom binary kernels and binary kernels from GemLite \cite{gemlite}, compared with PyTorch BF16 on 1 NVIDIA H100 GPU.
    }
    \label{fig:h100_gemv_gemlite}
    \vspace{-5px}
\end{figure*}

\begin{figure*}[h]
    \centering
     \includegraphics[width=1.0\linewidth]{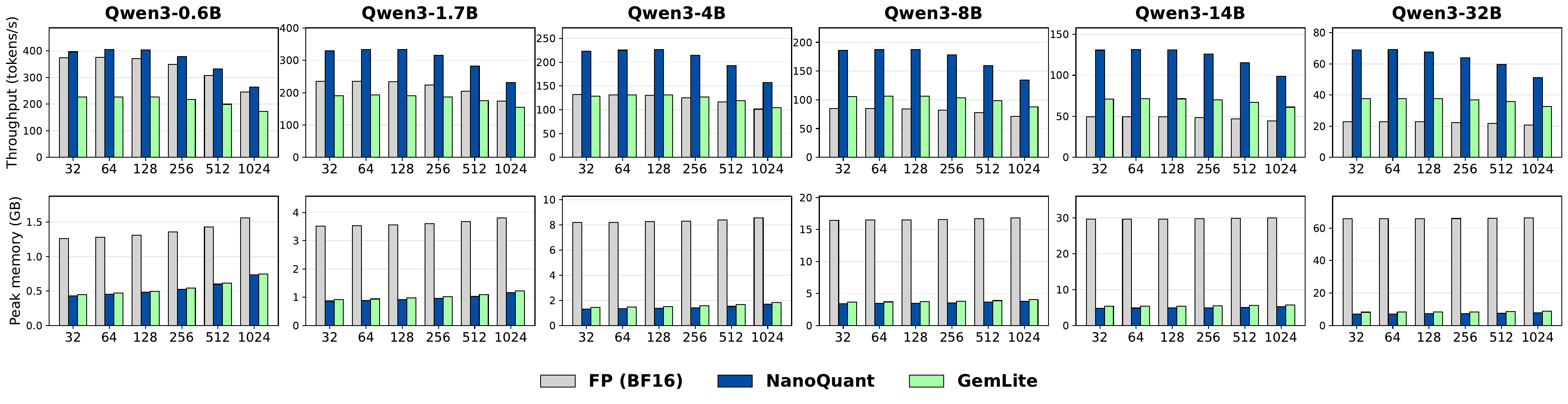}
    \caption{
    LLM decoding performance of \name using our custom binary kernels and binary kernels from GemLite \cite{gemlite}, compared with PyTorch BF16 on 1 NVIDIA A100 GPU.
    }
    \label{fig:a100_gemv_gemlite}
    \vspace{-5px}
\end{figure*}

\newpage

\section{Detailed Model Size Analysis of Binary Weight Quantization Methods}
\label{appendix:model_size}

We analyze the storage cost of binary weight representation methods, encompassing both post-training quantization (PTQ) and quantization-aware training (QAT) via low-rank factorization. We count all stored bits including binarized weights (or binary factor matrices), reconstruction coefficients (typically FP16), and any auxiliary flags or bitmaps required to accurately decode the quantized model. This allows for a fair assessment of the memory requirements across diverse binary compression paradigms.

\subsection{Unified Storage Metric}
\label{sec:bpw_metric}

\paragraph{Bits Per Weight (BPW).}
Let $B_{\mathrm{W_Q}}$ and $B_{\mathrm{W_{FP}}}$ be the total number of bits required to represent quantized weights and full-precision weights, respectively. We derive $\mathrm{BPW}$, the average number of bits per full-precision weight in the quantized model, as:
\begin{equation}
\label{eq:bpw_general}
\mathrm{BPW} = \frac{B_{\mathrm{W_Q}}}{B_{\mathrm{W_{FP}}}} \times 16.
\end{equation}

For instance, if we binarize all weight values in a full-precision matrix $\mathbf{W}_{\text{FP}} \in \mathbb{R}^{n \times m}$ to $\pm1$ without additional metadata, the BPW value would be $\frac{nm}{16nm}\times 16 = 1$.

If we assume $\mathrm{W_Q}$ is a two-dimensional matrix, $B_{\mathrm{W_{FP}}} = 16mn$, and thus $ \mathrm{BPW} = \frac{B_{\mathrm{W_Q}}}{16mn} \times 16 = \frac{\mathrm{B_{W_{Q}}}}{mn} $.

\subsection{Overview of Memory Requirements of Binary Quantization Methods}

\paragraph{Notation.}
We consider a weight matrix $\mathbf{W} \in \mathbb{R}^{n \times m}$, where $n$ is the number of rows and $m$ is the number of columns. We define $k$ as the block size (typically 128), and $c$ as the number of salient columns. The number of blocks per row is denoted as $\lceil m/k \rceil$. We assume reconstruction scales and means are stored in FP16 (16 bits). Additionally, $m$ denotes the storage cost for the salient column bitmap (often compressed).

\paragraph{Methods.}
We analyze the memory requirements of state-of-the-art binary PTQ methods (BiLLM \cite{bi_llm}, ARB-LLM \cite{arb_llm}, STBLLM \cite{stb_llm}, HBLLM \cite{hb_llm}), binary QAT methods using low-rank binary matrices (DBF \cite{dbf}, LittleBit \cite{littlebit}), and \name. 

\subsection{Binary PTQ Methods}

\paragraph{BiLLM.}
\label{sec:bpw_billm}

BiLLM \cite{bi_llm} partitions the weight matrix into salient and non-salient parts. It employs second-order binarization for salient columns and first-order binarization with two quantization groups for non-salient columns. Based on the analysis in \cite{hb_llm}, the total memory requirement $\mathcal{M}_{\textsc{BiLLM}}$ is formulated as:

\begin{align}
\mathcal{M}_{\text{BiLLM}}
=&~ 
\underbrace{2nc + \left\lceil m/k \right\rceil \times 3n \times 16}_{\text{Second-order binarization (Salient)}} \nonumber \\
&+ \underbrace{n(m-c) + \left\lceil m/k \right\rceil \times 2n \times 16 \times 2}_{\text{First-order binarization (Non-salient, 2 groups)}} \nonumber \\
&+ \underbrace{nm}_{\text{Non-salient group bitmap}} + \underbrace{m}_{\text{Salient column bitmap}} \nonumber\\
=&~
n(2m+c) + m + 112n \left\lceil m/k \right\rceil,
\label{eq:mbillm}
\end{align}
where $c$ is the number of salient columns. The term $3n$ in the second-order part accounts for parameters $\alpha_1, \alpha_2$ and the combined mean $\mu$.

With $\mathcal{M}_{\textsc{BiLLM}}$, we can derive the BPW equation as

\begin{align}
\text{BPW}_{\text{BiLLM}}
=&~
\frac{\mathcal{M}_{\textsc{BiLLM}}}{mn}
=
2 + \frac{nc + m + 112n \left\lceil m/k \right\rceil}{mn}.
\end{align}

\paragraph{STBLLM}
\label{sec:bpw_stbllm}

STBLLM \cite{stb_llm} extends the BiLLM framework by introducing $N:M$ sparsity and finer-grained grouping. Unlike BiLLM, which uses 2 groups, STBLLM categorizes non-salient weights into 3 groups (sparse, intermediate, dense) using a trisection search, requiring a 2-bit group bitmap per stored element.

Additionally, STBLLM employs $N:M$ structured sparsity (\eg 4:8 or 6:8) for the non-salient weights. This requires storing the indices of the non-zero elements. For a standard $N:M$ pattern, the index storage $\mathcal{M}_{\text{Indices}}$ is determined by the combinatorics of choosing $N$ positions out of $M$, typically $\lceil \log_2 \binom{M}{N} \rceil$ bits per block of $M$ weights.

The total memory requirement $\mathcal{M}_{\textsc{STBLLM}}$ is formulated as:

\begin{align}
\mathcal{M}_{\textsc{STBLLM}} &= \underbrace{2nc + \lceil m/k \rceil \cdot 3n \cdot 16}_{\text{Second-order binarization (Salient)}} \nonumber \\
&+ \underbrace{\frac{N}{M} \left[ n(m-c) + 2nm \right]}_{\text{Binarized non-zero weights and Group bitmap}} \nonumber \\
&+ \underbrace{\frac{n(m-c)}{M} \cdot \left\lceil \log_2 \binom{M}{N} \right\rceil}_{\text{Sparsity Indices (Metadata)}} \nonumber \\
&+ \underbrace{\lceil m/k \rceil \cdot 2n \cdot 16 \cdot 3}_{\text{FP16 scales/means (3 groups)}} + \underbrace{m}_{\text{Salient column bitmap}}
\label{eq:stbllm_mem_total}
\end{align}

where $n$ and $m$ are the matrix dimensions, $c$ is the number of salient columns, and $k$ is the block size. Dividing by the total original parameters $mn$ yields the Bit-Width Per Weight (BPW) equation:

\begin{align}
\text{BPW}_{\textsc{STBLLM}} &= \frac{\mathcal{M}_{\textsc{STBLLM}}}{mn} \nonumber \\
&= \frac{N}{M} \left( 1 - \frac{c}{m} + 2 \right) + \frac{2c}{m} + \frac{1}{M}\left( 1 - \frac{c}{m} \right) \left\lceil \log_2 \binom{M}{N} \right\rceil \nonumber \\
&+ \frac{144n \lceil m/k \rceil + m}{mn}
\label{eq:stbllm_bpw_final}
\end{align}

\paragraph{ARB-LLM}
\label{sec:bpw_arb}

ARB-LLM \cite{arb_llm} utilizes alternating refined binarization. We analyze the storage for the $\text{ARB-LLM}_{\textsc{RC}}$ variant, as derived in \cite{arb_llm}. This method applies second-order binarization to both salient and non-salient parts using 2 groups:

\begin{align}
\mathcal{M}_{\text{ARBLLM-RC}}
=&~ 
\underbrace{2nc + \left(\left\lceil m/k \right\rceil \times 2n + 2c \right) \times 16}_{\text{Second-order binarization (Salient, 2 groups)}} \nonumber \\
&+ \underbrace{n(m-c) + \left(\left\lceil m/k \right\rceil \times n + (m-c)\right) \times 16 \times 2}_{\text{First-order binarization (Non-salient, 2 groups)}} \nonumber \\
&+ \underbrace{nm}_{\text{Group bitmap}} + \underbrace{m}_{\text{Salient column bitmap}} \nonumber\\
=&~
n(2m+c) + 33m + 64n \left\lceil m/k \right\rceil.
\label{eq:arbllm_rc_mem}
\end{align}

With $\mathcal{M}_{\textsc{ARBLLM-RC}}$, we can derive the BPW equation as 

\begin{align}
\label{eq:arbrc_bpw}
\text{BPW}_{\textsc{ARBLLM-RC}} = \frac{\mathcal{M}_{\textsc{ARBLLM-RC}}}{mn} = 2 + \frac{nc + 33m +  64n \left\lceil m/k \right\rceil}{mn}.
\end{align}

\paragraph{HBLLM}
\label{sec:bpw_hbllm}

HBLLM \cite{hb_llm} introduces structure-aware grouping with two primary variants: HBLLM-row and HBLLM-col.

HBLLM-row employs a neighborhood averaging strategy for non-salient weights and utilizes four subgroups per row for coefficients:
\begin{align}
\mathcal{M}_{\textsc{HBLLM-row}} =&~ 
\underbrace{nm + \left\lceil m/k \right\rceil \times 3n \times 16 \times 2}_{\text{Non-salient weights (2 groups)}} \nonumber \\
&+ \underbrace{nc + \left\lceil m/k \right\rceil \times 2n \times 16 \times 2}_{\text{Salient weights (2 groups)}} \nonumber \\
&+ \underbrace{n(m+c)}_{\text{Group bitmap}} + \underbrace{m}_{\text{Salient column bitmap}} \nonumber\\
=&~
2n(m+c) + m + 160n \left\lceil m/k \right\rceil.
\label{eq:mhbllm_row}
\end{align}

We can derive the BPW equation as

\begin{align}
\label{eq:hbllm_row_bpw}
\text{BPW}_{\text{HBLLM-row}} = \frac{\mathcal{M}_{\textsc{HBLLM-row}}}{mn} = 2 + \frac{2nc + m + 160n \left\lceil m/k \right\rceil}{mn}.
\end{align}

HBLLM-col shares subgroups across two rows and applies intra-band mean sharing, reducing the coefficient overhead:
\begin{align}
\mathcal{M}_{\textsc{HBLLM-col}} =&~ 
\underbrace{n(m-c) + \left\lceil m/k \right\rceil \times 1.5n \times 16 \times 2}_{\text{Non-salient weights (2 groups)}} \nonumber \\
&+ \underbrace{nc + \left\lceil m/k \right\rceil \times 2n \times 16 \times 2}_{\text{Salient weights (2 groups)}} \nonumber \\
&+ \underbrace{nm}_{\text{Group bitmap}} + \underbrace{m}_{\text{Salient column bitmap}} \nonumber\\
&=~
2nm + m + 112n \left\lceil m/k \right\rceil.
\label{eq:mhbllm_col}
\end{align}

We can derive the BPW equation as

\begin{align}
\label{eq:hbllm_col_bpw}
\text{BPW}_{\text{HBLLM-col}} = \frac{\mathcal{M}_{\textsc{HBLLM-col}}}{mn} = 2 + \frac{m + 112n \left\lceil m/k \right\rceil}{mn}.
\end{align}

\subsection{Binary QAT Methods Using Low-Rank Binary Matrices}

\paragraph{DBF and LittleBit}
\label{sec:bpw_dbf_lb}

Double Binary Factorization (DBF) \cite{dbf} and LittleBit \cite{littlebit} approximate the weight matrix $\mathbf{W}$ as:
\begin{equation}
\label{eq:dbf_factorization}
\mathbf{W} \approx \widehat{\mathbf{W}} = \mathrm{Diag}(\mathbf{s}_1)\, \big(\mathbf{U}_{\pm 1}\,\mathrm{Diag}(\mathbf{s}_{\text{mid}})\,\mathbf{V}_{\pm 1}^\top\big)\, \mathrm{Diag}(\mathbf{s}_2),
\end{equation}
where $\mathbf{U}_{\pm1}\in\{\pm1\}^{n\times r}$ and $\mathbf{V}_{\pm1}\in\{\pm1\}^{m\times r}$ are stored as 1-bit entries, and the scales $\mathbf{s}_1\in\mathbb{R}^n$, $\mathbf{s}_{\text{mid}}\in\mathbb{R}^r$, $\mathbf{s}_2\in\mathbb{R}^m$ are stored in FP16. The total storage is:
\begin{equation}
\label{eq:dbf_bits}
\mathcal{M}_{\textsc{DBF}} = r(n+m) + 16(n+r+m).
\end{equation}

We can derive the BPW equation as 

\begin{align}
\label{eq:dbf_col_bpw}
\text{BPW}_{\text{DBF}} = \frac{r(n+m) + 16(n+r+m)}{mn}
\end{align}

\subsection{\name\ (Ours)}
\label{sec:bpw_ours}

Our method simplifies the factorization structure by removing the rank-wise scale $\mathbf{s}_{\text{mid}}$ via a 2-scale system:
\begin{equation}
\label{eq:ours_factorization}
\mathbf{W} \approx \widehat{\mathbf{W}} = \mathrm{Diag}(\mathbf{s}_1)\,\mathbf{U}_{\pm 1}\,\mathbf{V}_{\pm 1}^\top\,\mathrm{Diag}(\mathbf{s}_2).
\end{equation}
The total storage required is:
\begin{equation}
\label{eq:ours_bits}
\mathcal{M}_{\name} = r(n+m) + 16(n+m).
\end{equation}
This reduction in scalar overhead contributes to a lower BPW compared to DBF at the same rank $r$.

We can derive the BPW equation as 

\begin{align}
\label{eq:dbf_col_bpw}
\text{BPW}_{\text{\name}} = \frac{r(n+m) + 16(n+m)}{mn}
\end{align}

\newpage

\subsection{Compression Comparison}
\label{sec:bpw_compare}

\paragraph{Compressed Model Comparison.}
To evaluate the compression capability of each quantization method, we compute the $\mathrm{BPW}_{\text{model}}$ for a model containing $L$ linear layers $\{\mathbf{W}_\ell\}_{l=1}^{L}$ in LLM decoder blocks, with dimensions $n_\ell \times m_\ell$. The total memory bits are given by $\mathcal{M}_{\text{total}} = \sum_{\ell=1}^{L} \mathcal{M}_\ell$, where $\mathcal{M}_\ell$ is calculated using the formulas of each respective method. The effective bits per weight is:

\begin{equation}
\label{eq:total_bits}
\mathrm{BPW}_{\text{model}} = \frac{\sum_{\ell=1}^{L} \mathcal{M}_\ell}{\sum_{\ell=1}^{L} n_\ell m_\ell}.
\end{equation}

Notably, all open-source implementations of the baseline binary PTQ methods have a maximum value of 50 salient columns ($c \leq 50$) and a unified block size value of $k=128$. With these constraints, we can derive the theoretical lower and upper bounds of the compression rate of all baselines.

\begin{table}[htbp]
    \centering
    \caption{Upper and lower bounds of quantized model size (GB) for 1-bit \name and various binary post-training quantization baseline methods, represented as (min, max).}
    \label{tab:model-size}
    \small
    \setlength{\tabcolsep}{3pt} 
    \resizebox{0.95\linewidth}{!}{
        \begin{tabular}{l c c c c c c c c}
            \toprule
            \textbf{Model} & \textbf{BF16} & \textbf{\name} & \textbf{BiLLM} & \textbf{STBLLM$_{4:8}$} & \textbf{STBLLM$_{6:8}$} & \textbf{STBLLM$_{8:8}$} & \textbf{ARB-LLM$_{\text{RC}}$} & \textbf{HBLLM$_{\text{R}}$} \\
            \midrule
            L2-7 & 13.48 & 1.33 & (2.85, 2.86) & (3.36, 3.36) & (3.76, 3.77) & (3.86, 3.87) & (2.55, 2.56) & (3.16, 3.17) \\
            L2-13 & 26.03 & 2.24 & (5.22, 5.23) & (6.21, 6.22) & (7.00, 7.01) & (7.20, 7.21) & (4.63, 4.64) & (5.81, 5.84) \\
            L2-70 & 137.95 & 9.58 & (25.65, 25.69) & (31.00, 31.03) & (35.28, 35.30) & (36.35, 36.39) & (22.47, 22.51) & (28.86, 28.94) \\
            \midrule
            L3-1 & 2.47 & 0.65 & (1.40, 1.40) & (1.48, 1.48) & (1.54, 1.54) & (1.55, 1.56) & (1.36, 1.36) & (1.45, 1.45) \\
            L3-3 & 6.43 & 1.14 & (2.59, 2.59) & (2.81, 2.81) & (2.99, 2.99) & (3.03, 3.04) & (2.46, 2.47) & (2.72, 2.73) \\
            L3-8 & 16.06 & 2.97 & (4.61, 4.62) & (5.16, 5.16) & (5.59, 5.60) & (5.70, 5.71) & (4.29, 4.30) & (4.94, 4.96) \\
            L3-70 & 141.11 & 12.73 & (28.81, 28.85) & (34.15, 34.18) & (38.43, 38.46) & (39.50, 39.54) & (25.62, 25.66) & (32.01, 32.10) \\
            \midrule
            G3-1 & 2.00 & 0.69 & (1.46, 1.46) & (1.51, 1.52) & (1.56, 1.56) & (1.57, 1.57) & (1.43, 1.43) & (1.49, 1.50) \\
            G3-4 & 7.76 & 1.74 & (3.84, 3.85) & (4.09, 4.09) & (4.29, 4.29) & (4.34, 4.35) & (3.69, 3.70) & (3.99, 4.00) \\
            G3-12 & 23.53 & 3.35 & (7.90, 7.91) & (8.74, 8.75) & (9.41, 9.42) & (9.58, 9.59) & (7.40, 7.41) & (8.40, 8.43) \\
            G3-27 & 54.02 & 6.00 & (14.84, 14.87) & (16.84, 16.86) & (18.44, 18.46) & (18.84, 18.87) & (13.65, 13.68) & (16.04, 16.09) \\
            \midrule
            Q3-0.6 & 1.19 & 0.37 & (0.78, 0.78) & (0.82, 0.82) & (0.84, 0.84) & (0.85, 0.85) & (0.76, 0.76) & (0.80, 0.81) \\
            Q3-1.7 & 3.44 & 0.76 & (1.75, 1.76) & (1.86, 1.86) & (1.95, 1.95) & (1.97, 1.98) & (1.69, 1.69) & (1.82, 1.83) \\
            Q3-4 & 8.04 & 1.23 & (2.86, 2.87) & (3.15, 3.15) & (3.37, 3.38) & (3.43, 3.44) & (2.70, 2.70) & (3.03, 3.05) \\
            Q3-8 & 16.38 & 3.35 & (4.99, 5.00) & (5.53, 5.53) & (5.96, 5.97) & (6.07, 6.08) & (4.67, 4.68) & (5.31, 5.33) \\
            Q3-14 & 29.54 & 4.76 & (7.86, 7.87) & (8.89, 8.90) & (9.72, 9.73) & (9.93, 9.94) & (7.25, 7.26) & (8.48, 8.51) \\
            \bottomrule
        \end{tabular}
    }
\end{table}

\begin{table}[htbp]
    \centering
    \caption{Upper and lower bounds of bits-per-weight (BPW) for quantized models of various binary post-training quantization baseline methods, represented as (min, max).}
    \label{tab:model-size}
    \small
    \setlength{\tabcolsep}{3pt} 
    \resizebox{0.95\linewidth}{!}{
        \begin{tabular}{l c c c c c c c c}
            \toprule
            \textbf{Model} & \textbf{BF16} & \textbf{\name} & \textbf{BiLLM} & \textbf{STBLLM$_{4:8}$} & \textbf{STBLLM$_{6:8}$} & \textbf{STBLLM$_{8:8}$} & \textbf{ARB-LLM$_{\text{RC}}$} & \textbf{HBLLM$_{\text{R}}$} \\
            \midrule
            L2-7 & 16.00 & 1.00 & (2.88, 2.89) & (3.50, 3.51) & (4.00, 4.01) & (4.13, 4.14) & (2.51, 2.52) & (3.25, 3.27) \\
            L2-13 & 16.00 & 1.00 & (2.88, 2.88) & (3.50, 3.51) & (4.00, 4.01) & (4.13, 4.13) & (2.51, 2.51) & (3.25, 3.27) \\
            L2-70 & 16.00 & 1.00 & (2.88, 2.88) & (3.50, 3.50) & (4.00, 4.00) & (4.13, 4.13) & (2.50, 2.51) & (3.25, 3.26) \\
            \midrule
            L3-1 & 16.00 & 1.00 & (2.88, 2.90) & (3.50, 3.51) & (4.00, 4.01) & (4.13, 4.15) & (2.51, 2.53) & (3.25, 3.29) \\
            L3-3 & 16.00 & 1.00 & (2.88, 2.89) & (3.50, 3.51) & (4.00, 4.01) & (4.13, 4.14) & (2.51, 2.52) & (3.25, 3.28) \\
            L3-8 & 16.00 & 1.00 & (2.88, 2.89) & (3.50, 3.51) & (4.00, 4.01) & (4.13, 4.14) & (2.51, 2.52) & (3.25, 3.27) \\
            L3-70 & 16.00 & 1.00 & (2.88, 2.88) & (3.50, 3.50) & (4.00, 4.00) & (4.13, 4.13) & (2.50, 2.51) & (3.25, 3.26) \\
            \midrule
            G3-1 & 16.00 & 1.00 & (2.88, 2.91) & (3.50, 3.52) & (4.00, 4.02) & (4.13, 4.16) & (2.52, 2.55) & (3.25, 3.32) \\
            G3-4 & 16.00 & 1.00 & (2.88, 2.89) & (3.50, 3.51) & (4.00, 4.01) & (4.13, 4.14) & (2.51, 2.53) & (3.25, 3.28) \\
            G3-12 & 16.00 & 1.00 & (2.88, 2.89) & (3.50, 3.51) & (4.00, 4.01) & (4.13, 4.14) & (2.51, 2.52) & (3.25, 3.27) \\
            G3-27 & 16.00 & 1.00 & (2.88, 2.88) & (3.50, 3.51) & (4.00, 4.01) & (4.13, 4.13) & (2.50, 2.51) & (3.25, 3.27) \\
            \midrule
            Q3-0.6 & 16.00 & 1.00 & (2.88, 2.92) & (3.50, 3.53) & (4.00, 4.03) & (4.13, 4.17) & (2.52, 2.56) & (3.25, 3.33) \\
            Q3-1.7 & 16.00 & 1.00 & (2.88, 2.90) & (3.50, 3.51) & (4.00, 4.01) & (4.13, 4.15) & (2.51, 2.53) & (3.25, 3.29) \\
            Q3-4 & 16.00 & 1.00 & (2.88, 2.89) & (3.50, 3.51) & (4.00, 4.01) & (4.13, 4.14) & (2.51, 2.52) & (3.25, 3.28) \\
            Q3-8 & 16.00 & 1.00 & (2.88, 2.89) & (3.50, 3.51) & (4.00, 4.01) & (4.13, 4.14) & (2.51, 2.52) & (3.25, 3.27) \\
            Q3-14 & 16.00 & 1.00 & (2.88, 2.88) & (3.50, 3.51) & (4.00, 4.01) & (4.13, 4.13) & (2.51, 2.51) & (3.25, 3.27) \\
            \bottomrule
        \end{tabular}
    }
\end{table}

\newpage

\section{LLM Text Generation via Quantized Models}
\begin{table}[H]
\centering
\setlength{\tabcolsep}{4pt}
\renewcommand{\arraystretch}{1.03}
\caption{
Qualitative text generation examples from Llama-2-7B quantized with \name at different effective bit-widths.
All models are evaluated using the same prompt.
}
\label{tab:quantized-generation}
\vspace{-0.5em}
\begin{tabularx}{\linewidth}{@{}p{0.15\linewidth}X@{}}
\toprule
\textbf{Model} & \textbf{Continuation} \\
\midrule

\multicolumn{2}{@{}p{\linewidth}@{}}{
\textbf{Prompt:}
\colorbox{gray!12}{\strut The discovery of artificial intelligence is revolutionary, in that}
}
\\[-0.15em]
\midrule

\textbf{1.00-bit \name} &
it can help solve problems that require a human mind to solve them, or to make a decision.
For example, it can identify and predict human patterns, and identify the appropriate actions
for those who are in critical situations. It can solve problems that are difficult for humans,
such as finding a solution to a puzzle, or solving a problem that requires a human to solve it.
It also can help solve problems using human intelligence, such as making a decision to solve
the problem, solve a problem or evaluate the situation. In addition, artificial intelligence
can help solve problems related to the development of artificial intelligence, such as predicting
\\

\midrule

\textbf{0.80-bit \name} &
it has been shown to be able to solve problems normally understood by humans. Surely, the future
will be the way that our knowledge is improved, and the development of artificial intelligence
will be crucial to future human development. The future is an exciting time for humanity, and our
future will be enhanced by the technologies that are being developed. We will be able to achieve
the goals that our previous generation have set for humanity. In addition to the information that
has been made available in the past, the technology that has been developed is now being used to
help us improve our knowledge of humanity, and to expand our
\\

\midrule

\textbf{0.55-bit \name} &
it opens the way for many new technological advances. With the creation of computer technology
and the development of digital technology, these technologies have resulted in great changes in
the world. With the rapid advance of computer technology, computers are now able to operate on a
much higher level than previously, and computer technology is now in use in every field of science
and technology. Computer technology has also allowed many new technologies and technologies that
are now available to both the industrial and the scientific industries. The development of computer
technology has also enabled many new innovations in technology and technology that are being developed.
These new technologies
\\

\bottomrule
\end{tabularx}
\vspace{-0.75em}
\end{table}

\end{document}